\documentclass[runningheads]{llncs}

\usepackage{eccv}

\usepackage{eccvabbrv}

\usepackage{graphicx}
\usepackage{booktabs}
\usepackage[numbers,sort&compress]{natbib}
\usepackage[accsupp]{axessibility}  % Improves PDF readability for those with disabilities.

\usepackage{hyperref}

\usepackage{orcidlink}
\usepackage{microtype}
\usepackage{graphicx}
\usepackage{subcaption}
\usepackage{booktabs}
\usepackage{hyperref}
\usepackage{amsmath}
\usepackage{amssymb}
\usepackage{mathtools}

\usepackage{amsthm}
\usepackage{empheq}
\usepackage{enumitem}
\usepackage{nicefrac}
\usepackage{enumitem}
\setlist{noitemsep}
\usepackage{caption}
\usepackage{wrapfig}
\usepackage[capitalize,noabbrev]{cleveref}
\RequirePackage{algorithm}
\RequirePackage{algorithmic}
\newcommand{\RETURN}{\textbf{return}} 

\theoremstyle{plain}
\theoremstyle{definition}
\newtheorem{assumption}[theorem]{Assumption}
\theoremstyle{remark}

\begin{document}

% ----------------
% TODO REVIEW: Replace with your title
\title{Spatially Adaptive Noise Injection} 

% TODO REVIEW: If the paper title is too long for the running head, you can set
% an abbreviated paper title here. If not, comment out.
\titlerunning{Spatially Adaptive Noise Injection}

% TODO FINAL: Replace with your author list. 
% Include the authors' OCRID for the camera-ready version, if at all possible.
\author{Frantzeska Lavda\inst{1}\orcidlink{0000-0003-2868-8380} \and
Maciej Falkiewicz\inst{1, 2}\orcidlink{0000-0002-3851-1995} \and
Van Khoa Nguyen\inst{1, 2}\orcidlink{0009-0007-9328-790X} \and
Alexandros Kalousis\inst{1}\orcidlink{0000-0001-6282-0686}}

% TODO FINAL: Replace with an abbreviated list of authors.
\authorrunning{F.~Lavda et al.}
% First names are abbreviated in the running head.
% If there are more than two authors, 'et al.' is used.

% TODO FINAL: Replace with your institution list.
\institute{University of Applied Sciences Western Switzerland, HES-SO Geneva \and
Computer Science Department, University of Geneva
\\
%\url{http://www.springer.com/gp/computer-science/lncs} 
\email{frantzeska.lavda@hesge.ch}}

\maketitle

\begin{abstract}
  Diffusion samplers reverse a learned noising process using either stochastic (DDPM) or deterministic (DDIM) updates, which represent endpoints of a single family controlled by a scalar noise-injection variance that is applied identically at every spatial location. This uniform approach neglects the geometry of natural images: high-curvature regions such as edges and textures, where the denoiser is uncertain, benefit from stochastic correction, whereas smooth regions, where the score is precise, are degraded by injected noise. This work investigates whether each pixel requires stochastic correction at a given timestep and introduces Spatially Adaptive Noise Injection (SANI), a novel sampling framework that dynamically adjusts noise application on a per-pixel basis. SANI integrates a probabilistic gating mechanism with a derived spatially adaptive variance, ensuring that noise is injected precisely where needed to refine complex features while preserving well-formed structures. Experimental results and decoupling ablations demonstrate that SANI consistently improves Fréchet Inception Distance (FID) over the vanilla DDPM and DDIM endpoint samplers across diverse sampling timesteps, while remaining competitive with variance-learning baselines, highlighting the importance of spatial adaptivity in diffusion sampling.
  \keywords{Diffusion models \and Sampling \and Spatial Adaptivity}    
  %\and Fisher information \and Uncertainty 
\end{abstract}
%% ====================================================================
\section{Introduction}
\label{sec:intro}
%% ====================================================================
Diffusion Probabilistic Models (DPMs)~\citep{ho20, song2021scorebased} have become a leading approach in generative modeling, achieving state-of-the-art sample quality across images, audio, and molecular design. These models learn to reverse a gradual noising process by training a denoising network, resulting in a tractable variational bound and stable training. During inference, two primary families of samplers are typically used. Stochastic samplers~\citep{ho20, Kingma21_vdm} inject noise at each step, emulating Langevin dynamics to correct approximation errors. In contrast, deterministic samplers~\citep{song2021ddim, lu2022dpmsolver, ludpm++} follow a noise-free probability flow, ordinary differential equation (ODE) trajectory, which accelerates generation but limits error correction. \cite{song2021ddim} demonstrated that these two regimes present endpoints of an unified family, interpolated by a scalar parameter~$\eta$ that governs the noise injection variance at each step. However, a key limitation of both approaches is that the noise injection variance~$\sigma_t^2$ is applied uniformly across all spatial location, discarding the geometric heterogeneity inherent in high-dimensional data manifolds.

High-frequency regions, such as edges, textures, and fine details, correspond to areas of high curvature on the data manifold, where the denoiser's gradient changes rapidly and estimation error tends to be higher. These regions benefit from stochastic correction~\citep{song2024high_frequency}. In contrast, smooth regions such as uniform backgrounds are situated on locally flat portions of the manifold, where the score estimate is accurate and additional noise injection reduces fidelity. Ideally, a sampler would adapt its behavior to the local geometry: employing deterministic dynamics in regions where the model is confident and stochastic dynamics where uncertainty is greater, within the same image and timestep.

This consideration leads to a central question: \emph{how can one determine, at each pixel and timestep, whether stochastic correction is necessary?} We address this by deriving the Fisher Information of the denoising distribution, which provides a closed-form per-pixel measure of uncertainty without requiring additional supervision. The analysis begins with the observation that the denoising distribution $p(\mathbf{x}_0 \mid \mathbf{x}_t)$ constitutes an exponential family parametrized by the noisy observation~$\mathbf{x}_t$. This structure enables the derivation of the Fisher Information Matrix (FIM) of the posterior with respect to~$\mathbf{x}_t$ in closed form. We show that this FIM is proportional to both the Jacobian of the optimal denoiser and the posterior covariance. The trace of this relationship establishes a direct connection between the model's total sensitivity and reconstruction uncertainty. Regions exhibiting large Fisher Information correspond to areas where the model is uncertain about the underlying signal, indicating that stochastic correction is most beneficial in these locations.

Building on this insight, we introduce Spatially Adaptive Noise Injection (SANI), a sampling framework that modulates stochasticity on a per-pixel basis. We derive a closed-form gating function $g_{t,i}\in[0,1]$ with a clear probabilistic interpretation, representing the probability that the local denoising error at pixel~$i$ exceeds a specified tolerance threshold (Prop.~\ref{prop:gating}). This gating mechanism adjusts noise injection at each pixel, interpolating  per pixel between between ODE dynamics ($g_{t,i}=0$, deterministic) and stochastic DDPM-style update ($g_{t,i}=1$) within a single image and timestep. As a result, stochasticity increases in uncertain pixels while deterministic per-pixel updates are enforced in confident regions. The gating function is computed from the per-pixel posterior variance, which is directly related to the model's sensitivity. SANI operates at inference time and is compatible with any pre-trained diffusion model.

In summary, the main contributions are as follows: \textbf{(i) Theoretical Link.} We show that the Fisher Information Matrix of the denoising distribution is proportional to the denoiser Jacobian and posterior covariance (Prop.~\ref{prop:fim}), establishing a computable relationship between local sensitivity and reconstruction uncertainty. \textbf{(ii) Probabilistic gating.} We derive a closed-form, per-pixel gating function $g_{t,i}$ representing the probability that the local denoising error exceeds a specific tolerance (Eq.~\ref{eq:gating_closed}), and show that its spatial structure is robust to the choice of the residual-distribution model. \textbf{(iii) Spatially adaptive sampler.} We present SANI (Alg.~\ref{alg:sani}), which applies the deterministic (DDIM) and stochastic (DDPM-style) update rules on a per-pixel basis. \textbf{(iv) Empirical Validation.} We show that SANI consistently improves over vanilla DDPM and DDIM across all datasets and step counts, and is competitive with variance-learning DDPM-family samplers, with the largest gains in the low-step regime, while its gating maps track image geometry without requiring edge supervision.
%(the underlying Jacobian--posterior-covariance identity follows from classical results on conditional mean estimation in Gaussian noise~\citep{dytso2020general, manor2024posterior})
%% ====================================================================
\section{Background}
\label{sec:background}
%% ====================================================================
\subsection{Denoising Diffusion Probabilistic Models}
\label{sec:ddpm}
Diffusion models define a forward noising process and a learned reverse denoising process. Given data $\mathbf{x}_0 \sim q(\mathbf{x}_0)$, the forward process adds Gaussian noise according to a schedule $\beta_1, \ldots, \beta_T \in (0,1)$:
\begin{equation}\label{eq:forward_step}
    q(\mathbf{x}_t \mid \mathbf{x}_{t-1}) = \mathcal{N}\!\big(\mathbf{x}_t;\, \sqrt{1-\beta_t}\,\mathbf{x}_{t-1},\, \beta_t \mathbf{I}\big).
\end{equation}
Defining $\alpha_t = 1-\beta_t$ and $\bar\alpha_t = \prod_{s=1}^{t}\alpha_s$, the marginal at any timestep is $q(\mathbf{x}_t \mid \mathbf{x}_0) = \mathcal{N}(\mathbf{x}_t;\, \sqrt{\bar\alpha_t}\,\mathbf{x}_0,\, (1-\bar\alpha_t)\mathbf{I})$, so that $\mathbf{x}_t = \sqrt{\bar\alpha_t}\,\mathbf{x}_0 + \sqrt{1-\bar\alpha_t}\,\boldsymbol{\epsilon}$ with $\boldsymbol{\epsilon}\sim\mathcal{N}(\mathbf{0},\mathbf{I})$.
The reverse process learns to invert this chain by parameterizing $p_\theta(\mathbf{x}_{t-1}\mid\mathbf{x}_t) = \mathcal{N}(\mathbf{x}_{t-1};\,\boldsymbol{\mu}_\theta(\mathbf{x}_t,t),\,\boldsymbol{\Sigma}_\theta(\mathbf{x}_t,t))$.
Training maximizes the ELBO on $\log p_\theta(\mathbf{x}_0)$ via a noise prediction network $\boldsymbol{\epsilon}_\theta(\mathbf{x}_t,t)$ with the simplified objective $\mathcal{L}_{\mathrm{simple}} = \mathbb{E}_{t,\mathbf{x}_0,\boldsymbol{\epsilon}}[\|\boldsymbol{\epsilon} - \boldsymbol{\epsilon}_\theta(\mathbf{x}_t,t)\|^2]$.
This objective provides a learning signal only for the mean $\boldsymbol{\mu}_\theta$, the covariance $\boldsymbol{\Sigma}_\theta$ must be specified separately. Ho \etal~\cite{ho20} set it to one of two scalar schedules, $\sigma_t^2 = \beta_t$ or $\sigma_t^2 = \tilde\beta_t \coloneqq \frac{1-\bar\alpha_{t-1}}{1-\bar\alpha_t}\beta_t$, both spatially uniform.
\subsection{Generalized Sampling: Unifying DDPM and DDIM}
\label{sec:gen_update}
The noise prediction network implicitly defines a clean data estimate via Tweedie's formula~\citep{efron2011tweedie}: $\hat{\mathbf{x}}_0(\mathbf{x}_t,t) = \frac{1}{\sqrt{\bar\alpha_t}}(\mathbf{x}_t - \sqrt{1-\bar\alpha_t}\,\boldsymbol{\epsilon}_\theta(\mathbf{x}_t,t))$.
Song \etal~\cite{song2021ddim} showed that a family of non-Markovian reverse processes, parameterized by the noise injection variance $\sigma_t^2$, shares the same marginals as the forward process. The generalized update rule is:
\begin{equation}\label{eq:general_update}
    \mathbf{x}_{t-1} = \underbrace{\sqrt{\bar\alpha_{t-1}}\,\hat{\mathbf{x}}_0}_{\text{denoised mean}} + \underbrace{\sqrt{1-\bar\alpha_{t-1}-\sigma_t^2}\,\boldsymbol{\epsilon}_\theta(\mathbf{x}_t,t)}_{\text{direction correction}} + \underbrace{\sigma_t\,\mathbf{z}}_{\text{noise}},
\end{equation}
where $\mathbf{z}\sim\mathcal{N}(\mathbf{0},\mathbf{I})$ for $t>1$ and $\mathbf{z}=\mathbf{0}$ for $t=1$.
Setting $\sigma_t^2=0$ yields DDIM (deterministic), setting $\sigma_t^2=\tilde\beta_t$ recovers DDPM (stochastic).
\paragraph{\textbf{The spatial uniformity bottleneck.}}
In all of these methods, $\sigma_t^2$ is a scalar applied identically to every spatial location. This ignores the heterogeneous geometry of natural images: edges and textures lie on curved manifold regions where stochastic correction is beneficial, while smooth regions lie on flat portions where noise degrades fidelity. We argue that this trade-off should not be global but spatially adaptive.
% ===========================================================================
%  SECTION 3:  SPATIALLY ADAPTIVE NOISE INJECTION
%  
% ===========================================================================
%
\section{Spatially Adaptive Noise Injection}
\label{sec:method}
This section presents the proposed method and the theoretical framework. We first show that the local geometry of the diffusion process, as quantified by the Fisher Information, is directly related to posterior reconstruction uncertainty. This relationship is then leveraged to develop a sampling algorithm. The general concept involves deriving a gating function $g_{t,i} \in [0, 1]$ with clear probabilistic interpretation, which is used to modulate the per-pixel noise injection variance. This approach enables each pixel to follow its geometrically optimal trajectory between deterministic and stochastic dynamics.
%
% ───────────────────────────────────────────────────────────────────────
\subsection{Fisher Information, Local Sensitivity and Model's Uncertainty}
\label{sec:method:uncertainty}
The denoising distribution $p(\mathbf{x}_0\mid\mathbf{x}_t)$ for any fixed timestep~$t$, indexed by the noisy observation~$\mathbf{x}_t$, belongs to an exponential family. The  exponential-family structure guarantees that the posterior mean and covariance are related to the log-partition function by standard identities, and it enables the application of classical results from information geometry, in particular, the Fisher Information Matrix.

\begin{proposition}[FIM-Jacobian-Posterior Covariance Correspondence]
\label{prop:fim}
Let\, $\hat{\mathbf{x}}_0(\mathbf{x}_t)
  = \mathbb{E}[\mathbf{x}_0 \mid \mathbf{x}_t]$
denote the optimal (MMSE) denoiser. The Fisher Information Matrix (FIM) of the denoising distribution with respect to~$\mathbf{x}_t$ satisfies
\begin{equation}\label{eq:fim_jacobian}
  \boxed{
  \mathcal{I}(\mathbf{x}_t)
  \;=\;
  \frac{\sqrt{\bar\alpha_t}}{1 - \bar\alpha_t}\;
  \mathbf{J}_{\hat{\mathbf{x}}_0}(\mathbf{x}_t) 
  \;=\;
  \frac{\bar\alpha_t}{(1 - \bar\alpha_t)^2}\;
  \mathrm{Cov}[\mathbf{x}_0 \mid \mathbf{x}_t],
  }
\end{equation}
where\,
$\mathbf{J}_{\hat{\mathbf{x}}_0}
  = \frac{\partial \hat{\mathbf{x}}_0(\mathbf{x}_t)}{\partial \mathbf{x}_t}
  \in \mathbb{R}^{D \times D}$ is the Jacobian of the optimal denoiser.
\end{proposition}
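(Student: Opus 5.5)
Since $q(\mathbf{x}_t\mid\mathbf{x}_0)=\mathcal{N}(\sqrt{\bar\alpha_t}\,\mathbf{x}_0,(1-\bar\alpha_t)\mathbf{I})$, Bayes' rule gives
\begin{equation*}
p(\mathbf{x}_0\mid\mathbf{x}_t)\;\propto\; q(\mathbf{x}_0)\exp\!\Big(-\tfrac{\bar\alpha_t\|\mathbf{x}_0\|^2}{2(1-\bar\alpha_t)}\Big)\exp\!\big(\boldsymbol{\theta}^\top \mathbf{x}_0\big),\qquad \boldsymbol{\theta}=\tfrac{\sqrt{\bar\alpha_t}}{1-\bar\alpha_t}\,\mathbf{x}_t .
\end{equation*}
The posterior is therefore a natural exponential family with base measure $h(\mathbf{x}_0)=q(\mathbf{x}_0)\exp(-\bar\alpha_t\|\mathbf{x}_0\|^2/(2(1-\bar\alpha_t)))$, sufficient statistic $\mathbf{x}_0$, natural parameter $\boldsymbol{\theta}$ linear in $\mathbf{x}_t$, and log-partition $A(\boldsymbol{\theta})=\log\int h(\mathbf{x}_0)e^{\boldsymbol{\theta}^\top\mathbf{x}_0}\,d\mathbf{x}_0$. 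The factor $e^{-\|\mathbf{x}_t\|^2/(2(1-\bar\alpha_t))}$ from the likelihood does not depend on $\mathbf{x}_0$, so it cancels in normalisation. The plan is to use the standard exponential-family identities throughout, and the main thing to settle first is regularity: we need $A$ to be finite and twice differentiable, with differentiation under the integral permitted. Because the Gaussian factor in $h$ dominates $e^{\boldsymbol{\theta}^\top\mathbf{x}_0}$ for every $\boldsymbol{\theta}$ whenever $q$ is a probability measure, this holds on all of $\mathbb{R}^D$. I will state it as a mild assumption and not dwell on it.

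\textbf{Step 1 (moments).} The identities are $\nabla_{\boldsymbol{\theta}}A=\mathbb{E}[\mathbf{x}_0\mid\mathbf{x}_t]=\hat{\mathbf{x}}_0$ and $\nabla^2_{\boldsymbol{\theta}}A=\mathrm{Cov}[\mathbf{x}_0\mid\mathbf{x}_t]$.

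\textbf{Step 2 (Fisher information in $\boldsymbol{\theta}$).} The score is $\nabla_{\boldsymbol{\theta}}\log p=\mathbf{x}_0-\nabla A$, so the Fisher information with respect to $\boldsymbol{\theta}$ is $\mathbb{E}[(\mathbf{x}_0-\hat{\mathbf{x}}_0)(\mathbf{x}_0-\hat{\mathbf{x}}_0)^\top]=\nabla^2A=\mathrm{Cov}[\mathbf{x}_0\mid\mathbf{x}_t]$.

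\textbf{Step 3 (reparametrise to $\mathbf{x}_t$).} The map $\mathbf{x}_t\mapsto\boldsymbol{\theta}$ has constant Jacobian $c\,\mathbf{I}$ with $c=\sqrt{\bar\alpha_t}/(1-\bar\alpha_t)$. The score therefore picks up a factor $c$, and
\begin{equation*}
\mathcal{I}(\mathbf{x}_t)=c^2\,\mathrm{Cov}[\mathbf{x}_0\mid\mathbf{x}_t]=\frac{\bar\alpha_t}{(1-\bar\alpha_t)^2}\,\mathrm{Cov}[\mathbf{x}_0\mid\mathbf{x}_t].
\end{equation*}
This is the second equality in \eqref{eq:fim_jacobian}.

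\textbf{Step 4 (Jacobian link).} By the chain rule, $\mathbf{J}_{\hat{\mathbf{x}}_0}=\partial_{\mathbf{x}_t}\nabla_{\boldsymbol{\theta}}A=\nabla^2_{\boldsymbol{\theta}}A\cdot c\mathbf{I}=c\,\mathrm{Cov}[\mathbf{x}_0\mid\mathbf{x}_t]$. This is the Tweedie second-order identity, and it also follows from differentiating $\hat{\mathbf{x}}_0=(\mathbf{x}_t+(1-\bar\alpha_t)\nabla\log p_t)/\sqrt{\bar\alpha_t}$. Substituting $\mathrm{Cov}=\mathbf{J}/c$ into Step 3 gives $\mathcal{I}=c\,\mathbf{J}_{\hat{\mathbf{x}}_0}=\frac{\sqrt{\bar\alpha_t}}{1-\bar\alpha_t}\mathbf{J}_{\hat{\mathbf{x}}_0}$, which is the first equality. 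As a byproduct this shows $\mathbf{J}_{\hat{\mathbf{x}}_0}$ is symmetric positive semidefinite, so it is consistent to equate it with a covariance.

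The step I expect to need the most care is Step 2, specifically what "Fisher information of the denoising distribution with respect to $\mathbf{x}_t$" means. I will read it as the Fisher information of the parametric family $\{p(\cdot\mid\mathbf{x}_t)\}$ indexed by $\mathbf{x}_t$, with the expectation taken over $\mathbf{x}_0\sim p(\cdot\mid\mathbf{x}_t)$. I will also check the equivalent form $-\mathbb{E}[\nabla^2_{\mathbf{x}_t}\log p(\mathbf{x}_0\mid\mathbf{x}_t)]=c^2\nabla^2A$. This form is immediate because the $\mathbf{x}_0$-dependent part of $\log p$ is linear in $\mathbf{x}_t$, which makes the result independent of which of the two definitions is adopted.
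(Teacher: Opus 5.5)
Your proof is correct and follows the same route as the paper. Both write the posterior as an exponential family whose natural parameter is linear in $\mathbf{x}_t$, identify the Fisher information in that parameter with the covariance of the sufficient statistic, pull it back through the constant Jacobian $\frac{\sqrt{\bar\alpha_t}}{1-\bar\alpha_t}\mathbf{I}$, and convert via the Jacobian--covariance (second-order Tweedie) identity. The differences are minor and slightly in your favour for self-containedness: you absorb the $\|\mathbf{x}_0\|^2$ term into the base measure, so no block extraction from $\mathrm{Cov}[(\mathbf{x}_0,\|\mathbf{x}_0\|^2)\mid\mathbf{x}_t]$ is needed, and you derive $\mathbf{J}_{\hat{\mathbf{x}}_0}=c\,\mathrm{Cov}[\mathbf{x}_0\mid\mathbf{x}_t]$ from $\nabla^2_{\boldsymbol{\theta}}A$ rather than citing Tweedie.
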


The underlying Jacobian-posterior-covariance relation in Eq.~\eqref{eq:fim_jacobian} is closely related to classical results on conditional mean estimation in Gaussian noise~\citep{dytso2020general}, and per-pixel posterior moments computed from denoiser derivatives~\citep{manor2024posterior}. Proposition~\ref{prop:fim} is the information-geometric reading of these identities, identifying the shared quantity as the Fisher Information of the denoising distribution, which underlies the gating construction developed next.

The FIM quantifies the sensitivity of a distribution to perturbations in its parameters. In the context of the denoising distribution, the primary interest lies in sensitivity with respect to the observation~$\mathbf{x}_t$, specifically, the extent to which the posterior belief $\mathbf{x}_0$ about  changes when $\mathbf{x}_t$ is perturbed. Proposition~\ref{prop:fim} characterizes model sensitivity and uncertainty via the posterior covariance and the denoiser Jacobian. Regions exhibiting high sensitivity correspond to areas where the model is uncertain about the underlying clean signal. Furthermore, local sensitivity, as measured by the total Fisher Information, is proportional to reconstruction uncertainty (mean squared error, MSE), $\mathrm{Tr}(\mathcal{I})=\bar\alpha_t(1-\bar\alpha_t)^{-2},\mathrm{MSE}(\mathbf{x}_t)$. Areas with large $\mathrm{Tr}(\mathcal{I})$ indicate high model uncertainty, which are precisely the regions where stochastic correction is most beneficial.

Via Tweedie's formula, the denoiser Jacobian decomposes as $\mathbf{J}_{\hat{\mathbf{x}}_0} = \frac{1}{\sqrt{\bar\alpha_t}}(\mathbf{I}_D + (1-\bar\alpha_t)\mathbf{H}_t(\mathbf{x}_t))$, where $\mathbf{H}_t(\mathbf{x}_t) \coloneqq \nabla_{\mathbf{x}_t}\mathbf{s}_\theta(\mathbf{x}_t,t)$ approximates the Hessian of the log-marginal density.
Substituting into~ Eq.\eqref{eq:fim_jacobian} yields the posterior in terms of the log-marginal density Hessian:
\begin{equation}\label{eq:cov_hessian}
    \mathrm{Cov}[\mathbf{x}_0\mid\mathbf{x}_t] = \frac{1-\bar\alpha_t}{\bar\alpha_t}\big(\mathbf{I}_D + (1-\bar\alpha_t)\mathbf{H}_t(\mathbf{x}_t)\big).
\end{equation}
Extracting the diagonal gives the per-pixel posterior variance:
\begin{equation}\label{eq:v_i}
  v_i(\mathbf{x}_t,t)
  \;\coloneqq\;\bigl[\mathrm{Cov}[\mathbf{x}_0\mid\mathbf{x}_t]\bigr]_{ii}
  \;=\;\frac{1-\bar\alpha_t}{\bar\alpha_t}\bigl(1+(1-\bar\alpha_t)[\mathbf{H}_t]_{ii}\bigr).
\end{equation}
The sign of $[\mathbf{H}_t]_{ii}$ determines the local uncertainty regime. Strong concavity ($[\mathbf{H}_t]_{ii} \ll 0$) indicates that the marginal distribution is sharply peaked along coordinate $i$, resulting in a small $v_i$ and thus high confidence. In contrast, positive curvature is indicative of high uncertainty.

\paragraph{\textbf{Validity and clamping.}}
The right-hand side of Eq.~\ref{eq:cov_hessian} defines a valid covariance matrix if and only if all eigenvalues satisfy $\lambda_{\min}(\mathbf{H}_t^\star) \geq -(1-\bar\alpha_t)^{-1}$, a condition met by the true log-marginal Hessian $\mathbf{H}_t^\star$ by construction. Since the true Hessian $\mathbf{H}_t^\star$ is not accessible, it is approximated using either the Hutchinson estimator or a learned Hessian network, i.e., $\mathbf{H}_t(\mathbf{x}_t) \approx \mathbf{H}_t^\star(\mathbf{x}_t)$. The approximate Hessian may not satisfy the eigenvalue condition, which can result in the right-hand side of Eq.~\ref{eq:cov_hessian} failing to be positive semi-definite (PSD). Such violations, however, only occur in regions where the true variance $v_i^*$ is already close to zero (see App.~\cref{app:psd}). In these cases, $v_i$ is clamped as $v_i \gets \max(v_i, \epsilon)$ with $\epsilon=10^{-6}$, causing the affected pixels to default to deterministic dynamics.
%
% ───────────────────────────────────────────────────────────────────────
\subsection{Probabilistic Gating Function}
\label{sec:method:gating}
The per-pixel variance $v_i(\mathbf{x}_t, t)$ is mapped to a gating function $g_{t,i} \in [0,1]$ that regulates local noise injection.
\begin{definition}[Gating Function]
\label{def:gating}
The gate at pixel~$i$ and timestep~$t$ is defined as the probability that the squared denoising error exceeds a tolerance~$\tau$:
\begin{equation}\label{eq:gating_def}
  g_{t,i}\;\coloneqq\;\mathbb{P}\!\bigl((X_{0,i}-\hat{x}_{0,i})^2>\tau\,\big|\,\mathbf{X}_t=\mathbf{x}_t\bigr).
\end{equation}
\end{definition}
Assuming (A1) that the denoiser approximates the minimum mean squared error (MMSE) estimator, $\hat{x}_{0,i} \approx \mathbb{E}[X_{0,i} \mid \mathbf{x}_t]$, as justified by the squared-error training objective, and (A2) that the per-pixel posterior is approximately Gaussian (the maximum-entropy distribution given the first two moments from the score network), the residual $R_i = X_{0,i} - \hat{x}_{0,i} \mid \mathbf{x}_t$ is a zero-mean Gaussian with variance $v_i$. Standardization under these assumptions yields a closed-form expression for the gating function.
\begin{proposition}[Closed-Form Gating]
\label{prop:gating}
Under Assumptions~(A1)-(A2), the gating function yields the closed form
\begin{equation}\label{eq:gating_closed}
  g_{t,i}\;=\;2\Bigl(1-\Phi\bigl(\sqrt{\tau/v_i(\mathbf{x}_t,t)}\bigr)\Bigr),
\end{equation}
where $\Phi(\cdot)$ is the standard normal CDF.
\end{proposition}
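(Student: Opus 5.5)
The plan is to compute the probability in Definition~\ref{def:gating} directly from the distribution of the residual $R_i = X_{0,i}-\hat{x}_{0,i}$ conditioned on $\mathbf{X}_t=\mathbf{x}_t$.

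First, pin down the mean. By (A1), $\hat{x}_{0,i}$ equals $\mathbb{E}[X_{0,i}\mid\mathbf{x}_t]$, so $\mathbb{E}[R_i\mid\mathbf{x}_t]=0$.

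Next, pin down the variance. $\hat{x}_{0,i}$ is a deterministic function of $\mathbf{x}_t$, so conditioning leaves it fixed. Hence $\mathrm{Var}[R_i\mid\mathbf{x}_t]=\mathrm{Var}[X_{0,i}\mid\mathbf{x}_t]=[\mathrm{Cov}[\mathbf{x}_0\mid\mathbf{x}_t]]_{ii}$. By Eq.~\eqref{eq:v_i} (ultimately Proposition~\ref{prop:fim}), this is $v_i(\mathbf{x}_t,t)$.

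Then invoke (A2): $R_i\mid\mathbf{x}_t\sim\mathcal{N}(0,v_i)$. We assume $v_i>0$, which the clamping $v_i\ge\epsilon$ guarantees. Standardize with $Z=R_i/\sqrt{v_i}\sim\mathcal{N}(0,1)$.

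Now rewrite the event. For $\tau\ge 0$, the event $\{R_i^2>\tau\}$ is the same as $\{|Z|>\sqrt{\tau/v_i}\}$. Set $c=\sqrt{\tau/v_i}$. The two tails are disjoint, so
\[
  g_{t,i}=\mathbb{P}(Z>c)+\mathbb{P}(Z<-c).
\]
By symmetry of the standard normal, $\mathbb{P}(Z<-c)=\Phi(-c)=1-\Phi(c)$. The two tails therefore contribute equally, and
\[
  g_{t,i}=2\bigl(1-\Phi(c)\bigr),
\]
which is Eq.~\eqref{eq:gating_closed}.

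It remains to check that the gate has the right range. Since $c\ge 0$, we have $\Phi(c)\in[1/2,1)$, so $g_{t,i}\in(0,1]$. It equals $1$ at $\tau=0$ and decreases monotonically as $v_i$ decreases. This matches the intended interpretation: pixels with small posterior variance get a gate near $0$, i.e.\ deterministic dynamics.

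The only delicate step is identifying the conditional variance of the residual with $v_i$. This needs $\hat{x}_0$ to be measurable with respect to $\mathbf{x}_t$, and it uses the exact MMSE identity. In practice (A1) is only approximate, so the result should be read as exact under the idealized assumptions. One could add a remark that a bias $b_i$ in the denoiser would yield the noncentral expression $2-\Phi(c-b_i/\sqrt{v_i})-\Phi(c+b_i/\sqrt{v_i})$, which reduces to Eq.~\eqref{eq:gating_closed} when $b_i=0$. Everything else is routine Gaussian tail computation.
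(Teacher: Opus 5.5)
Your proof is correct and follows essentially the same route as the paper: zero mean and variance $v_i$ for the residual from (A1), Gaussianity from (A2) with the clamp guaranteeing $v_i>0$, standardization to $Z=R_i/\sqrt{v_i}$, and the two-tail symmetry $\mathbb{P}(|Z|>a)=2(1-\Phi(a))$. Your range/monotonicity check and the noncentral formula for a biased denoiser are both correct and go beyond what the paper includes.
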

The closed-form expression depends on $v_i$ solely through the ratio $\tau/v_i$, meaning that $\tau$ determines the threshold at which a pixel transitions from the stochastic regime ($v_i \gg \tau$, $g_{t,i} \to 1$) to the deterministic regime ($v_i \ll \tau$, $g_{t,i} \to 0$). The clamping operation $v_i \gets \max(v_i, \epsilon)$ ensures the argument remains real-valued. Assumption (A2) holds exactly in the high signal-to-noise ratio (SNR) limit (see App.~\cref{app:gaussian_concentration}), and empirical results in \cref{sec:exp_residual} indicate that the spatial gating pattern is robust to the choice of residual distribution. Re-derivation of the gate under Laplace residuals (App.~\cref{app:alt_gates}) affects only the sharpness of the stochastic-to-deterministic transition.
%
% ───────────────────────────────────────────────────────────────────────
\subsection{KL-Optimal Per-Pixel Noise Injection}
\label{sec:method:variance}
The gating variable $g_{t,i}$ determines whether a pixel receives stochastic correction. Conventional samplers employ a spatially uniform scale $\sigma_t^2\in\{\tilde\beta_t,\beta_t\}$, which does not account for local reconstruction quality or the model's prediction error. Instead, the proposed approach derives the scale that minimizes the expected Kullback-Leibler (KL) divergence between the true reverse transition $q(x_{t-1,i} \mid \mathbf{x}_t, x_{0,i})$ and the model transition $p_\theta(x_{t-1,i} \mid \mathbf{x}_t)$, with the model mean fixed at the Tweedie estimate $\hat\mu_{t,i}$ (see App.~\cref{app:kl_derivation}).
%We now decide how much. We answer this by minimizing the expected KL divergence between the true reverse transition $q(x_{t-1,i}\mid\mathbf{x}t,x{0,i})$ and the model transition $p_\theta(x_{t-1,i}\mid\mathbf{x}t)$ with the model mean fixed at the Tweedie estimate~$\hat\mu{t,i}$ (\cref{app:kl_derivation}).
%
\begin{proposition}[KL-Optimal Per-Pixel Variance]
\label{prop:kl_opt}
With the model mean fixed at~$\hat\mu_{t,i}$, the per-pixel variance that minimizes the expected KL to $q(x_{t-1,i}\mid\mathbf{x}_t,x_{0,i})$ is
\begin{equation}\label{eq:gamma_opt}
  \gamma^*_{t,i}\;=\;\tilde\beta_t+c_t^2\,v_i(\mathbf{x}_t,t),
  \qquad
  c_t\;\coloneqq\;\frac{\sqrt{\bar\alpha_{t-1}}\,\beta_t}{1-\bar\alpha_t}.
\end{equation}
\end{proposition}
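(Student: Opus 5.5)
The plan is to reduce the problem to a one-dimensional Gaussian KL computation per pixel, then optimize over the scalar model variance. First, I will write the true DDPM reverse transition explicitly. Given $\mathbf{x}_t$ and $x_{0,i}$, the posterior $q(x_{t-1,i}\mid \mathbf{x}_t, x_{0,i})$ is Gaussian with variance $\tilde\beta_t$ and mean
\[
\tilde\mu_{t,i} = c_t\, x_{0,i} + \frac{\sqrt{\alpha_t}(1-\bar\alpha_{t-1})}{1-\bar\alpha_t}\, x_{t,i},
\qquad c_t = \frac{\sqrt{\bar\alpha_{t-1}}\,\beta_t}{1-\bar\alpha_t}.
\]
This is the standard forward-posterior formula, and it is where $c_t$ enters. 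The model transition is $\mathcal{N}(\hat\mu_{t,i}, \gamma)$, with $\hat\mu_{t,i}$ given by the same expression but with $x_{0,i}$ replaced by the Tweedie estimate $\hat x_{0,i}$. The mean mismatch is therefore $\tilde\mu_{t,i}-\hat\mu_{t,i} = c_t\,(x_{0,i}-\hat x_{0,i}) = c_t R_i$.

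Second, I will use the closed-form KL divergence between two univariate Gaussians:
\[
\mathrm{KL} = \tfrac12\Bigl(\log\frac{\gamma}{\tilde\beta_t} + \frac{\tilde\beta_t + c_t^2 R_i^2}{\gamma} - 1\Bigr).
\]
I will then take the expectation over $x_{0,i}\sim p(x_{0,i}\mid\mathbf{x}_t)$. Only the $R_i^2$ term is random. Under (A1), $\hat x_{0,i}$ is the posterior mean, so $\mathbb{E}[R_i^2\mid\mathbf{x}_t] = v_i(\mathbf{x}_t,t)$, the diagonal entry of the posterior covariance in Eq.~\eqref{eq:v_i}. This gives
\[
\mathbb{E}[\mathrm{KL}] = \tfrac12\Bigl(\log\gamma + \frac{\tilde\beta_t + c_t^2 v_i}{\gamma}\Bigr) + \text{const}.
\]
No Gaussianity assumption (A2) is needed here, only the second moment.

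Third, I will minimize over $\gamma>0$. The objective has the form $\log\gamma + A/\gamma$ with $A=\tilde\beta_t + c_t^2 v_i > 0$. Its derivative is $1/\gamma - A/\gamma^2$, which vanishes uniquely at $\gamma = A$. The second derivative there is $1/A^2>0$, and the objective tends to $+\infty$ at both $\gamma\to 0^+$ and $\gamma\to\infty$, so $\gamma=A$ is the global minimizer. This yields $\gamma^*_{t,i} = \tilde\beta_t + c_t^2 v_i$.

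The main obstacle is bookkeeping rather than analysis. I must verify that the coefficient of $x_{0}$ in the forward posterior mean is exactly $c_t$ as defined, and that the $x_t$-dependent parts of $\tilde\mu$ and $\hat\mu$ cancel. This cancellation requires the model mean to be the plug-in DDPM posterior mean, i.e.\ the "Tweedie estimate" mean, rather than the DDIM-form mean of Eq.~\eqref{eq:general_update}. I will state this convention explicitly.

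I will also justify treating each pixel independently. The KL of a product of marginals splits into per-pixel terms because both the true posterior $q(\mathbf{x}_{t-1}\mid\mathbf{x}_t,\mathbf{x}_0)$ and the model with diagonal covariance factorize across pixels. Hence the diagonal $v_i$ alone suffices, and no off-diagonal covariance terms appear.
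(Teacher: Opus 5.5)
Your proposal is correct and follows essentially the same route as the paper's appendix proof. That proof decomposes the mean as $\tilde\mu_{t,i}=\hat\mu_{t,i}+c_t e_i$ with $\mathbb{E}[e_i^2\mid\mathbf{x}_t]=v_i$ under (A1), averages the closed-form univariate Gaussian KL over $x_{0,i}\mid\mathbf{x}_t$, and solves the first-order condition in the variance, and your extra remarks are sound refinements of its local second-derivative check: (A2) is not needed, the minimizer is global because the objective diverges at both ends, and $\hat\mu_{t,i}$ must be the plug-in DDPM posterior mean rather than the $\Sigma$-dependent DDIM-form mean.
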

The optimal variance has two terms with structurally different roles. This distinction drives how we allocate stochasticity in space. The term~$\tilde\beta_t$ is the variance of the true reverse posterior, the irreducible stochasticity of the reverse SDE, present even when the denoiser is perfect ($v_i=0$). The term~$c_t^2 v_i$, by contrast, equals the expected squared error of the Tweedie mean (App.~\cref{app:coupled}), 
\begin{equation}\label{eq:mean_err}
  \mathbb{E}\!\bigl[(\tilde\mu_{t,i}-\hat\mu_{t,i})^2\,\big|\,\mathbf{x}_t\bigr]
  \;=\;c_t^2\,v_i.
\end{equation}
Since SANI substitutes $\hat\mu_{t,i}$ for the unknown true posterior mean, this variance is required for the transition to remain calibrated to the true posterior. It is determined by the mean estimate and is not a tunable parameter.
This asymmetry motivates our spatial allocation strategy, the calibration term $c_t^2 v_i$ is applied at every pixel, while only the exploratory term $\tilde\beta_t$ is gated.
\begin{equation}\label{eq:sigma_split}
  \Sigma_{t,i}^2
  \;=\;\underbrace{c_t^2\,v_i\vphantom{\tilde\beta_t}}_{\text{mean-error correction}}
  \;+\;\underbrace{g_{t,i}\,\tilde\beta_t}_{\text{gated exploration}}.
\end{equation}
Gating only $\tilde\beta_t$ ensures that mean-error correction is preserved at pixels where the substituted Tweedie mean is least reliable. For confident pixels, both $g_{t,i}$ and $v_i$ are small, resulting in $\Sigma_{t,i}^2 \to 0$ and the update reducing to the deterministic DDIM step. In contrast, for uncertain pixels, $g_{t,i} \to 1$ and $\Sigma_{t,i}^2 \to \gamma_{t,i}$, corresponding to the KL-optimal variance.
\paragraph{\textbf{Scope of the optimality.}}
We emphasize that $\gamma^*_{t,i}$ is KL-optimal for the fully stochastic per-pixel transition with the mean fixed at the Tweedie estimate. The gated variance $\Sigma_{t,i}^2$ coincides with $\gamma^*_{t,i}$ only in the limit $g_{t,i}=1$. For $g_{t,i}<1$, and after the admissibility clip (App.~\cref{app:admissibility}), it departs from the KL-optimal value in exchange for deterministic updates at confident pixels. $\gamma^*_{t,i}$ therefore serves as the stochastic endpoint of the interpolation rather than a global optimality guarantee for SANI.
%
% ───────────────────────────────────────────────────────────────────────
%
\subsection{Spatially Adaptive Noise Injection (SANI) Update}
\label{sec:method:update}
By substituting Eq.~\ref{eq:sigma_split} into the generalized reverse step from~\cite{song2021ddim}, the SANI update is obtained.
\begin{proposition}[SANI Update]
\label{prop:update}
Let $\hat{\mathbf{x}}_0=\bar\alpha_t^{-1/2}(\mathbf{x}_t-\sqrt{1-\bar\alpha_t}\,\boldsymbol{\epsilon}_\theta(\mathbf{x}_t,t))$ denote the Tweedie estimate and $\mathbf{z}\sim\mathcal{N}(\mathbf{0},\mathbf{I})$. With $\Sigma_{t,i}^2$ as in Eq.~\ref{eq:sigma_split}, the SANI update at pixel~$i$ is
\begin{equation}\label{eq:sani_update}
  x_{t-1,i}
  \;=\;\sqrt{\bar\alpha_{t-1}}\,\hat{x}_{0,i}
  \;+\;\sqrt{1-\bar\alpha_{t-1}-\Sigma_{t,i}^2}\;\epsilon_{\theta,i}
  \;+\;\Sigma_{t,i}\,z_i,
\end{equation}
for $t>1$, with $\mathbf{z}=\mathbf{0}$ for $t=1$.
\end{proposition}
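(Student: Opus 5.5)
The plan is to obtain Proposition~\ref{prop:update} by substitution into the generalized non-Markovian reverse step of Eq.~\eqref{eq:general_update}, carried out coordinatewise. The structural fact that makes this work is that the family of~\cite{song2021ddim} can be defined with a diagonal covariance: each pixel gets its own $\sigma_{t,i}^2$ in place of a single scalar $\sigma_t^2$. I would therefore revisit the construction of the reverse conditionals,
\[
q_\sigma(x_{t-1,i}\mid \mathbf{x}_t,\mathbf{x}_0)=\mathcal{N}\!\Bigl(\sqrt{\bar\alpha_{t-1}}\,x_{0,i}+\sqrt{1-\bar\alpha_{t-1}-\sigma_{t,i}^2}\;\tfrac{x_{t,i}-\sqrt{\bar\alpha_t}\,x_{0,i}}{\sqrt{1-\bar\alpha_t}},\;\sigma_{t,i}^2\Bigr),
\]
and check that the marginal-preservation argument of~\cite{song2021ddim} goes through pixel by pixel. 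The induction step is the claim that if $\mathbf{x}_t\mid\mathbf{x}_0\sim\mathcal{N}(\sqrt{\bar\alpha_t}\mathbf{x}_0,(1-\bar\alpha_t)\mathbf{I})$, then $\mathbf{x}_{t-1}\mid\mathbf{x}_0\sim\mathcal{N}(\sqrt{\bar\alpha_{t-1}}\mathbf{x}_0,(1-\bar\alpha_{t-1})\mathbf{I})$. This needs only per-coordinate Gaussian algebra, because both the forward marginal and the proposed conditional factorize across pixels. For each coordinate $i$, the mean is $\sqrt{\bar\alpha_{t-1}}x_{0,i}$, and the variance is
\[
(1-\bar\alpha_{t-1}-\sigma_{t,i}^2)+\sigma_{t,i}^2=1-\bar\alpha_{t-1},
\]
regardless of the value of $\sigma_{t,i}^2$.

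Next I would take $\sigma_{t,i}^2=\Sigma_{t,i}^2$ from Eq.~\eqref{eq:sigma_split}. I then replace the unknown $\mathbf{x}_0$ by the Tweedie estimate $\hat{\mathbf{x}}_0$. Under this replacement, $(x_{t,i}-\sqrt{\bar\alpha_t}\hat{x}_{0,i})/\sqrt{1-\bar\alpha_t}=\epsilon_{\theta,i}$ directly from the definition of $\hat{\mathbf{x}}_0$. Reparametrizing the Gaussian with $z_i\sim\mathcal{N}(0,1)$ then gives exactly Eq.~\eqref{eq:sani_update}. At $t=1$ I set $\mathbf{z}=\mathbf{0}$, following the convention of Eq.~\eqref{eq:general_update}, so that the final output is the mean.

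The main obstacle is well-definedness. The square root $\sqrt{1-\bar\alpha_{t-1}-\Sigma_{t,i}^2}$ requires $\Sigma_{t,i}^2\le 1-\bar\alpha_{t-1}$. For the stochastic term alone this is fine, because $g_{t,i}\tilde\beta_t\le\tilde\beta_t\le 1-\bar\alpha_{t-1}$, using $\beta_t\le 1-\bar\alpha_t$. The additional term $c_t^2 v_i$, however, could push the total over this bound when $v_i$ is large. I would handle this with the admissibility clip referenced in the paper: replace $\Sigma_{t,i}^2$ by $\min(\Sigma_{t,i}^2,\,1-\bar\alpha_{t-1})$ and note that the marginal argument above holds for any admissible value. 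As a sanity check, with the exact posterior variance bounded by the prior variance and $v_i\le(1-\bar\alpha_t)/\bar\alpha_t$ under the clamp, one can verify how often clipping is actually active. Finally, I would confirm the two limiting cases. When $g_{t,i}=0$ and $v_i\to 0$, the update reduces to DDIM. When $g_{t,i}=1$, the injected variance equals $\gamma^*_{t,i}$, consistent with Proposition~\ref{prop:kl_opt}.
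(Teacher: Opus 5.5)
Your derivation of the formula is correct and matches the paper's argument, which consists only of substituting $\Sigma_{t,i}^2$ from Eq.~\eqref{eq:sigma_split} coordinatewise for $\sigma_t^2$ in Eq.~\eqref{eq:general_update}. The identity $(x_{t,i}-\sqrt{\bar\alpha_t}\,\hat{x}_{0,i})/\sqrt{1-\bar\alpha_t}=\epsilon_{\theta,i}$ and the reparametrization with $z_i$ are all that is needed.

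\textbf{The marginal-preservation claim is false.} You call it ``the structural fact that makes this work'': that the argument of~\cite{song2021ddim} goes through pixel by pixel ``regardless of the value of $\sigma_{t,i}^2$'' and ``for any admissible value''. That argument is a linear-Gaussian computation. It is valid only when $\sigma_{t,i}$ is a constant; it may differ across pixels, but it must not depend on the state.

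The SANI variance $\Sigma_{t,i}^2=c_t^2\,v_i(\mathbf{x}_t,t)+g_{t,i}\tilde\beta_t$, and its clipped version, is a function of $\mathbf{x}_t$. Conditionally on $\mathbf{x}_0$, it is therefore a function of $\boldsymbol{\epsilon}=(\mathbf{x}_t-\sqrt{\bar\alpha_t}\mathbf{x}_0)/\sqrt{1-\bar\alpha_t}$. Consequences:
\begin{itemize}
\item $\sqrt{1-\bar\alpha_{t-1}-\Sigma_{t,i}^2}\,\epsilon_i$ need not be centered;
\item your bookkeeping $(1-\bar\alpha_{t-1}-\sigma_{t,i}^2)+\sigma_{t,i}^2$ cannot be pulled through the expectation over $\mathbf{x}_t\mid\mathbf{x}_0$;
\item the resulting marginal need not be Gaussian.
\end{itemize}
For a concrete failure, take $D=1$, $\mathbf{x}_0=0$, and the admissible choice $\sigma^2(x_t)=(1-\bar\alpha_{t-1})\mathbf{1}\{x_t>0\}$. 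Then $\mathbb{E}[x_{t-1}\mid x_0]=\sqrt{1-\bar\alpha_{t-1}}\,\mathbb{E}[\epsilon\,\mathbf{1}\{\epsilon\le 0\}]=-\sqrt{(1-\bar\alpha_{t-1})/(2\pi)}\neq 0$.

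The paper explicitly disclaims this property (App.~\ref{app:kl_derivation}, ``Relation to the DDIM family''). It does not claim SANI matches the forward marginals, and it treats Eq.~\eqref{eq:sani_update} as a per-pixel update rule whose endpoints coincide with DDIM and DDPM. Drop the marginal-preservation claim, or restrict it to state-independent per-pixel $\sigma_{t,i}$, where it is true but does not cover SANI. The proposition needs only the substitution.

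\textbf{A smaller point on well-definedness.} The clamp $v_i\gets\max(v_i,\epsilon)$ is a floor. It gives no upper bound of the form $v_i\le(1-\bar\alpha_t)/\bar\alpha_t$. By Prop.~\ref{prop:app_bounds}, that inequality is equivalent to $[\mathbf{H}_t]_{ii}\le 0$, and it fails at positive-curvature pixels, where $c_t^2 v_i$ can push $\Sigma_{t,i}^2$ past $1-\bar\alpha_{t-1}$. Even where it holds, you also need $\beta_t/\alpha_t\le 1-\bar\alpha_{t-1}$ (Lemma~\ref{lem:app_schedule}), which fails for small $t$.

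So the clip $\Sigma_{t,i}^2\gets\min(\Sigma_{t,i}^2,1-\bar\alpha_{t-1})$ in Alg.~\ref{alg:sani} is part of the definition, not merely a sanity check. With it, both square roots in Eq.~\eqref{eq:sani_update} are real, including at $t=1$, where it forces $\Sigma_{1,i}=0$.
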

The full procedure is summarized in \cref{alg:sani}.\\ %SANI requires no retraining of the diffusion model and is compatible with any pre-trained checkpoint.
{\textbf{Coupled dynamics.}}
The coupling between stochastic and deterministic components is explicit in Eq.~\ref{eq:sani_update}. As $\Sigma_{t,i}$ increases, the noise term $\Sigma_{t,i}z_i$ becomes larger, while the direction coefficient $\sqrt{1-\bar\alpha_{t-1}-\Sigma_{t,i}^2}$ decreases. Because $\Sigma_{t,i}^2=c_t^2 v_i+g_{t,i}\tilde\beta_t$ increases with both~$v_i$ and~$g_{t,i}$, the predicted direction is automatically down-weighted and replaced by stochastic exploration in regions where the Tweedie mean is unreliable. The same quantity that increases the variance in Eq.~\ref{eq:mean_err} also indicates the unreliability of the direction. Therefore, a decoupled rule that adjusts only the variance would continue to follow an inaccurate direction with full weight.\\
%
% new 12/8
%{\textbf{Relation to the DDIM family.}}
%For a scalar, state-independent noise scale, the non-Markovian family of~\cite{song2021ddim} provably preserves the marginals of the forward process. The SANI variance $\Sigma_{t,i}^2$ is per-pixel and depends on $\mathbf{x}_t$ through $v_i$ and $g_{t,i}$, so this marginal-preservation argument does not carry over, and we do not claim that the spatially adaptive transition matches the forward marginals. SANI should therefore be understood as a per-pixel update rule whose endpoints coincide with the DDIM and DDPM updates, supported empirically in \cref{sec:experiments}\\
\textbf{Computational cost.}
Evaluating the gate requires the Hessian diagonal $[\mathbf{H}_t]_{ii}$. Using the lightweight pretrained network of~\cite{ou2025improving}, this adds a single forward pass of a model substantially smaller than the score network, i.e., a small constant overhead per step. Alternatively, a single Hutchinson probe (one Jacobian-vector product per step) requires no auxiliary network at roughly $2\times$ the cost of a DDIM step. All remaining SANI operations are elementwise in the pixels. Details are provided in App.~\cref{app:experimental_details}.
\begin{algorithm}[t]
\small
\caption{SANI Sampling}
\label{alg:sani}
\begin{algorithmic}
  \REQUIRE $\mathbf{x}_T\sim\mathcal{N}(\mathbf{0},\mathbf{I})$, model $\boldsymbol{\epsilon}_\theta$, schedule $\{\bar\alpha_t,\tilde\beta_t,c_t\}$, tolerance $\tau$, floor $\epsilon$
  \FOR{$t=T,\ldots,1$}
    \STATE $\mathbf{z}\sim\mathcal{N}(\mathbf{0},\mathbf{I})$ if $t>1$, else $\mathbf{z}=\mathbf{0}$
    \STATE $\hat{\boldsymbol{\epsilon}}\gets\boldsymbol{\epsilon}_\theta(\mathbf{x}_t,t)$;\quad $\hat{\mathbf{x}}_0\gets\bar\alpha_t^{-1/2}(\mathbf{x}_t-\sqrt{1-\bar\alpha_t}\,\hat{\boldsymbol{\epsilon}})$
    \STATE $\mathbf{v}\gets\mathrm{diag}\bigl(\mathrm{Cov}[\mathbf{x}_0\mid\mathbf{x}_t]\bigr)$;\quad $v_i\gets\max(v_i,\epsilon)$ \COMMENT{per-pixel variance, Eq~\ref{eq:v_i}}
    \STATE $\mathbf{g}_t\gets 2\bigl(\mathbf{1}-\Phi(\sqrt{\tau/\mathbf{v}})\bigr)$ \COMMENT{per-pixel gate, Eq.~\ref{eq:gating_closed}}
    \STATE $\boldsymbol{\Sigma}_t^2\gets c_t^2\,\mathbf{v}+\mathbf{g}_t\odot\tilde\beta_t$;\quad $\boldsymbol{\Sigma}_t^2\gets\min(\boldsymbol{\Sigma}_t^2,\,1-\bar\alpha_{t-1})$ \COMMENT{noise injection, Eq.~\ref{eq:sigma_split}}
    \STATE $\mathbf{x}_{t-1}\gets\sqrt{\bar\alpha_{t-1}}\,\hat{\mathbf{x}}_0+\sqrt{1-\bar\alpha_{t-1}-\boldsymbol{\Sigma}_t^2}\odot\hat{\boldsymbol{\epsilon}}+\boldsymbol{\Sigma}_t\odot\mathbf{z}$
  \ENDFOR
  
  \RETURN \, $\mathbf{x}_0$
\end{algorithmic}
\end{algorithm}
% ===========================================================================
% ===========================================================================
\section{Experiments}
\label{sec:experiments}
 The experimental evaluation follows the logical structure of the proposed method. First, we validate empirically the theoretical identities from \cref{sec:method:uncertainty}. Next, we show the alignment of the gating function with image structure without any edge supervision (\cref{sec:exp_qualitative}). After confirming the intended mechanism, we test the sole distributional approximation on which the method depends (\cref{sec:exp_residual}) and characterize the gating function along its two axes, the tolerance $\tau$ and the reverse-process time (\cref{sec:exp_gating_behavior}). Finally, we evaluate generation quality (\cref{sec:exp_quantitative}), showing that spatial adaptivity improves FID over stochastic samplers and we conclude with a decoupling ablation (\cref{sec:exp_ablation}) to isolate the contributions of per-pixel gating, the asymmetric allocation of the calibration term, and the coupling between injected variance and the direction term.
%\textbf{Datasets and models.} We experiment on CIFAR10 ($32 \times 32$) with both the linear (LS) and cosine (CS) noise schedules, CelebA ($64 \times 64$), and LSUN Bedroom ($256 \times 256$). For each dataset we use the publicly available pre-trained DDPM checkpoints from~\cite{ho20,nichol2021improved} and the Hessian-diagonal networks from~\cite{ou2025improving} where available. SANI requires no retraining the diffusion model, all experiments use inference-time gating only.

%\textbf{Baselines.} We compare against DDPM with $\sigma_t^2 = \tilde{\beta}_t$ and $\sigma_t^2 = \beta_t$~\citep{ho20}, DDIM~\citep{song2021ddim}, Analytic-DPM (A-DDPM/DDIM)~\citep{bao2022analytic}, NPR DDPM/DDIM~\citep{bao2022estimating} (models the noise prediction residual), SN-DDPM/DDIM~\citep{bao2022estimating} (models the second moment of the noise), and OCM-DDPM/DDIM~\citep{ou2025improving}.
%SN-DDPM learns the covariance by training a neural network to estimate the second moment of the noise $\epsilon_t$
% NPR-DDPM,  models the noise prediction residual instead
%A-DDPM assumes a state-independent isotropic covariance with analytic form 
%
%\textbf{Protocol.} FID is computed from $50{,}000$ generated samples with EMA weights, likelihoods are reported as variational bounds in bits per dimension. Full evaluation details are in \cref{app:nll} and the experiment-details appendix.
%===========================================================================
\subsection{The FIM-Jacobian-Covariance Correspondence}
\label{sec:exp_fim}
%===========================================================================
\Cref{prop:fim} establishes that the Fisher Information of the denoising distribution, the denoiser Jacobian, and the posterior covariance are proportional (Eq.~\ref{eq:fim_jacobian}), and that the total Fisher Information equals the reconstruction MSE up to a schedule-dependent factor. These identities provide the theoretical foundation for treating the posterior covariance as a proxy for local reconstruction uncertainty. We verify them on held-out validation images. At three noise levels we compute and visualize the one-step Tweedie prediction $\hat{\mathbf{x}}_0$, the denoiser Jacobian diagonal $[\mathbf{J}_{\hat{\mathbf{x}}_0}]_{ii}$, the posterior covariance diagonal $[\mathrm{Cov}[\mathbf{x}_0\mid\mathbf{x}_t]]_{ii}$, and the per-pixel reconstruction accuracy, Fig.~\ref{fig:fim_correspondence}. The Jacobian and posterior covariance maps coincide up to a global scale at every noise level, as Eq.~\ref{eq:fim_jacobian} requires. Since this proportionality is exact under Tweedie's identity, the agreement also confirms that the Hessian estimators used downstream introduce no systematic bias. Reconstruction accuracy is lowest where these maps are largest,  which is the empirical content of the trace identity. As the noise decreases from $80\%$ to $20\%$, all three maps sharpen and concentrate on edges and contours, tracking the denoiser's increasing precision.
\begin{figure}[htbp]
  \centering
  \begin{minipage}[t]{0.52\textwidth}
    %\centering
    \includegraphics[width=0.8\linewidth]{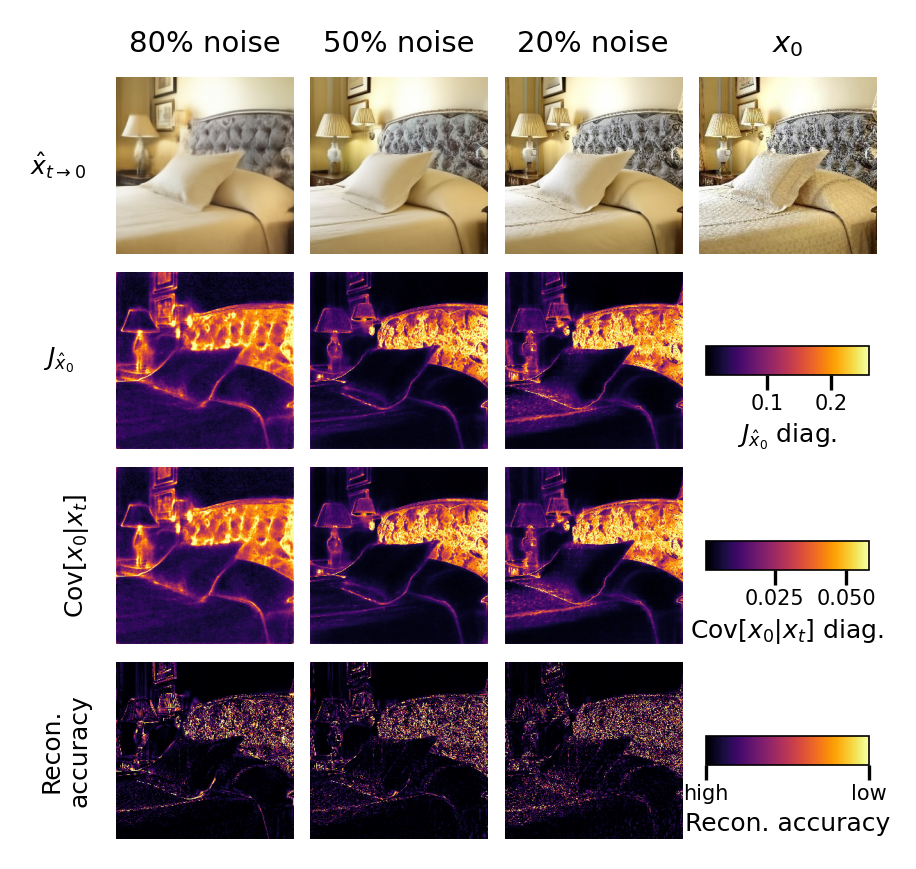}
\caption{Empirical validation of Prop.~\ref{prop:fim} on LSUN Bedroom. Columns:
    decreasing noise ($80\%$, $50\%$, $20\%$) and the clean image~$\mathbf{x}_0$. Rows:
    \textbf{(1)} one-step prediction $\hat{\mathbf{x}}_0$; \textbf{(2)} denoiser Jacobian
    diagonal $[\mathbf{J}_{\hat{\mathbf{x}}_0}]_{ii}$; \textbf{(3)} posterior covariance
    diagonal $[\mathrm{Cov}[\mathbf{x}_0\mid\mathbf{x}_t]]_{ii}$; \textbf{(4)} per-pixel
    reconstruction accuracy. Rows~2-3 are proportional (Eq.~\ref{eq:fim_jacobian}) and both
    are spatially anti-correlated with reconstruction accuracy.}
    \label{fig:fim_correspondence}
  \end{minipage}
  \hfill
  \begin{minipage}[t]{0.45\textwidth}
    %\centering
    \includegraphics[width=0.85\linewidth]{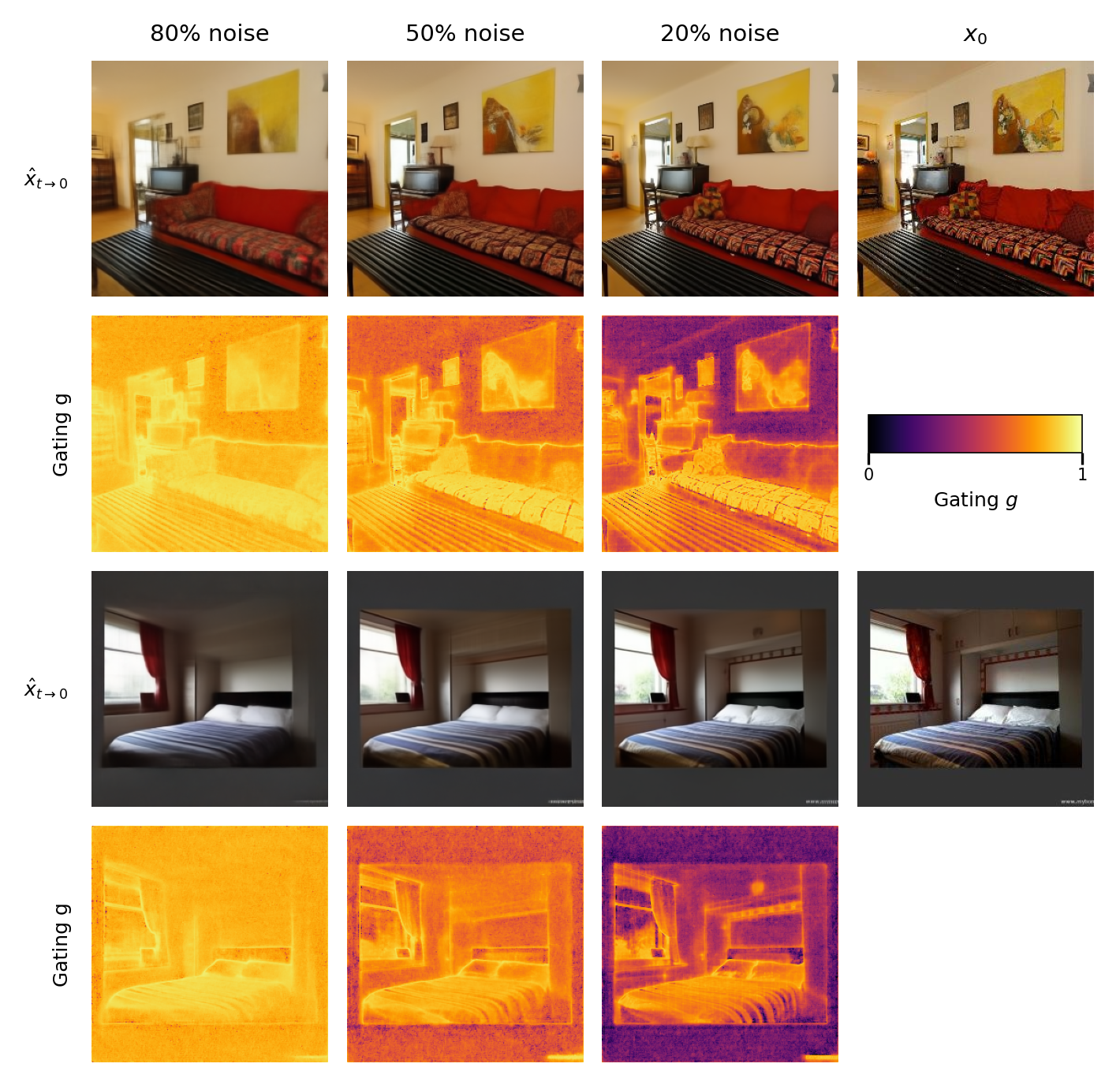}
    \caption{One-step predictions at three noise levels ($80\%$, $50\%$, $20\%$) and the
    corresponding gating map (dark$\,{=}\,0$, bright$\,{=}\,1$) on LSUN Bedroom. The gate
    assigns high stochasticity to edges and textures, near-zero to flat regions, with the
    separation sharpening as noise decreases.}
    \label{fig:gating_noise_levels}
    \end{minipage}
\end{figure}
%===========================================================================
\subsection{Gating Maps Align with Image Geometry}
\label{sec:exp_qualitative}
%===========================================================================
We now test whether the gate $g_{t,i}$ derived from it allocates stochasticity to
the geometrically complex regions of an image, through controlled-noise maps, correlation with
per-pixel error, and alignment with edges during sampling.\\
\textbf{Spatial allocation of stochasticity.}
Fig.~\ref{fig:gating_noise_levels} presents gating maps for LSUN Bedroom validation images at three noise levels. The gate assigns values near~$1$ to complex regions (furniture contours, painting edges) and near~$0$ to smooth regions (walls, uniform bed sheets). As noise decreases, the separation sharpens, outlining the image’s geometric structure and concentrating stochasticity at the most challenging features. This behavior results solely from the per-pixel gating function estimated by the pretrained denoising network.\\
\textbf{Correlation with reconstruction error.}
We compute the per-pixel gate $g_{t,i}$ and the validation loss $(x_{0,i}-\hat{x}_{0,i})^2$ on held-out images, measuring their log-space Pearson correlation at six noise levels (Fig.~\ref{fig:gating_vs_loss}). The correlation increases monotonically as noise decreases indicating that pixels with high-$g_{t,i}$ are reconstructed less accurately. At high noise, the lower correlation reflects that most gate values are compressed near $1$, reducing variation across pixels.\\
\begin{figure}[t]
    \centering
    \includegraphics[width=\linewidth]{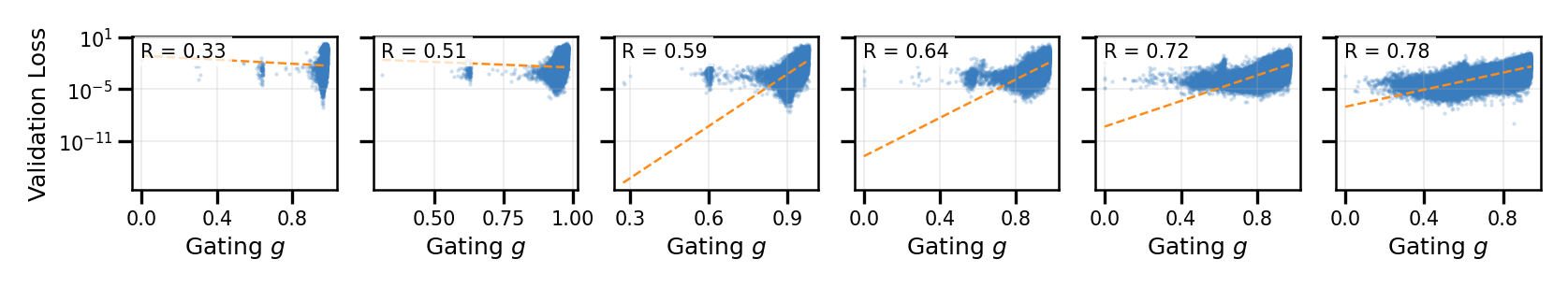}
    \caption{Per-pixel gating value $g_{t,i}$ vs.\ validation loss $(x_{0,i}-\hat{x}_{0,i})^2$ on LSUN Bedroom at six noise levels (left to right: decreasing noise). The log-space Pearson correlation rises monotonically from $0.33$ to $0.78$}% Each point is one pixel; the dashed line is a log-space linear fit. The Pearson correlation increases monotonically from $0.33$ to $0.78$, confirming that higher gating values are associated with higher reconstruction error.}
    \label{fig:gating_vs_loss}
\end{figure}
\textbf{Alignment with edge structure during sampling.}
The previous tests use controlled noise levels. Fig.~\ref{fig:gating_vs_edges_lsun} compares the gate to the Sobel gradient magnitude of $\hat{\mathbf{x}}_0$ along the reverse trajectory. At $t=800$, the model is globally uncertain and $g_{t,i}\approx1$ everywhere. As sampling progresses ($t=400$ to $100$), flat regions with $|\nabla\hat{\mathbf{x}}_0| \approx 0$ shift to $g_{t,i} \approx 0$, while edges and fine details retain high $g_{t,i}$. At $t=25$, $g_{t,i} \approx 0$ globally, with residual stochasticity only at the highest-gradient pixels. This map identifies structural difficulty without requiring explicit training, as other methods do~\citep{schusterbauer2026patchforcing}.
\begin{figure}[t]
    \centering
    \begin{subfigure}[b]{0.19\textwidth}
        \centering
        \includegraphics[width=\linewidth]{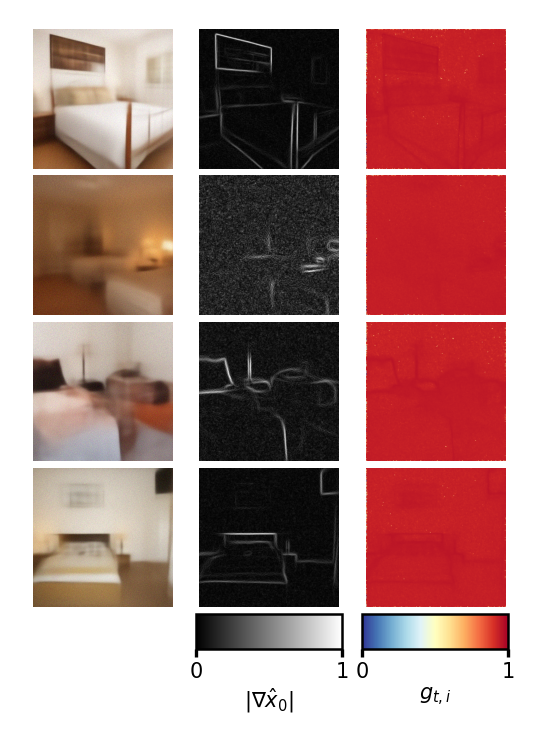}
        \caption{\footnotesize $t\!=\!800$}
        \label{fig:gating_vs_edges_lsun_800}
    \end{subfigure}
    \begin{subfigure}[b]{0.19\textwidth}
        \centering
        \includegraphics[width=\linewidth]{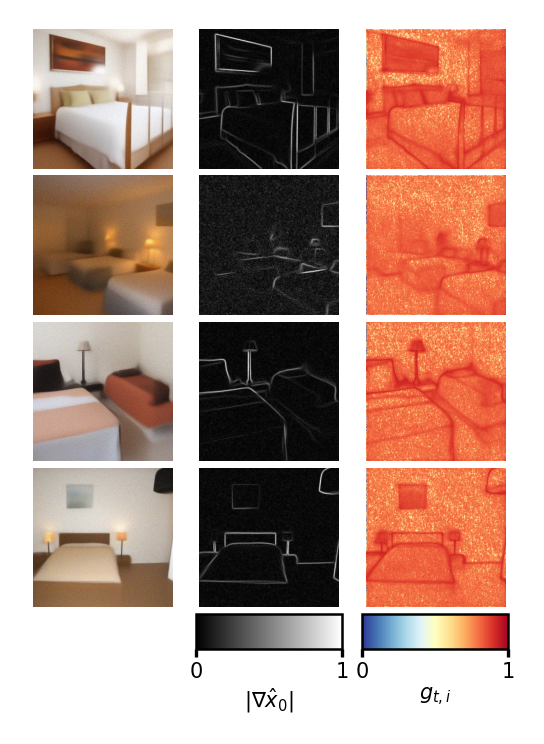}
        \caption{\footnotesize $t\!=\!400$}
        \label{fig:gating_vs_edges_lsun_400}
    \end{subfigure}
    \begin{subfigure}[b]{0.19\textwidth}
        \centering
        \includegraphics[width=\linewidth]{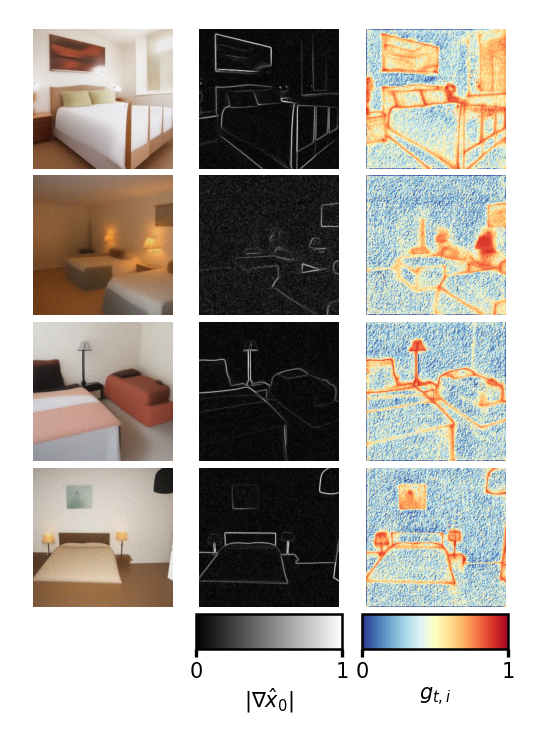}
        \caption{\footnotesize $t\!=\!200$}
        \label{fig:gating_vs_edges_lsun_200}
    \end{subfigure}
    \begin{subfigure}[b]{0.19\textwidth}
        \centering
        \includegraphics[width=\linewidth]{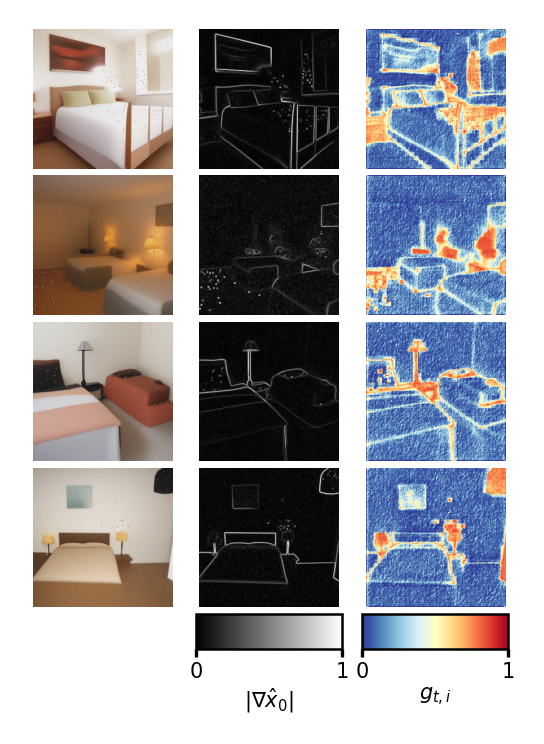}
        \caption{\footnotesize $t\!=\!100$}
        \label{fig:gating_vs_edges_lsun_100}
    \end{subfigure}
    \begin{subfigure}[b]{0.19\textwidth}
        \centering
        \includegraphics[width=\linewidth]{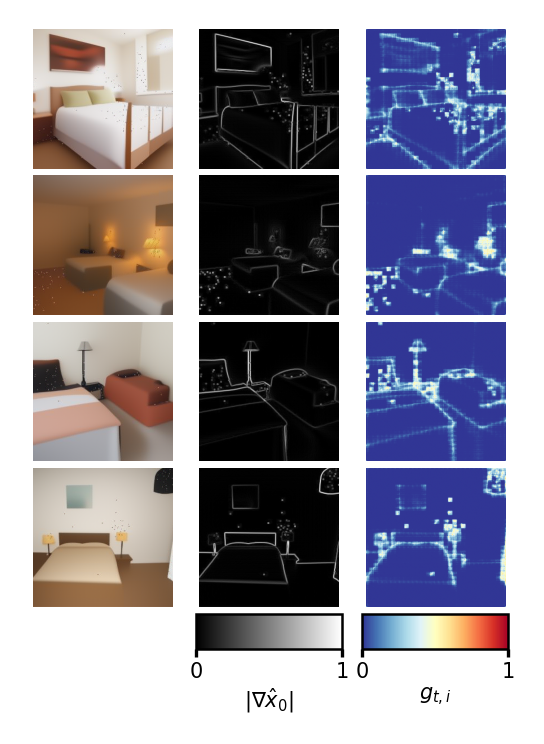}
        \caption{\footnotesize $t\!=\!25$}
        \label{fig:gating_vs_edges_lsun_25}
    \end{subfigure}
    \caption{Per-pixel gating $g_{t,i}$ vs.\ Sobel edge map of $\hat{\mathbf{x}}_0$ along a single reverse trajectory on LSUN Bedroom. Each subplot shows, for the same generated sample at the indicated timestep: the predicted image (left), the Sobel gradient magnitude $|\nabla\hat{\mathbf{x}}_0|$ (center), and the gate $g_{t,i}$ (right; blue~$=0$, red~$=1$). The gate is near $1$ in high noise, progressively aligns with edges and textures during denoising, and collapses to $\approx 0$ everywhere except for the highest-gradient pixels at the end of the trajectory.}
    \label{fig:gating_vs_edges_lsun}
\end{figure}
%===========================================================================
\subsection{Validating the Residual-Distribution Assumption}
\label{sec:exp_residual}
%===========================================================================
To validate our assumption that the per-pixel posterior residual
$R_i = X_{0,i} - \hat{x}_{0,i}\mid\mathbf{x}_t$ is Gaussian with variance~$v_i$, we compare the empirical distribution of the standardized residuals, $Z_i = R_i/\sqrt{v_i}$ to Gaussian and Laplace unit-variance references at four representative timesteps, Fig.~\ref{fig:residual_multidist}. The empirical standardized residuals are sharply peaked at high noise, resembling the unit-variance Laplace distribution, and converge toward $\mathcal{N}(0,1)$ as $t$ decreases. The Kolmogorov-Smirnov (KS) distance to the Gaussian drops from $0.23$ at $t{=}800$ to $0.03$ at $t{=}25$. This matches the high-SNR limit of Lemma~\ref{lem:posterior_factorization} in Appendices, where the Gaussian forward kernel dominates the posterior. Re-deriving the gate under Laplace residuals (App.~\cref{app:alt_gates}) preserves the spatial gating structure and only affects the sharpness of the stochastic-to-deterministic transition. Sampling results confirm that the Gaussian and Laplace gates (SANI$^{G}$ and SANI$^{L}$, \cref{tab:fid_all}) yield comparable FID.
\begin{figure}[t]
    \centering
    \includegraphics[width=0.9\linewidth]{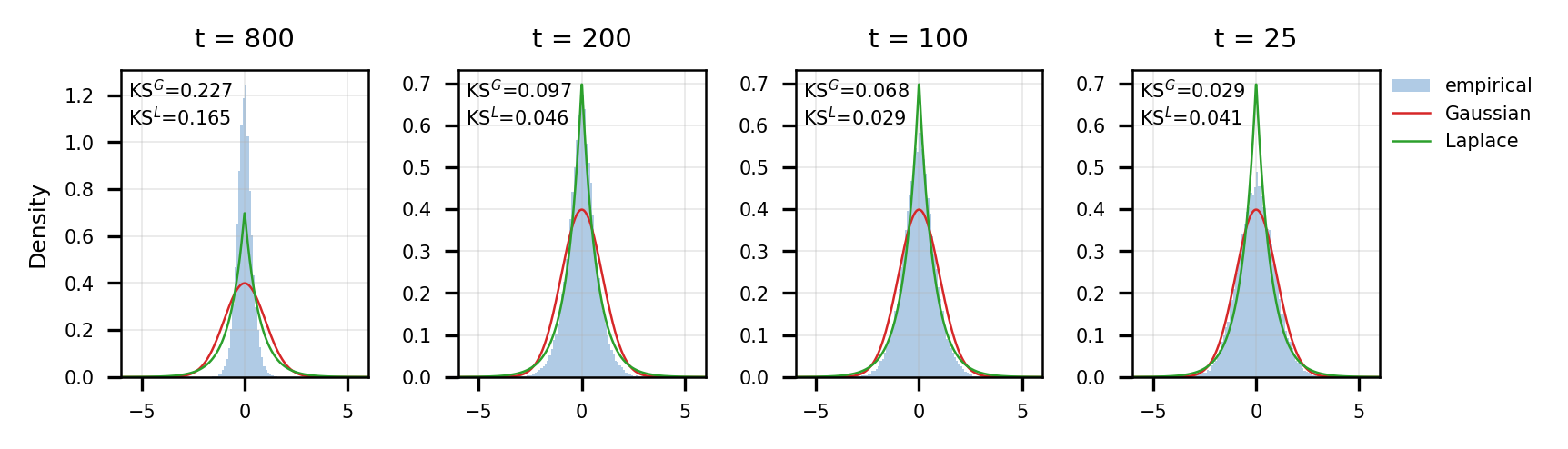}
    \caption{Density histograms of standardized residuals $Z_i=R_i/\sqrt{v_i}$ on CIFAR10 (CS) against Gaussian and Laplace unit-variance reference distributions at four reverse-process timesteps. The empirical density is sharply peaked at high noise and concentrates toward the Gaussian as $t$ decreases.}
    \label{fig:residual_multidist}
\end{figure}
%===========================================================================
\subsection{Behavior of the Gating Function}
\label{sec:exp_gating_behavior}
%===========================================================================
We analyze the gate along two axes of variation. The tolerance $\tau$, which determines its operating point, and reverse-process time $t$, which governs its evolution. Fig.~\ref{fig:gating_histograms} presents the spatial distribution of gating values across the full reverse trajectory for four tolerances, with density shown a logarithmic scale. The general pattern is consistent. A near-point mass at $t{=}1000$, broad support over $[0,1]$ mid-trajectory as the gate distinguishes between resolved and unresolved regions, and a collapse toward $0$ at the end. Increasing $\tau$ shifts the stochastic$\to$deterministic transition earlier, as expected, since the gate depends only on $\tau/v_i$. Fig.~\ref{fig:perstep_stats} summarizes the dynamics per-step mean and variance across datasets. The band is widest in the middle, reflecting peak spatial heterogeneity.
\begin{figure}[t]
  \begin{subfigure}[b]{0.23\textwidth}
    \centering
    \includegraphics[width=\linewidth]{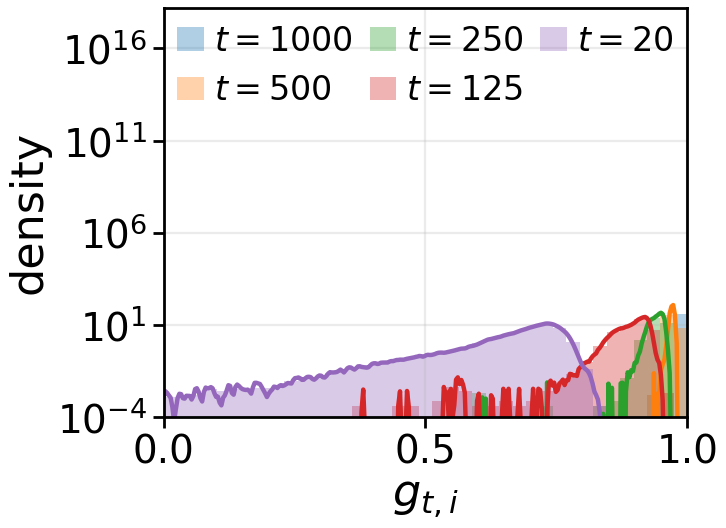}
    \caption{\footnotesize $\tau\!=\!10^{-4}$}
    \label{fig:gating_histograms_a}
  \end{subfigure}
  \begin{subfigure}[b]{0.23\textwidth}
    \centering
    \includegraphics[width=\linewidth]{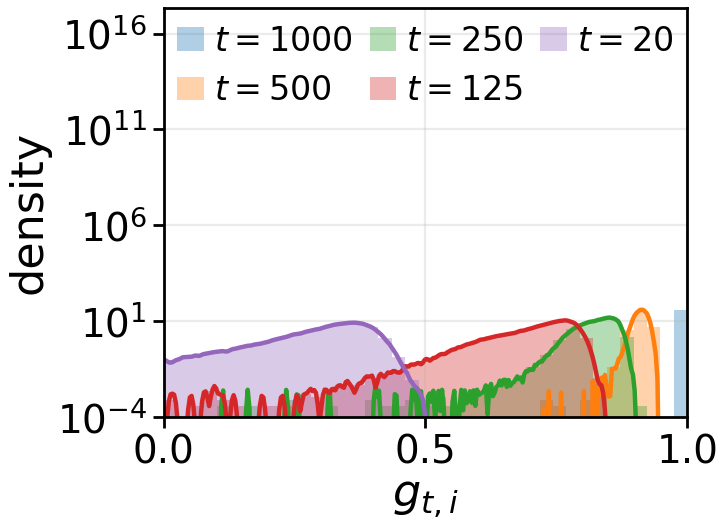}
    \caption{\footnotesize $\tau\!=\!10^{-3}$}
    \label{fig:gating_histograms_b}
  \end{subfigure}
  \begin{subfigure}[b]{0.23\textwidth}
    \centering
    \includegraphics[width=\linewidth]{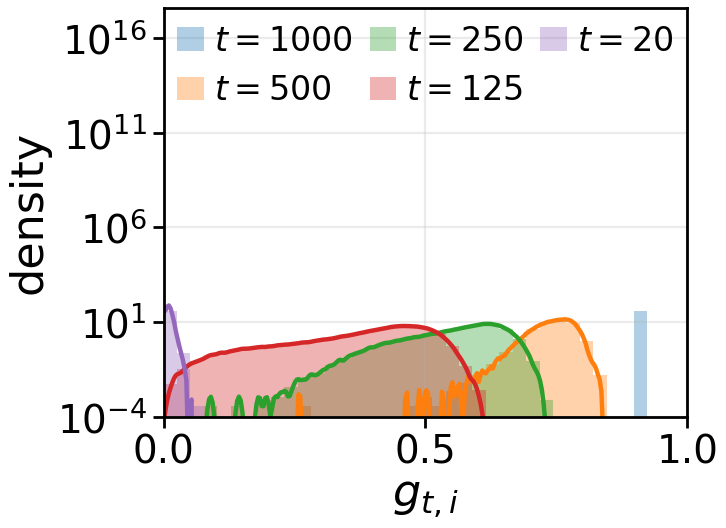}
    \caption{\footnotesize $\tau\!=\!10^{-2}$}
    \label{fig:gating_histograms_c}
  \end{subfigure}
  \begin{subfigure}[b]{0.23\textwidth}
    \centering
    \includegraphics[width=\linewidth]{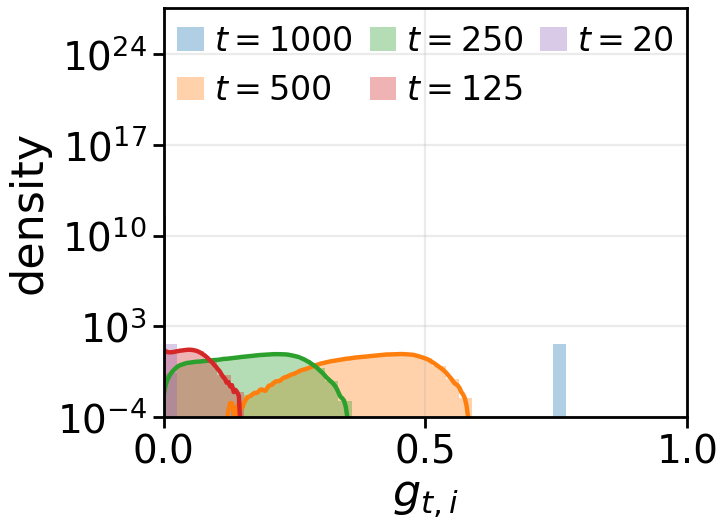}
    \caption{\footnotesize $\tau\!=\!10^{-1}$}
    \label{fig:gating_histograms_d}
  \end{subfigure}
  \caption{Spatial distribution of gating values $g_{t,i}$ along the full reverse trajectory on CIFAR-10 (CS), for four tolerances (subplot). Each subplot overlays five timesteps $t\in\{1000,500,250,125,20\}$, with density on a logarithmic scale. The distribution is a near-point mass at $t=1000$ (spatially uniform uncertainty under pure noise), spreads over $[0,1]$ at intermediate timesteps (spatial discrimination between confident and uncertain regions), and collapses toward $g=0$ at the final steps. Increasing $\tau$ by a decade translates this stochastic-to-deterministic transition earlier along the trajectory without changing its shape, as predicted by the dependence of the gate on $\tau/v_i$ alone.}%Corresponding histograms for the remaining datasets, exhibiting the same behavior, are in \cref{app:exp_crossdataset}}
  \label{fig:gating_histograms}
\end{figure}
\begin{figure}[t]
  \centering
    \includegraphics[width=0.80\linewidth]{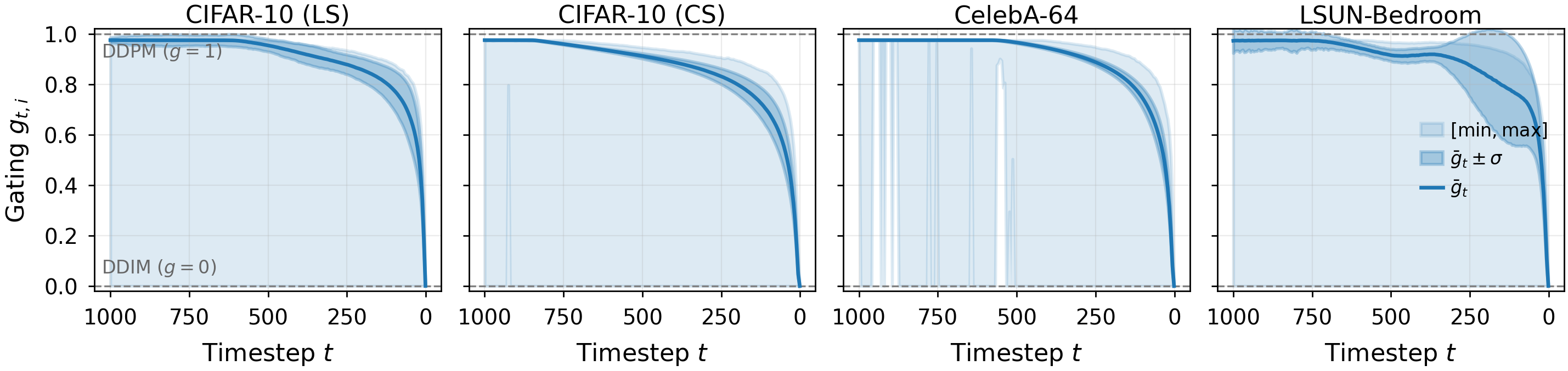}
  %\caption{Per-step gating statistics ($\bar{g}$, $g_{\min}$, $g_{\max}$) over the reverse process with $\tau = 10^{-3}$, averaged over a batch of generated images. Across datasets, $\bar{g}$ drops monotonically from~$1$ to near~$0$, confirming that SANI transitions from stochastic (early) to deterministic (late)  sampling as the model becomes confident.}
    \caption{Per-step gating statistics over the reverse process at $\tau=10^{-3}$, averaged over $1000$ generated images per dataset. \textbf{Solid line:} spatial mean $\bar{g}_t$; dark band: $\bar{g}_t\pm\sigma_{g,t}$ \textbf{Light band:} per-step $[\min,\max]$. The mean decreases monotonically from $1$ (stochastic) to $0$ (deterministic) on every dataset, and the band is widest at intermediate timesteps where spatial heterogeneity peaks.}
  \label{fig:perstep_stats}
\end{figure}
%===========================================================================
\subsection{Quantitative Evaluation}
\label{sec:exp_quantitative}
%===========================================================================
\Cref{tab:fid_all} reports FID scores across different sampling steps, likelihoods are reported in Appendices, \cref{app:nll}. We evaluate two gate variants: (i) SANI$^{G}$, which uses the Gaussian residual assumption, and (ii) SANI$^{L}$, which follows the Laplace assumption (App.~\cref{app:laplace}). Since SANI interpolates between stochastic and deterministic regimes on a per pixel basis, it does not fit into either family, so we compare it to both.

A primary test for per-pixel interpolation is whether it outperforms the two samplers it interpolates. SANI$^{G}$ consistently achieves lower FID than both vanilla DDPM and vanilla DDIM across all dataset and step-count configurations, often by a significant margin. So adjusting noise levels spatially is consistently as good as, and typically outperforms, to applying a uniform global regime. %On CelebA, SANI is the strongest stochastic sampler at every step count, improving on the best DDPM-family baseline (SN-DDPM) and surpassing the uniform analytic variance A-DDIM at every $K\ge25$. On CIFAR-10 (CS), SANI is strongest where it matters most for quality. It matches the best method in the table at $K{=}100$ and attains the best FID overall at $K{=}200$. In CIFAR-10 (LS), SANI is the best stochastic sampler in the low-step regime, while at $K\ge50$ the deterministic variance-learning baselines remain ahead.
%Against the stronger variance-learning baselines, SANI is most competitive in the low-step regime and attains the best stochastic-sampler FID in several configurations (e.g., CelebA at $K\in\{25,50,200\}$ and CIFAR-10 (CS) at $K\in\{100,200\}$), while SN- and OCM-based samplers retain the lead in several mid- and high-step CIFAR-10 settings (\cref{tab:fid_all}). We therefore position SANI as consistently better than the endpoint samplers it interpolates and competitive with, rather than uniformly dominant over, learned-variance methods.

The Gaussian and Laplace gates perform comparably across all three settings, with neither showing clear superiority. This confirms that sampling is insensitive to the residual model. SANI also remains competitive in NLL compared to variance-learning baselines (App.~\cref{app:nll}), indicating that spatial gating does not sacrifice likelihood for sample quality. Additionally, the per-pixel gating maps (Fig.~\ref{fig:gating_vs_edges_lsun}) offer interpretable uncertainty diagnostics.

\begin{table}[t]
\centering
\small \caption{FID ($\downarrow$) across datasets and sampling steps. \textbf{Bold}: best among stochastic samplers (DDPM family and SANI). \underline{Underline}: best in the DDIM family. SANI values are additionally underlined when they outperform the best DDIM-family sampler.}
\label{tab:fid_all}
\vskip -0.1in
\small
\setlength{\tabcolsep}{3.5pt}
%\begin{tabular}{lrrrrr|rrrrr}
\begin{tabular}{lrrrrrrrrrr}
\toprule
& \multicolumn{5}{c}{\textbf{DDPM family}} & \multicolumn{5}{c}{\textbf{DDIM family}} \\
\cmidrule(lr){2-6} \cmidrule(lr){7-11}
\textbf{Method} & 10 & 25 & 50 & 100 & 200 & 10 & 25 & 50 & 100 & 200\\
\midrule
\multicolumn{11}{l}{\textbf{CelebA}} \\
Base, $\tilde\beta$ & 36.69 & 24.46 & 18.96 & 14.31 & 10.48 & 20.54 & 13.54 & 9.33 & 6.60 & 4.96 \\
A-                  & 28.99 & 16.01 & 11.23 & 8.08  & 6.51  & 15.62 & 9.22  & 6.13 & 4.29 & 3.46 \\
NPR-                & 28.37 & 15.74 & 10.89 & 8.23  & 7.03  & 14.98 & 8.93  & 6.04 & 4.27 & 3.59 \\
SN-                 & 20.60 & 12.00 & 7.88  & 5.89  & 5.02  & \underline{10.20} & \underline{5.48} & \underline{3.83} & \underline{3.04} & \underline{2.85} \\
OCM-                & 21.55 & 12.71 & 9.24  & 6.97  & 5.92  & 10.28 & 5.72  & 4.42 & 3.54 & 3.17 \\
%\midrule
$\mathrm{SANI}^G$   & \multicolumn{10}{c}{K=10: 17.09 \, K=25: \textbf{8.87} \, K=50: \textbf{5.56} \, K=100: 4.64 \, K=200: \textbf{3.04}} \\
$\mathrm{SANI}^L$   & \multicolumn{10}{c}{K=10: \textbf{16.07} \, K=25: 9.84 \, K=50: 6.25 \, K=100: \textbf{3.93} \, K=200: 4.52} \\ 
\midrule
\multicolumn{11}{l}{\textbf{CIFAR-10~(LS)}} \\
Base, $\tilde\beta$ & 44.45 & 21.83 & 15.21 & 10.94 & 8.23  & 21.31 & 10.70 & 7.74 & 6.08 & 5.07 \\
A-                  & 34.26 & 11.60 & 7.25  & 5.40  & 4.01  & 14.00 & 5.81  & 4.04 & 3.55 & 3.39 \\
NPR-                & 32.35 & 10.55 & 6.18  & 4.52  & 3.57  & 13.34 & 5.38  & 3.95 & 3.53 & 3.42 \\
SN-                 & 24.06 &  6.91 & \textbf{4.63} & \textbf{3.67} & \textbf{3.11} & 12.19 & \underline{4.28} & \underline{3.39} & \underline{3.23} & \underline{3.22} \\
OCM-                & 24.94 &  9.19 & 5.95  & 4.36  & 3.48  & \underline{10.66} & 4.35  & 3.48 & 3.27 & 3.29 \\
%\midrule
$\mathrm{SANI}^G$   & \multicolumn{10}{c}{K=10: \textbf{15.27} \, K=25: \textbf{6.64} \, K=50: 6.50 \, K=100: 3.82 \, K=200: 3.99} \\
$\mathrm{SANI}^L$  & \multicolumn{10}{c}{K=10: 17.96 \, K=25: 8.13 \, K=50: 5.65 \, K=100: 4.85 \, K=200: 5.85} \\
\midrule
\multicolumn{11}{l}{\textbf{CIFAR-10~(CS)}} \\
Base, $\tilde\beta$ & 34.76 & 16.18 & 11.11 & 8.38  & 6.66  & 34.34 & 16.68 & 10.48 & 7.94 & 6.69 \\
A-                  & 22.94 &  8.50 & 5.50  & 4.45  & 4.04  & 26.43 & 9.96  & 6.02  & 4.88 & 4.92 \\
NPR-                & 19.94 &  7.99 & 5.31  & 4.52  & 4.10  & 22.81 & 9.47  & 6.04  & 5.02 & 5.06 \\
SN-                 & 16.33 &  6.05 & \textbf{4.17} & \textbf{3.83} & 3.72 & 17.90 & 7.36  & 5.16  & 4.63 & 4.63 \\
OCM-                & \textbf{14.32} & \textbf{5.54} & 4.10 & 3.84 & 3.75 & {16.70} & {6.71} & \underline{4.72} & {4.30} & {4.54} \\
\midrule
$\mathrm{SANI}^G$   & \multicolumn{10}{c}{K=10: 17.01 \, K=25: 7.35 \, K=50: 4.95 \, K=100: \textbf{\underline{3.83}} \, K=200: 4.17} \\
$\mathrm{SANI}^L$   & \multicolumn{10}{c}{K=10: \underline{16.37} \, K=25: \underline{6.55} \, K=50: 4.88 \, K=100: 4.50 \, K=200: \textbf{\underline{3.52}}} \\
\bottomrule
\end{tabular}
%\vspace{-4mm}
\end{table}
\subsection{Ablation: Spatial Allocation and Coupling}
\label{sec:exp_ablation}
%% ====================================================================
To isolate the contributions of the probabilistic gating mechanism and the spatially adaptive variance formulation, we perform a decoupling ablation on CelebA-64 (\cref{tab:ablation}). We compare standard endpoint samplers, ungated spatial variances (with a fixed gate), and various decoupled gated combinations. The results show that combining optimal per-pixel variance with the probabilistic gating mechanism (SANI) consistently achieves the best FID across all evaluated step counts, confirming that both components are essential for optimal performance.
\begin{table}[h]
\centering
\caption{Decoupling ablation on CelebA-64. FID ($\downarrow$). Gated variants report the best FID across a $\tau$ sweep at each step count, endpoint and ungated variants are $\tau$-independent.}
\label{tab:ablation}
\vskip -0.1in
\small
\setlength{\tabcolsep}{1.5pt}
\begin{tabular}{lrrrrr}
\toprule
\textbf{Variant/ $\#$Timesteps} & $10$ & $25$ & $50$ & $100$ & $200$ \\
\midrule
\multicolumn{6}{l}{\textbf{Endpoint samplers}} \\
DDIM                            & 20.54 & 13.54 &  9.33 &  6.60 &  4.96 \\
DDPM ($\tilde\beta_t$)          & 36.69 & 24.46 & 18.96 & 14.31 & 10.48 \\
\midrule
\multicolumn{6}{l}{\textbf{Ungated variance choices}} \\
OCM-DDPM ($\gamma^*$ on noise only) & 21.55 & 12.71 &  9.24 &  6.97 &  5.92 \\
Uniform $\gamma^*$ (both terms) & 34.13 & 18.97 & 13.98 & 10.66 &  9.70 \\
\midrule
\multicolumn{6}{l}{\textbf{Gated variants}} \\
$g_{t,i},\gamma_{t,i}^*$-coupled                    & 24.74 &  8.81 &  7.88 &  4.74 & 3.20 \\
SANI-naive ($g_{t,i},\tilde\beta_t$)                  & 22.53 & \textbf{7.32} &  8.21 & \textbf{4.12} & 3.34 \\
SANI-decoupled                  & 22.09 & 13.57 & 10.40 &  8.70 & 8.88 \\
\textbf{SANI (proposed)}        & \textbf{17.09} & 8.87 & \textbf{5.56} & 4.64 & \textbf{3.04} \\
\bottomrule
\end{tabular}
%\vspace{-4mm}
\end{table}
%% ====================================================================
\section{Related Work}
\label{sec:related}
%% ====================================================================
\textbf{Diffusion models and sampling.}
\cite{ho20, song2021scorebased} established the foundational framework of diffusion as a hierarchical latent variable model.
\cite{song2021ddim} revealed the probability flow ODE connection, enabling deterministic sampling.
\cite{Karras_Elucidating} refined sampling via schedule design and higher-order solvers.
Our work extends this line of research by introducing a spatial degree of freedom that interpolates between deterministic and stochastic dynamics at the pixel level.\\
\textbf{Variance learning.}
Improved DDPM~\citep{nichol2021improved} first proposed learning $\Sigma_\theta$ via $\mathcal{L}_{\mathrm{vlb}}$.
Analytic-DPM~\citep{bao2022analytic} derived the optimal scalar variance.
\cite{bao2022estimating} explored diagonal covariances.
OCM~\citep{ou2025improving} estimates diagonal covariance by matching the score Hessian.
However, OCM applies learned variances uniformly, all pixels follow the same regime at each step.
SANI employs Hessian-diagonal estimates for a distinct purpose: deriving per-pixel gating that enables spatially heterogeneous interpolation within a single image.\\
\textbf{Adaptive sampling.}
Dpm-Solver~\citep{lu2022dpmsolver} and Dpm-Solver++~\citep{ludpm++} adapt step spacing temporally but not spatially. \cite{liu2025regionadaptive} skip computations in easy regions for efficiency, but do not modulate the stochastic properties of the process. MULAN~\citep{sahoo2024mulan} learns a multivariate noise schedule during training. In contrast, SANI derives noise modulation from the pre-trained score network at inference time.\\
\textbf{Uncertainty in diffusion models.}
\cite{De_Vita_2025_WACV} estimate pixel-wise uncertainty via ensembles, incurring multiple forward passes.
\cite{song2024high_frequency} show that high-frequency regions benefit from stochastic exploration in wavelet-based compression. \cite{jazbec25aUncertainty} use posterior variance for OOD detection. Closest to our use of posterior moments, \cite{manor2024posterior} compute per-pixel posterior moments from derivatives of a pretrained denoiser for uncertainty quantification and posterior exploration in image restoration.
The proposed approach derives the same per-pixel uncertainty from the FIM diagonal, but uses it to control the sampler itself, gating the noise injection at inference time.\\
\textbf{Information geometry.}
\cite{amari} established links between Fisher Information and model sensitivity.
\cite{vincent2011connection} connected denoising autoencoders to score estimation.
Tweedie's formula~\citep{efron2011tweedie} relates scores to posterior means, and \cite{meng2021estimating} derived uncertainty bounds via the score Hessian.
\cite{DarasSurvey} showed denoiser Jacobian norms correlate with sample diversity. This work builds on these identities by interpreting them through the Fisher Information of the denoising distribution and by deriving a practical sampling algorithm from this relationship.\\
%Tweedie's formula~\citep{efron2011tweedie} relates scores to posterior means, \cite{dytso2020general} establish a general derivative identity linking derivatives of the conditional mean to posterior moments in Gaussian noise, and \cite{meng2021estimating} derived uncertainty bounds via the score Hessian. \cite{DarasSurvey} showed denoiser Jacobian norms correlate with sample diversity. This work builds on these identities by interpreting them through the Fisher Information of the denoising distribution and by deriving a practical sampling algorithm from this relationship.\\
%
\textbf{Spatially Adaptive Sampling.} A recognized limitation of standard diffusion models is the uniform allocation of computational resources and stochasticity across spatially heterogeneous images. Recent work, such as Patch Forcing~\citep{schusterbauer2026patchforcing}, addresses this  limitation by dynamically varying the diffusion timestep per patch, advancing ``easy'' patches faster than ``hard'' patches. However, such methods typically require architectural modifications, such as training a dedicated difficulty head, and operate at the coarser patch level. In contrast, SANI addresses spatial heterogeneity at inference time without requiring model fine-tuning or structural modifications. %By leveraging the information geometry of the pretrained score field, SANI modulates stochasticity at the pixel level without requiring any model fine-tuning or structural modifications.
%% ====================================================================
\section{Conclusion}
\label{sec:conclusion}
%% ====================================================================
In this work, we introduced Spatially Adaptive Noise Injection (SANI), a training-free inference framework designed to address the spatial uniformity bottleneck in standard diffusion samplers. Our approach, grounded in information geometry, establishes a formal theoretical link between the Fisher Information Matrix of the denoising distribution, the denoiser Jacobian, and the posterior covariance. We translated this geometric insight into a tractable, closed-form probabilistic gating function that quantifies the probability the local denoising error exceeds a specific tolerance, enabling SANI to modulate noise injection per-pixel. By coupling this gating mechanism with a spatially adaptive variance, SANI allocates stochastic corrections to uncertain, high-curvature regions while preserving deterministic DDIM-style updates in confident, smooth areas. Empirically, SANI improves generation quality over established DDIM and DDPM samplers across various step counts and is competitive with variance-learning DDPM-family baselines, with the largest gains in the low-step regime. Our decoupling analysis further confirms that the synergy between probabilistic gating and adaptive variance is essential for the method's performance, especially in low-step regime where standard samplers are less effective.

\paragraph{\textbf{Limitations and Future Work.}} While SANI establishes a foundation for spatially adaptive sampling, our results reveal an asymmetric impact of spatial gating across sampling steps. At low step counts, gating overcomes the imprecision of the underlying model. However, as the number of sampling steps increase, the coupled update rule becomes more sensitive to the diagonal Hessian approximation. In addition, our quantitative evaluation is restricted to small, low-resolution benchmarks (CIFAR-10 at $32\times32$ and CelebA at $64\times64$, with qualitative results on LSUN Bedroom at $256\times256$), and the spatially adaptive transition carries no marginal-preservation guarantee. Future work should explore dynamic tolerance scheduling, adjusting the threshold $\tau$ as a function of the sampling steps to improve the transition from high-stochasticity exploration to high-fidelity deterministic refinement, as well as validation on high-resolution and latent-space diffusion models.

\section*{Acknowledgements}
We acknowledge financial support from the Swiss National Science Foundation through the LegoMol project (grant no. 207428). The computations were performed on the Baobab and Yggdrasil cluster at the University of Geneva.
% - Bibliography -
%
% BibTeX users should specify bibliography style 'splncs04'.
% References will then be sorted and formatted in the correct style.
%
\bibliographystyle{splncs04}
\bibliography{main}

\begin{thebibliography}{10}
\providecommand{\url}[1]{\texttt{#1}}
\providecommand{\urlprefix}{URL }
\providecommand{\doi}[1]{https://doi.org/#1}

\bibitem{amari}
Amari, S.i.: Natural gradient works efficiently in learning. Neural Computation
   \textbf{10}(2),  251--276 (1998). \doi{10.1162/089976698300017746}

\bibitem{bao2022estimating}
Bao, F., Li, C., Sun, J., Zhu, J., Zhang, B.: Estimating the optimal covariance
  with imperfect mean in diffusion probabilistic models. In: Chaudhuri, K.,
  Jegelka, S., Song, L., Szepesvari, C., Niu, G., Sabato, S. (eds.) Proceedings
  of the 39th International Conference on Machine Learning. Proceedings of
  Machine Learning Research, vol.~162, pp. 1555--1584. PMLR (17--23 Jul 2022),
  \url{https://proceedings.mlr.press/v162/bao22d.html}

\bibitem{bao2022analytic}
Bao, F., Li, C., Zhu, J., Zhang, B.: Analytic-dpm: an analytic estimate of the
  optimal reverse variance in diffusion probabilistic models. In: International
  Conference on Learning Representations (ICLR) (2022),
  \url{https://openreview.net/forum?id=ly5M8mE4v49}

\bibitem{DarasSurvey}
Daras, G., Chung, H., Lai, C., Mitsufuji, Y., Ye, J.C., Milanfar, P., Dimakis,
  A.G., Delbracio, M.: A survey on diffusion models for inverse problems. CoRR
  \textbf{abs/2410.00083} (2024). \doi{10.48550/ARXIV.2410.00083},
  \url{https://doi.org/10.48550/arXiv.2410.00083}

\bibitem{De_Vita_2025_WACV}
De~Vita, M., Belagiannis, V.: Diffusion model guided sampling with pixel-wise
  aleatoric uncertainty estimation. In: Proceedings of the Winter Conference on
  Applications of Computer Vision (WACV). pp. 3844--3854 (February 2025)

\bibitem{dytso2020general}
Dytso, A., Poor, H.V., Shitz, S.S.: A general derivative identity for the
  conditional mean estimator in gaussian noise and some applications. In: 2020
  IEEE International Symposium on Information Theory (ISIT). pp. 1183--1188.
  IEEE (2020)

\bibitem{efron2011tweedie}
Efron, B.: Tweedie’s formula and selection bias. Journal of the American
  Statistical Association  \textbf{106}(496),  1602--1614 (2011)

\bibitem{ho20}
Ho, J., Jain, A., Abbeel, P.: Denoising diffusion probabilistic models. In:
  Larochelle, H., Ranzato, M., Hadsell, R., Balcan, M., Lin, H. (eds.) Advances
  in Neural Information Processing Systems. vol.~33, pp. 6840--6851. Curran
  Associates, Inc. (2020),
  \url{https://proceedings.neurips.cc/paper_files/paper/2020/file/4c5bcfec8584af0d967f1ab10179ca4b-Paper.pdf}

\bibitem{jazbec25aUncertainty}
Jazbec, M., Wong-Toi, E., Xia, G., Zhang, D., Nalisnick, E., Mandt, S.:
  Generative uncertainty in diffusion models. In: Chiappa, S., Magliacane, S.
  (eds.) Proceedings of the Forty-first Conference on Uncertainty in Artificial
  Intelligence. Proceedings of Machine Learning Research, vol.~286, pp.
  1837--1858. PMLR (21--25 Jul 2025),
  \url{https://proceedings.mlr.press/v286/jazbec25a.html}

\bibitem{Karras_Elucidating}
Karras, T., Aittala, M., Aila, T., Laine, S.: Elucidating the design space of
  diffusion-based generative models. In: Koyejo, S., Mohamed, S., Agarwal, A.,
  Belgrave, D., Cho, K., Oh, A. (eds.) Advances in Neural Information
  Processing Systems. vol.~35, pp. 26565--26577. Curran Associates, Inc. (2022)

\bibitem{Kingma21_vdm}
Kingma, D., Salimans, T., Poole, B., Ho, J.: Variational diffusion models. In:
  Ranzato, M., Beygelzimer, A., Dauphin, Y., Liang, P., Vaughan, J.W. (eds.)
  Advances in Neural Information Processing Systems. vol.~34, pp. 21696--21707.
  Curran Associates, Inc. (2021),
  \url{https://proceedings.neurips.cc/paper_files/paper/2021/file/b578f2a52a0229873fefc2a4b06377fa-Paper.pdf}

\bibitem{liu2025regionadaptive}
Liu, Z., Yang, Y., Zhang, C., Zhang, Y., Qiu, L., You, Y., Yang, Y.:
  Region-adaptive sampling for diffusion transformers (2025),
  \url{https://openreview.net/forum?id=gCZYei3PLL}

\bibitem{lu2022dpmsolver}
Lu, C., Zhou, Y., Bao, F., Chen, J., Li, C., Zhu, J.: Dpm-solver: A fast ode
  solver for diffusion probabilistic model sampling in around 10 steps. In:
  Advances in Neural Information Processing Systems (2022)

\bibitem{ludpm++}
Lu, C., Zhou, Y., Bao, F., Chen, J., Li, C., Zhu, J.: Dpm-solver++: Fast solver
  for guided sampling of diffusion probabilistic models. Machine Intelligence
  Research  \textbf{22}(4),  730–751 (Jun 2025).
  \doi{10.1007/s11633-025-1562-4},
  \url{http://dx.doi.org/10.1007/s11633-025-1562-4}

\bibitem{manor2024posterior}
Manor, H., Michaeli, T.: On the posterior distribution in denoising:
  Application to uncertainty quantification. In: International Conference on
  Learning Representations. vol.~2024, pp. 49233--49263 (2024)

\bibitem{meng2021estimating}
Meng, C., Song, Y., Li, W., Ermon, S.: Estimating high order gradients of the
  data distribution by denoising. In: Advances in Neural Information Processing
  Systems. vol.~34, pp. 25364--25377 (2021)

\bibitem{nichol2021improved}
Nichol, A.Q., Dhariwal, P.: Improved denoising diffusion probabilistic models
  (2021), \url{https://openreview.net/forum?id=-NEXDKk8gZ}

\bibitem{ou2025improving}
Ou, Z., Zhang, M., Zhang, A., Xiao, T.Z., Li, Y., Barber, D.: Improving
  probabilistic diffusion models with optimal diagonal covariance matching. In:
  The Thirteenth International Conference on Learning Representations (2025),
  \url{https://openreview.net/forum?id=fV0t65OBUu}

\bibitem{peebles2023scalable}
Peebles, W., Xie, S.: Scalable diffusion models with transformers. In:
  Proceedings of the IEEE/CVF International Conference on Computer Vision. pp.
  4195--4205 (2023)

\bibitem{sahoo2024mulan}
Sahoo, S.S., Gokaslan, A., De, C., Kuleshov, V.: Diffusion models with learned
  adaptive noise. In: Globerson, A., Mackey, L., Belgrave, D., Fan, A., Paquet,
  U., Tomczak, J., Zhang, C. (eds.) Advances in Neural Information Processing
  Systems. vol.~37, pp. 105730--105779. Curran Associates, Inc. (2024).
  \doi{10.52202/079017-3354}

\bibitem{schusterbauer2026patchforcing}
Schusterbauer, J., Gui, M., Li, Y., Ma, P., Krause, F., Ommer, B.: Denoising,
  fast and slow: Difficulty-aware adaptive sampling for image generation. In:
  Proceedings of the IEEE/CVF Conference on Computer Vision and Pattern
  Recognition (2026)

\bibitem{song2021ddim}
Song, J., Meng, C., Ermon, S.: Denoising diffusion implicit models. In:
  International Conference on Learning Representations (2021),
  \url{https://openreview.net/forum?id=St1giarCHLP}

\bibitem{song2024high_frequency}
Song, J., He, J., Yang, L., Feng, M., Wang, K.: High frequency matters:
  Uncertainty guided image compression with wavelet diffusion. arXiv preprint
  arXiv:2407.12538  (2024)

\bibitem{song2021scorebased}
Song, Y., Sohl-Dickstein, J., Kingma, D.P., Kumar, A., Ermon, S., Poole, B.:
  Score-based generative modeling through stochastic differential equations.
  In: International Conference on Learning Representations (2021),
  \url{https://openreview.net/forum?id=PxTIG12RRHS}

\bibitem{vincent2011connection}
Vincent, P.: A connection between score matching and denoising autoencoders.
  Neural Computation  \textbf{23}(7),  1661--1674 (2011)

\end{thebibliography}

% ===========================================================================
%  APPENDIX
% 
% ===========================================================================
% =====================================================================
\newpage

%%%%%%%%%%%%%%%%%%%%%%%%%%%%%%%%%%%%%%%%%%%%%%%%%%%%%%%%%%%%

\appendix
\nolinenumbers
\pagenumbering{arabic}
\setcounter{page}{1}
\setcounter{figure}{0}
\setcounter{table}{0}
\setcounter{equation}{0}
\setcounter{proposition}{0}
\renewcommand{\thefigure}{A\arabic{figure}}
\renewcommand{\thetable}{A\arabic{table}}
\renewcommand{\theequation}{A\arabic{equation}}
\renewcommand{\theproposition}{A\arabic{proposition}}

% ----------------------------------------------------------------------------
\begin{large}
    \textbf{Appendix for ``Spatially Adaptive Noise Injection''}
\end{large}

% \etocdepthtag.toc{mtappendix}
% \etocsettagdepth{mtchapter}{none}
% \etocsettagdepth{mtappendix}{subsection}
% \etocsettocstyle{}{}% optional: drop the automatic "Table of Contents" heading
% {\small \tableofcontents}

\section{Notation}
\label{app:notation}
% ----------------------------------------------------------------------------
Let $\mathbf{x}=(x^1,\ldots,x^D)^\top\in\mathbb{R}^D$ denote a point in $D$-dimensional Euclidean space (represented as a column vector), and let $\mathrm{Tr}(\mathbf{A})=\sum_i A_{ii}$ denote the trace of $\mathbf{A}\in\mathbb{R}^{k\times k}$.  For a scalar function $f:\mathbb{R}^D\to\mathbb{R}$, $\nabla_{\mathbf{x}}f(\mathbf{x})\in\mathbb{R}^D$ is the gradient and $\nabla^2_{\mathbf{x}}f(\mathbf{x})\in\mathbb{R}^{D\times D}$ is the Hessian. For a vector-valued function $f:\mathbb{R}^k\to\mathbb{R}^m$, $\partial f/\partial\mathbf{x}\in\mathbb{R}^{m\times k}$ is the Jacobian. Throughout we write $\bar\alpha_t=\prod_{s=1}^t\alpha_s$, $\tilde\beta_t=(1-\bar\alpha_{t-1})\beta_t/(1-\bar\alpha_t)$, and $c_t=\sqrt{\bar\alpha_{t-1}}\,\beta_t/(1-\bar\alpha_t)$.
% ----------------------------------------------------------------------------
\section{Recovering DDPM from the Generalized Update}
\label{app:ddpm_recovery}
% ----------------------------------------------------------------------------

\begin{proposition}
Setting $\sigma_t^2=\frac{1-\bar\alpha_{t-1}}{1-\bar\alpha_t}\bigl(1-\frac{\bar\alpha_t}{\bar\alpha_{t-1}}\bigr)$ in the generalized update Eq.~\ref{eq:general_update} yields $\sigma_t^2=\tilde\beta_t$.
\end{proposition}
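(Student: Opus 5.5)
The statement is a direct algebraic identity, so the plan is simply to rewrite the proposed $\sigma_t^2$ using the schedule definitions of \cref{sec:ddpm} until it matches the definition of $\tilde\beta_t$.

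First, the ratio $\bar\alpha_t/\bar\alpha_{t-1}$ telescopes. Since $\bar\alpha_t=\prod_{s=1}^t\alpha_s$, we have $\bar\alpha_t=\alpha_t\,\bar\alpha_{t-1}$. Hence, for $t\ge 1$ (with the convention $\bar\alpha_0=1$, so that $\bar\alpha_{t-1}>0$),
\[
1-\frac{\bar\alpha_t}{\bar\alpha_{t-1}} \;=\; 1-\alpha_t \;=\; \beta_t .
\]

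Next, I substitute this into the prefactor form of $\sigma_t^2$:
\[
\sigma_t^2 \;=\; \frac{1-\bar\alpha_{t-1}}{1-\bar\alpha_t}\,\beta_t .
\]
This is exactly the definition of $\tilde\beta_t$ given in \cref{sec:ddpm} and in App.~\ref{app:notation}.

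As a consistency check, I will confirm that the direction coefficient in Eq.~\eqref{eq:general_update} stays real. Using $\beta_t\le 1$ and $1-\bar\alpha_t>0$,
\[
1-\bar\alpha_{t-1}-\tilde\beta_t \;=\; (1-\bar\alpha_{t-1})\Bigl(1-\frac{\beta_t}{1-\bar\alpha_t}\Bigr) \;=\; (1-\bar\alpha_{t-1})\,\frac{\alpha_t(1-\bar\alpha_{t-1})}{1-\bar\alpha_t} \;\ge\; 0 .
\]
The second equality uses $1-\bar\alpha_t-\beta_t=\alpha_t-\bar\alpha_t=\alpha_t(1-\bar\alpha_{t-1})$. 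So the choice $\sigma_t^2=\tilde\beta_t$ is admissible in Eq.~\eqref{eq:general_update}, and the update recovers DDPM with the $\tilde\beta_t$ variance.

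There is no genuine obstacle here. The only point needing care is the boundary case $t=1$: there $\bar\alpha_0=1$ gives $\tilde\beta_1=0$, which is consistent with $\mathbf{z}=\mathbf{0}$ at $t=1$.
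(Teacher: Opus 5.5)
Your proposal is correct and follows the same route as the paper's proof: use $\bar\alpha_t=\alpha_t\bar\alpha_{t-1}$ to get $1-\bar\alpha_t/\bar\alpha_{t-1}=\beta_t$, then substitute into the prefactor to recover $\tilde\beta_t$. Your admissibility check $1-\bar\alpha_{t-1}-\tilde\beta_t=\alpha_t(1-\bar\alpha_{t-1})^2/(1-\bar\alpha_t)\geq 0$ and the $t=1$ remark are correct extras that the paper does not include.
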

\begin{proof}
From $\bar\alpha_t=\bar\alpha_{t-1}\,\alpha_t$, we have $1-\bar\alpha_t/\bar\alpha_{t-1}=1-\alpha_t=\beta_t$, and substitution gives $\sigma_t^2=\frac{1-\bar\alpha_{t-1}}{1-\bar\alpha_t}\beta_t=\tilde\beta_t$.
\end{proof}
\noindent\textit{Remark.}
The DDIM paper \citep{song2021ddim} uses $\alpha_t$ to denote the cumulative product, which corresponds to $\bar\alpha_t$ in this work, with the convention $\alpha_0:=1$.  Eq.~\ref{eq:general_update} is presented to ensure consistency with the main text.
%
% ----------------------------------------------------------------------------
\section{Exponential-Family Structure of the Denoising Distribution}
\label{app:exp_family}
% ----------------------------------------------------------------------------
%
\begin{proposition}[Exponential Family]
\label{prop:exp_family}
The denoising distribution $p(\mathbf{x}_0\mid\mathbf{x}_t)$ belongs to an exponential family with canonical form 
$$p(\mathbf{x}_0\mid\mathbf{x}_t)=h(\mathbf{x}_0)\exp\bigl(\boldsymbol{\eta}(\mathbf{x}_t,t)^\top\mathbf{T}(\mathbf{x}_0)-\psi(\mathbf{x}_t,t)\bigr),$$ 
where
\begin{align*}
  h(\mathbf{x}_0)&=q(\mathbf{x}_0),\quad
  \boldsymbol{\eta}(\mathbf{x}_t,t)=\Bigl(\tfrac{\sqrt{\bar\alpha_t}}{1-\bar\alpha_t}\mathbf{x}_t,\;-\tfrac{\bar\alpha_t}{2(1-\bar\alpha_t)}\Bigr),\\
  \mathbf{T}(\mathbf{x}_0)&=(\mathbf{x}_0,\,\|\mathbf{x}_0\|^2),\quad
  \psi(\mathbf{x}_t,t)=\log p_t(\mathbf{x}_t)+\tfrac{D}{2}\log(2\pi(1-\bar\alpha_t))+\tfrac{\|\mathbf{x}_t\|^2}{2(1-\bar\alpha_t)}.
\end{align*}
\end{proposition}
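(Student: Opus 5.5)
The plan is to apply Bayes' rule and then expand the Gaussian forward kernel until its exponent splits into the canonical exponential-family form. Write
\[
p(\mathbf{x}_0\mid\mathbf{x}_t)=\frac{q(\mathbf{x}_t\mid\mathbf{x}_0)\,q(\mathbf{x}_0)}{p_t(\mathbf{x}_t)},
\]
where $p_t(\mathbf{x}_t)=\int q(\mathbf{x}_t\mid\mathbf{x}_0)q(\mathbf{x}_0)\,d\mathbf{x}_0$ is the marginal. Using the closed-form marginal $q(\mathbf{x}_t\mid\mathbf{x}_0)=\mathcal{N}(\sqrt{\bar\alpha_t}\mathbf{x}_0,(1-\bar\alpha_t)\mathbf{I})$ from Sec.~\ref{sec:ddpm}, the kernel is
\[
(2\pi(1-\bar\alpha_t))^{-D/2}\exp\!\Bigl(-\tfrac{\|\mathbf{x}_t-\sqrt{\bar\alpha_t}\mathbf{x}_0\|^2}{2(1-\bar\alpha_t)}\Bigr).
\]

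Next I expand the square:
\[
\|\mathbf{x}_t-\sqrt{\bar\alpha_t}\mathbf{x}_0\|^2=\|\mathbf{x}_t\|^2-2\sqrt{\bar\alpha_t}\,\mathbf{x}_t^\top\mathbf{x}_0+\bar\alpha_t\|\mathbf{x}_0\|^2 .
\]
Dividing by $-2(1-\bar\alpha_t)$, the terms involving $\mathbf{x}_0$ become
\[
\tfrac{\sqrt{\bar\alpha_t}}{1-\bar\alpha_t}\mathbf{x}_t^\top\mathbf{x}_0-\tfrac{\bar\alpha_t}{2(1-\bar\alpha_t)}\|\mathbf{x}_0\|^2 .
\]
This is exactly $\boldsymbol{\eta}^\top\mathbf{T}(\mathbf{x}_0)$ with the stated $\boldsymbol{\eta}=(\tfrac{\sqrt{\bar\alpha_t}}{1-\bar\alpha_t}\mathbf{x}_t,\,-\tfrac{\bar\alpha_t}{2(1-\bar\alpha_t)})$ and $\mathbf{T}=(\mathbf{x}_0,\|\mathbf{x}_0\|^2)$. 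The factor $q(\mathbf{x}_0)$ does not depend on $\mathbf{x}_t$, so it becomes the base measure $h$.

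The remaining factors depend only on $(\mathbf{x}_t,t)$: the term $-\|\mathbf{x}_t\|^2/(2(1-\bar\alpha_t))$ in the exponent, the normalizing constant $(2\pi(1-\bar\alpha_t))^{-D/2}$, and the division by $p_t(\mathbf{x}_t)$. Writing their product as $\exp(-\psi)$ gives
\[
\psi=\log p_t(\mathbf{x}_t)+\tfrac{D}{2}\log(2\pi(1-\bar\alpha_t))+\tfrac{\|\mathbf{x}_t\|^2}{2(1-\bar\alpha_t)},
\]
which matches the statement.

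The main point needing care is that $\psi$ is a genuine log-partition function, i.e.\ $\psi=\log\int h\,e^{\boldsymbol{\eta}^\top\mathbf{T}}\,d\mathbf{x}_0$. This follows because $p_t(\mathbf{x}_t)$ is, by definition, the normalizer of the Bayes posterior, and $p_t>0$ everywhere since the kernel is Gaussian. Finiteness requires $\int q(\mathbf{x}_0)e^{\boldsymbol{\eta}^\top\mathbf{T}}\,d\mathbf{x}_0<\infty$. This holds because the second natural parameter is negative, so $e^{\boldsymbol{\eta}^\top\mathbf{T}}$ is bounded in $\mathbf{x}_0$ and the integral is at most a constant times $\int q(\mathbf{x}_0)\,d\mathbf{x}_0=1$.

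Strictly speaking, the second component of $\boldsymbol{\eta}$ is fixed at each $t$, so the family indexed by $\mathbf{x}_t$ is a curved or subfamily slice of the full family. I would remark that it is effectively a linear exponential family in $\mathbf{x}_t$, with sufficient statistic $\mathbf{x}_0$ and base measure $q(\mathbf{x}_0)e^{-\bar\alpha_t\|\mathbf{x}_0\|^2/(2(1-\bar\alpha_t))}$. This is the reading used later to obtain the moment identities for Prop.~\ref{prop:fim}.
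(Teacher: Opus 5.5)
Your proposal is correct and follows the paper's proof essentially line for line: Bayes' rule, expanding the Gaussian forward kernel, and reading off $h$, $\boldsymbol{\eta}$, $\mathbf{T}$ and $\psi$. Your extra checks go slightly beyond what the paper verifies, and both are valid: finiteness of the log-partition function via the negative quadratic coefficient, and the remark that fixing the second natural parameter gives a linear family in $\mathbf{x}_t$ with base measure $q(\mathbf{x}_0)e^{-\bar\alpha_t\|\mathbf{x}_0\|^2/(2(1-\bar\alpha_t))}$.
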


\begin{proof}
By Bayes' rule $p(\mathbf{x}_0\mid\mathbf{x}_t)=p(\mathbf{x}_t\mid\mathbf{x}_0)q(\mathbf{x}_0)/p_t(\mathbf{x}_t)$, with the Gaussian forward kernel $p(\mathbf{x}_t\mid\mathbf{x}_0)=\mathcal{N}(\sqrt{\bar\alpha_t}\,\mathbf{x}_0,(1-\bar\alpha_t)\mathbf{I})$,  $ q $ the data distribution, and and $ p_t(\mathbf{x}_t) = \int p(\mathbf{x}_t|\mathbf{x}_0)q(\mathbf{x}_0)d\mathbf{x}_0 $ is the marginal distribution at time $ t $. Expanding and collecting terms in~$\mathbf{x}_0$:
\begin{align*}
p(\mathbf{x}_t\mid\mathbf{x}_0)
=\exp\Bigl(-\tfrac{D}{2}\log(2\pi(1-\bar\alpha_t))-\tfrac{\|\mathbf{x}_t\|^2}{2(1-\bar\alpha_t)}\Bigr)
\exp\Bigl(-\tfrac{\bar\alpha_t}{2(1-\bar\alpha_t)}\|\mathbf{x}_0\|^2+\tfrac{\sqrt{\bar\alpha_t}}{1-\bar\alpha_t}\mathbf{x}_t^\top\mathbf{x}_0\Bigr).
\end{align*}
Substituting and identifying with the canonical form gives the stated parameters.
\end{proof}

% ----------------------------------------------------------------------------
\section{Proof of \cref{prop:fim}: FIM-Jacobian-posterior Covariance Correspondence}
\label{app:fim}
% ----------------------------------------------------------------------------

\begin{proof}
For an exponential family with natural parameters $\boldsymbol{\eta}$, the FIM with respect to $\boldsymbol{\eta}$ equals the covariance of the sufficient statistics: $\mathcal{I}(\boldsymbol{\eta})=\mathrm{Cov}[\mathbf{T}(\mathbf{x}_0)\mid\mathbf{x}_t]$. Since $\mathbf{x}_t$ parameterizes the distribution via $\boldsymbol{\eta}(\mathbf{x}_t)$, the change-of-variables formula gives
\begin{equation}\label{eq:app_fim_chain}
  \mathcal{I}(\mathbf{x}_t)=\bigl(\nabla_{\mathbf{x}_t}\boldsymbol{\eta}\bigr)^\top\mathcal{I}(\boldsymbol{\eta})\bigl(\nabla_{\mathbf{x}_t}\boldsymbol{\eta}\bigr).
\end{equation}
From \cref{prop:exp_family}, the Jacobian of the first component of~$\boldsymbol{\eta}$ with respect to~$\mathbf{x}_t$ is $\frac{\sqrt{\bar\alpha_t}}{1-\bar\alpha_t}\mathbf{I}_D$ and the second component is constant in~$\mathbf{x}_t$. The relevant block of $\mathcal{I}(\boldsymbol{\eta})$ is $\mathrm{Cov}[\mathbf{x}_0\mid\mathbf{x}_t]$. Substituting into~\eqref{eq:app_fim_chain}:
\begin{equation}\label{eq:app_fim_cov}
  \mathcal{I}(\mathbf{x}_t)=\frac{\bar\alpha_t}{(1-\bar\alpha_t)^2}\,\mathrm{Cov}[\mathbf{x}_0\mid\mathbf{x}_t].
\end{equation}
By Tweedie's identity~\citep{efron2011tweedie}, $\mathrm{Cov}[\mathbf{x}_0\mid\mathbf{x}_t]=\frac{1-\bar\alpha_t}{\sqrt{\bar\alpha_t}}\,\mathbf{J}_{\hat{\mathbf{x}}_0}(\mathbf{x}_t)$. Substituting into~\eqref{eq:app_fim_cov} yields the result.
\end{proof}

\paragraph{Trace identity.} Taking the trace of \eqref{eq:app_fim_cov} yields
\begin{equation}\label{eq:app_trace_mse}
  \mathrm{Tr}\bigl(\mathcal{I}(\mathbf{x}_t)\bigr)
  =\frac{\bar\alpha_t}{(1-\bar\alpha_t)^2}\,\mathrm{MSE}(\mathbf{x}_t),
\end{equation}
where $\mathrm{MSE}(\mathbf{x}_t)\coloneqq\mathbb{E}[\|\mathbf{x}_0-\hat{\mathbf{x}}_0\|^2\mid\mathbf{x}_t]=\mathrm{Tr}(\mathrm{Cov}[\mathbf{x}_0\mid\mathbf{x}_t])$ is the optimal reconstruction error. Eq.~\ref{eq:app_trace_mse} certifies that the total Fisher information is proportional to the MMSE, so regions of large $\mathrm{Tr}(\mathcal{I})$ are exactly the regions where the model is uncertain about the underlying signal.

\paragraph{Relation to known identities.} Combining Eq.~\eqref{eq:app_fim_cov} with Tweedie's identity recovers, in the diffusion parameterization, the general derivative identity for the conditional mean estimator in Gaussian noise of Dytso\etal~\cite{dytso2020general}, which relates derivatives of the posterior mean to posterior moments and Manor\etal~\cite{manor2024posterior} exploit the same relation to compute per-pixel posterior moments from a pretrained denoiser for uncertainty quantification. Proposition~\ref{prop:fim} adds the information-geometric interpretation of this quantity as the Fisher Information of the denoising distribution.
% ----------------------------------------------------------------------------
\section{PSD Condition and Self-Limiting Violations}
\label{app:psd}
% ----------------------------------------------------------------------------

\begin{assumption}[PSD Condition]
\label{ass:psd}
For the true log-marginal Hessian $\mathbf{H}_t^\star(\mathbf{x}_t)=\nabla^2_{\mathbf{x}_t}\log p_t(\mathbf{x}_t)$, all eigenvalues satisfy $\lambda_{\min}(\mathbf{H}_t^\star)\geq -(1-\bar\alpha_t)^{-1}$.
\end{assumption}

Since $\mathrm{Cov}[\mathbf{x}_0\mid\mathbf{x}_t]$ is PSD by definition and the prefactor $(1-\bar\alpha_t)/\bar\alpha_t$ in Eq.~\ref{eq:cov_hessian} is positive, $\mathbf{I}_D+(1-\bar\alpha_t)\mathbf{H}_t^\star$ must be PSD, which is equivalent to $\lambda_{\min}(\mathbf{H}_t^\star)\geq-(1-\bar\alpha_t)^{-1}$. Assumption~\ref{ass:psd} is therefore not an additional restriction on the data distribution; it is a structural consequence of the forward process. Its role is to make explicit the spectral condition that the true Hessian automatically satisfies, so that we can diagnose when a learned approximation $\mathbf{H}_t\approx\mathbf{H}_t^\star$ may violate it.

\begin{proposition}[Self-Limiting PSD Violations]
\label{prop:self_limit}
Let $\delta_i\coloneqq[\mathbf{H}_t]_{ii}-[\mathbf{H}_t^\star]_{ii}$ denote the per-pixel approximation error. A PSD violation at pixel~$i$ (i.e., $v_i<0$ in Eq.~\ref{eq:v_i}) requires $[\mathbf{H}_t^\star]_{ii}+\delta_i<-(1-\bar\alpha_t)^{-1}$. For bounded error $|\delta_i|\leq\Delta$, the true per-pixel variance at any violating pixel satisfies
\begin{equation}\label{eq:app_self_limit}
  v_i^\star=\frac{1-\bar\alpha_t}{\bar\alpha_t}\bigl(1+(1-\bar\alpha_t)[\mathbf{H}_t^\star]_{ii}\bigr)
  \;\leq\;\frac{(1-\bar\alpha_t)^2}{\bar\alpha_t}\,\Delta.
\end{equation}
\end{proposition}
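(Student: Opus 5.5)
The argument is a direct algebraic comparison between the approximate and true per-pixel variances, using only Eq.~\ref{eq:v_i} and the definition of $\delta_i$. I will write $v_i$ for the quantity obtained by plugging the approximate Hessian $\mathbf{H}_t$ into Eq.~\ref{eq:v_i}, and $v_i^\star$ for the same expression evaluated at $\mathbf{H}_t^\star$. Since $[\mathbf{H}_t]_{ii}=[\mathbf{H}_t^\star]_{ii}+\delta_i$, linearity of Eq.~\ref{eq:v_i} in the Hessian diagonal gives the exact relation
\begin{equation*}
  v_i \;=\; v_i^\star + \frac{(1-\bar\alpha_t)^2}{\bar\alpha_t}\,\delta_i .
\end{equation*}
This identity is the whole engine of the proof.

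First, I will record the characterization of a violation. Because the prefactor $(1-\bar\alpha_t)/\bar\alpha_t$ is strictly positive for $t\ge1$, we have $v_i<0$ if and only if $1+(1-\bar\alpha_t)([\mathbf{H}_t^\star]_{ii}+\delta_i)<0$. Dividing by $1-\bar\alpha_t>0$, this is exactly the stated condition $[\mathbf{H}_t^\star]_{ii}+\delta_i<-(1-\bar\alpha_t)^{-1}$.

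Second, at a violating pixel, $v_i<0$ combined with the identity gives
\begin{equation*}
  v_i^\star \;<\; -\frac{(1-\bar\alpha_t)^2}{\bar\alpha_t}\,\delta_i \;\le\; \frac{(1-\bar\alpha_t)^2}{\bar\alpha_t}\,|\delta_i| \;\le\; \frac{(1-\bar\alpha_t)^2}{\bar\alpha_t}\,\Delta ,
\end{equation*}
which is Eq.~\eqref{eq:app_self_limit}, in fact with strict inequality.

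Finally, I will add a consistency remark. By Assumption~\ref{ass:psd}, or directly because $v_i^\star=[\mathrm{Cov}[\mathbf{x}_0\mid\mathbf{x}_t]]_{ii}\ge0$, we know $v_i^\star\ge0$. Hence a violation can only occur when $\delta_i<0$, and $v_i^\star$ then lies in $[0,\,(1-\bar\alpha_t)^2\Delta/\bar\alpha_t]$. This is the precise sense in which violations happen only where the true variance is already near zero, which justifies clamping such pixels to deterministic dynamics.

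I expect no real obstacle here. The only point needing care is the sign bookkeeping when dividing by $1-\bar\alpha_t$ and by $\bar\alpha_t$, which requires $0<\bar\alpha_t<1$; this holds for every $t\ge1$ under the schedule $\beta_t\in(0,1)$.
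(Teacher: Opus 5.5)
Your proof is correct and is essentially the paper's argument: the paper also substitutes $[\mathbf{H}_t]_{ii}=[\mathbf{H}_t^\star]_{ii}+\delta_i$ into the violation condition, bounds $-(1-\bar\alpha_t)\delta_i\le(1-\bar\alpha_t)\Delta$, and then multiplies by the positive prefactor, whereas you apply the prefactor first to get the identity $v_i=v_i^\star+\frac{(1-\bar\alpha_t)^2}{\bar\alpha_t}\delta_i$. Your step deriving $\delta_i<0$ from $v_i^\star\ge 0$ is a slightly cleaner justification of the paper's informal remark that a violation requires negative $\delta_i$.
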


\begin{proof}
A PSD violation means $1+(1-\bar\alpha_t)[\mathbf{H}_t]_{ii}<0$. Writing $[\mathbf{H}_t]_{ii}=[\mathbf{H}_t^\star]_{ii}+\delta_i$:
\[
  1+(1-\bar\alpha_t)[\mathbf{H}_t^\star]_{ii}<-(1-\bar\alpha_t)\delta_i\leq(1-\bar\alpha_t)|\delta_i|\leq(1-\bar\alpha_t)\Delta,
\]
since a violation requires $\delta_i$ negative with $|\delta_i|$ large. Multiplying by the positive prefactor yields~\eqref{eq:app_self_limit}.
\end{proof}

\paragraph{Interpretation.} At late timesteps ($1-\bar\alpha_t\ll 1$), the bound~\eqref{eq:app_self_limit} is extremely small: PSD violations can only arise where the true variance is already negligible. At early timesteps, the bound is loose, but the gate saturates at $g_{t,i}\approx 1$ for all pixels regardless, so the precise value of~$v_i$ is irrelevant. In both regimes, clamping $v_i\gets\max(v_i,\epsilon)$ correctly defaults violating pixels to near-deterministic dynamics. Empirically, the clamp is active for $<0.1\%$ of pixels at any timestep when using the Hessian network of Ou~\etal~\cite{ou2025improving}.

% ----------------------------------------------------------------------------
\section{Gating Function: Derivation and Justification of Assumptions}
\label{app:gating}
% ----------------------------------------------------------------------------

The following section restates the assumptions, provides justification, and proves the closed-form equation for the gating function (Eq.~\ref{prop:gating}).

\subsection{Assumptions}

\noindent\textbf{(A1) MMSE Optimality.} $\hat{x}_{0,i}\approx\mathbb{E}[X_{0,i}\mid\mathbf{x}_t]$, justified by the squared-error training objective $\mathcal{L}_{\mathrm{simple}}$, whose Bayes-optimal solution is the MMSE estimator,  $\hat{x}_{0,i} \approx \mathbb{E}[X_{0,i} \mid \mathbf{x}_t]$.

\noindent\textbf{(A2) Gaussian Marginal Residual.} $R_i\mid\mathbf{x}_t\approx\mathcal{N}(0,v_i)$ with $R_i\coloneqq X_{0,i}-\hat{x}_{0,i}$.

The denoiser network determines only the first two moments of~$R_i$, so~(A2) is a modeling choice (specifically, the maximum-entropy distribution given those moments) rather than a consequence of the model. The justification proceeds in two steps.

\subsection{Residual Moments}
\label{app:residual_moments}

\begin{lemma}[Residual Moments]
\label{lem:residual_moments}
Under~(A1), $\mathbb{E}[R_i\mid\mathbf{x}_t]=0$ and $\mathrm{Var}(R_i\mid\mathbf{x}_t)=v_i(\mathbf{x}_t,t)$.
\end{lemma}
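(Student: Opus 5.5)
The argument treats (A1) as an exact identity, $\hat{x}_{0,i}=\mathbb{E}[X_{0,i}\mid\mathbf{x}_t]$, and computes both conditional moments of $R_i=X_{0,i}-\hat{x}_{0,i}$ directly. The approximate version then follows by continuity, up to the denoiser's approximation error.

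The key observation is that $\hat{x}_{0,i}$ is a deterministic function of $\mathbf{x}_t$. Conditional on $\mathbf{X}_t=\mathbf{x}_t$, it therefore behaves as a constant.

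\textbf{First moment.} Linearity of conditional expectation gives
$\mathbb{E}[R_i\mid\mathbf{x}_t]=\mathbb{E}[X_{0,i}\mid\mathbf{x}_t]-\hat{x}_{0,i}$. This vanishes by (A1).

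\textbf{Second moment.} Shifting a random variable by a quantity that is constant given $\mathbf{x}_t$ does not change its conditional variance. Hence $\mathrm{Var}(R_i\mid\mathbf{x}_t)=\mathrm{Var}(X_{0,i}\mid\mathbf{x}_t)=\bigl[\mathrm{Cov}[\mathbf{x}_0\mid\mathbf{x}_t]\bigr]_{ii}$. Because the mean is zero, this also equals $\mathbb{E}[R_i^2\mid\mathbf{x}_t]$, the per-pixel MMSE.

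\textbf{Identification with $v_i$.} The diagonal entry $\bigl[\mathrm{Cov}[\mathbf{x}_0\mid\mathbf{x}_t]\bigr]_{ii}$ is exactly the definition of $v_i$ in Eq.~\ref{eq:v_i}. To confirm that the closed form on the right of Eq.~\ref{eq:v_i} is the same quantity, I would invoke Proposition~\ref{prop:fim} together with Tweedie's formula. Proposition~\ref{prop:fim} gives $\mathrm{Cov}[\mathbf{x}_0\mid\mathbf{x}_t]=\frac{1-\bar\alpha_t}{\sqrt{\bar\alpha_t}}\mathbf{J}_{\hat{\mathbf{x}}_0}$. Tweedie's formula gives $\mathbf{J}_{\hat{\mathbf{x}}_0}=\bar\alpha_t^{-1/2}(\mathbf{I}_D+(1-\bar\alpha_t)\mathbf{H}_t^\star)$. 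Multiplying these and taking the $(i,i)$ entry yields $\frac{1-\bar\alpha_t}{\bar\alpha_t}\bigl(1+(1-\bar\alpha_t)[\mathbf{H}_t^\star]_{ii}\bigr)$, which is Eq.~\ref{eq:v_i} with the true Hessian.

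\textbf{Expected obstacle.} The only delicate step is the distinction between the true quantities and the learned ones. The identity is exact for $\hat{x}_{0,i}=\mathbb{E}[X_{0,i}\mid\mathbf{x}_t]$ and for the true Hessian $\mathbf{H}_t^\star$. With the learned network and Hessian estimate it holds only approximately, consistent with the ``$\approx$'' in (A1). I would state this explicitly.

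I would also quantify the effect of a mean error. If $\hat{x}_{0,i}=\mathbb{E}[X_{0,i}\mid\mathbf{x}_t]+b_i$, then the conditional second moment becomes $v_i+b_i^2$. So the lemma degrades gracefully as the bias $b_i$ shrinks.
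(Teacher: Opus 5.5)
Your proof is correct and follows the paper's argument exactly. Treating $\hat{x}_{0,i}$ as a deterministic function of $\mathbf{x}_t$ gives the zero conditional mean from (A1) by linearity, and shift-invariance gives $\mathrm{Var}(R_i\mid\mathbf{x}_t)=\mathrm{Var}(X_{0,i}\mid\mathbf{x}_t)=v_i$ because $v_i$ is defined (via $\coloneqq$) as that diagonal entry. Your extra closed-form check via Proposition~\ref{prop:fim} and your bias remark ($\mathbb{E}[R_i^2\mid\mathbf{x}_t]=v_i+b_i^2$, while the variance itself stays $v_i$) are correct but not needed for the lemma.
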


\begin{proof}
Since $\hat{x}_{0,i}=\mathbb{E}[X_{0,i}\mid\mathbf{x}_t]$ is a deterministic function of~$\mathbf{x}_t$,
$\mathbb{E}[R_i\mid\mathbf{x}_t]=\mathbb{E}[X_{0,i}\mid\mathbf{x}_t]-\hat{x}_{0,i}=0$, and
$\mathrm{Var}(R_i\mid\mathbf{x}_t)=\mathrm{Var}(X_{0,i}\mid\mathbf{x}_t)=v_i$.
\end{proof}

Lemma~\ref{lem:residual_moments} fixes only the first two moments of~$R_i$ but infinitely many distributions are consistent with them. Step~2 justifies the Gaussian shape as a limit.

\subsection{Posterior Concentration as a Gaussian}
\label{app:gaussian_concentration}

\begin{lemma}[Posterior as Prior $\times$ Gaussian Kernel]
\label{lem:posterior_factorization}
The denoising posterior factorizes as
\begin{equation}\label{eq:app_posterior_factorization}
  p(\mathbf{x}_0\mid\mathbf{x}_t)
  =\frac{1}{Z(\mathbf{x}_t)}\,q(\mathbf{x}_0)\,\exp\Bigl(-\tfrac{\lambda_t}{2}\|\mathbf{x}_0-\boldsymbol{\mu}_t\|^2\Bigr),
\end{equation}
where $\lambda_t\coloneqq\bar\alpha_t/(1-\bar\alpha_t)$ is the precision, $\boldsymbol{\mu}_t\coloneqq\mathbf{x}_t/\sqrt{\bar\alpha_t}$, and $Z(\mathbf{x}_t)$ normalizes.
\end{lemma}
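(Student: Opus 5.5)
The plan is to start from Bayes' rule, exactly as in the proof of \cref{prop:exp_family}:
\[
p(\mathbf{x}_0\mid\mathbf{x}_t)=\frac{p(\mathbf{x}_t\mid\mathbf{x}_0)\,q(\mathbf{x}_0)}{p_t(\mathbf{x}_t)},
\qquad
p(\mathbf{x}_t\mid\mathbf{x}_0)=\mathcal{N}\bigl(\mathbf{x}_t;\sqrt{\bar\alpha_t}\,\mathbf{x}_0,(1-\bar\alpha_t)\mathbf{I}\bigr).
\]
All the work is in rewriting the Gaussian kernel as a function of $\mathbf{x}_0$. Everything that does not depend on $\mathbf{x}_0$ will then go into the normalizer $Z(\mathbf{x}_t)$.

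The key step is to factor $\sqrt{\bar\alpha_t}$ out of the exponent of the kernel:
\[
\|\mathbf{x}_t-\sqrt{\bar\alpha_t}\,\mathbf{x}_0\|^2=\bar\alpha_t\,\bigl\|\mathbf{x}_0-\mathbf{x}_t/\sqrt{\bar\alpha_t}\bigr\|^2=\bar\alpha_t\,\|\mathbf{x}_0-\boldsymbol{\mu}_t\|^2 .
\]
Dividing by $2(1-\bar\alpha_t)$ gives exactly $\tfrac{\lambda_t}{2}\|\mathbf{x}_0-\boldsymbol{\mu}_t\|^2$ with $\lambda_t=\bar\alpha_t/(1-\bar\alpha_t)$. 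This is an exact identity, so no completing of the square is needed.

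Next I collect the $\mathbf{x}_0$-independent factors: the Gaussian normalizing constant $(2\pi(1-\bar\alpha_t))^{-D/2}$ and the marginal $p_t(\mathbf{x}_t)$. I define
\[
Z(\mathbf{x}_t)=p_t(\mathbf{x}_t)\,\bigl(2\pi(1-\bar\alpha_t)\bigr)^{D/2}
=\int q(\mathbf{x}_0)\exp\bigl(-\tfrac{\lambda_t}{2}\|\mathbf{x}_0-\boldsymbol{\mu}_t\|^2\bigr)\,d\mathbf{x}_0 .
\]
The second expression confirms that $Z$ normalizes. It is finite and positive because the integrand is bounded by $q$ and $q$ is a probability density, with $Z>0$ whenever $p_t(\mathbf{x}_t)>0$, which holds for all $\mathbf{x}_t$ since the kernel is strictly positive.

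There is no real obstacle here. The only point needing care is to state the mild regularity condition that $q$ is a density, or, if $q$ is a general measure, to interpret the identity as a Radon–Nikodym derivative with respect to $q$. I would also remark that the result is consistent with the exponential-family form of \cref{prop:exp_family}: expanding the square recovers the natural parameters $\tfrac{\sqrt{\bar\alpha_t}}{1-\bar\alpha_t}\mathbf{x}_t$ and $-\tfrac{\lambda_t}{2}$, with $\psi=\log Z+\tfrac{\lambda_t}{2}\|\boldsymbol{\mu}_t\|^2$.
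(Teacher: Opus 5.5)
Your proof is correct and follows the paper's own argument: apply Bayes' rule with the Gaussian forward kernel, rewrite the exponent as $\frac{\lambda_t}{2}\|\mathbf{x}_0-\boldsymbol{\mu}_t\|^2$, and absorb every $\mathbf{x}_0$-independent factor into $Z(\mathbf{x}_t)$. Two of your points are accurate refinements rather than a different route: the rewriting is an exact identity, so the paper's constant $C(\mathbf{x}_t)$ from ``completing the square'' is in fact zero, and your explicit formula for $Z$ together with the check against the exponential-family $\psi$ is also correct.
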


\begin{proof}
By Bayes' rule $p(\mathbf{x}_0\mid\mathbf{x}_t)\propto q(\mathbf{x}_0)\,p(\mathbf{x}_t\mid\mathbf{x}_0)$, with $p(\mathbf{x}_t\mid\mathbf{x}_0)\propto\exp\!\bigl(-\|\mathbf{x}_t-\sqrt{\bar\alpha_t}\mathbf{x}_0\|^2/(2(1-\bar\alpha_t))\bigr)$. Completing the square in~$\mathbf{x}_0$:
\[
\frac{\|\mathbf{x}_t-\sqrt{\bar\alpha_t}\mathbf{x}_0\|^2}{2(1-\bar\alpha_t)}
=\frac{\bar\alpha_t}{2(1-\bar\alpha_t)}\|\mathbf{x}_0-\mathbf{x}_t/\sqrt{\bar\alpha_t}\|^2+C(\mathbf{x}_t),
\]
where $C(\mathbf{x}_t)$ is absorbed into~$Z(\mathbf{x}_t)$.
\end{proof}

\paragraph{Why this justifies~(A2).}
The decomposition~\eqref{eq:app_posterior_factorization} holds exactly for any data distribution~$q$. The Gaussian kernel arises from the forward process and is not an approximation. Assumption~(A2) states that this kernel dominates the prior in shaping the per-pixel marginal, a claim controlled by~$\lambda_t$. For large~$\lambda_t$ (late reverse timesteps), the kernel becomes high concentrated near~$\boldsymbol{\mu}_t$, allowing the smooth prior to be locally well-approximated by a constant and the per-pixel marginal then inherits the Gaussian shape of the kernel. For small~$\lambda_t$ (early timesteps), the prior dominates and the marginal may be strongly non-Gaussian. However, in this regime $v_i\gg\tau$ for all pixels and $g_{t,i}\approx 1$ universally (\cref{fig:perstep_stats}), making the gate insensitive to the distributional shape of~$R_i$. Thus, the gate relies on~(A2) only in the intermediate regime where it actively discriminates. Even in this case, \cref{sec:exp_residual} shows that the spatial gating pattern remains consistent across different residual distributions.

\subsection{Proof of \Cref{prop:gating}(Closed-Form Gating)}

\begin{proof}
Under~(A1), $\hat{x}_{0,i}=\mathbb{E}[X_{0,i}\mid\mathbf{x}_t]$, so the squared error equals $R_i^2$ and
\begin{equation}\label{eq:app_gating_step1}
  \mathbb{P}\!\bigl((X_{0,i}-\hat{x}_{0,i})^2>\tau\mid\mathbf{x}_t\bigr)
  =\mathbb{P}\!\bigl(|R_i|>\sqrt{\tau}\mid\mathbf{x}_t\bigr).
\end{equation}
Under~(A2) and \Cref{lem:residual_moments}, $R_i\mid\mathbf{x}_t\sim\mathcal{N}(0,v_i)$. The clamp $v_i\gets\max(v_i,\epsilon)$ guarantees $\sqrt{v_i}>0$, so we may define $Z\coloneqq R_i/\sqrt{v_i}\sim\mathcal{N}(0,1)$. Then $|R_i|>\sqrt{\tau}\iff|Z|>\sqrt{\tau/v_i}$, and by symmetry of the standard normal,
\begin{equation}\label{eq:app_two_tail}
  \mathbb{P}(|Z|>a)=2(1-\Phi(a))\quad\text{for any }a\geq 0.
\end{equation}
Combining~\eqref{eq:app_gating_step1} and~\eqref{eq:app_two_tail} with $a=\sqrt{\tau/v_i}$ yields Eq.~\ref{eq:gating_closed}.
\end{proof}

% ----------------------------------------------------------------------------
\section{Gating Under Alternative Residual Distributions}
\label{app:alt_gates}
% ----------------------------------------------------------------------------

Using the same gating function definition $g_{t,i} \;=\; \mathbb{P}\!\bigl(R_i^2 > \tau \mid \mathbf{x}_t\bigr) \;=\; \mathbb{P}\!\bigl(|R_i| > \sqrt{\tau} \mid \mathbf{x}_t\bigr)$, we re-derive the gate for an alternative residual model, the Laplace distribution parameterized to have zero mean and variance~$v_i$, matching \Cref{lem:residual_moments}. The Laplace distribution has sharper peak and heavier tails than the Gaussian.

\subsection{Laplace Residual}
\label{app:laplace}
\begin{proposition}[Laplace Gating]
\label{prop:gating_laplace}
If $R_i\mid\mathbf{x}_t\sim\mathrm{Laplace}(0,b_i)$ with $b_i=\sqrt{v_i/2}$ (matching $\mathrm{Var}(R_i)=v_i$), then
\begin{equation}\label{eq:app_gating_laplace}
  g_{t,i}^{\mathrm{Lap}}=\exp\!\bigl(-\sqrt{2\tau/v_i}\bigr).
\end{equation}
\end{proposition}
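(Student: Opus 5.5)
The argument mirrors the proof of \Cref{prop:gating}, with the Gaussian tail replaced by the Laplace tail. By the gating definition (\cref{def:gating}) and (A1), $g_{t,i}=\mathbb{P}(|R_i|>\sqrt{\tau}\mid\mathbf{x}_t)$, exactly as in Eq.~\eqref{eq:app_gating_step1}. The new ingredient is the distribution of $R_i$: a zero-mean Laplace law with scale $b_i$. The first step is to fix $b_i$ by moment matching. A $\mathrm{Laplace}(0,b)$ variable has variance $2b^2$. Equating this to $v_i$, which \Cref{lem:residual_moments} identifies as the conditional variance of $R_i$, gives $b_i=\sqrt{v_i/2}$. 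The clamp $v_i\gets\max(v_i,\epsilon)$ guarantees $b_i>0$, so the density is well defined.

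The second step is to compute the two-sided tail. The density is $\frac{1}{2b_i}e^{-|r|/b_i}$. Using symmetry, for any $a\ge 0$,
\begin{equation*}
  \mathbb{P}(|R_i|>a)=2\int_a^\infty \frac{1}{2b_i}e^{-r/b_i}\,dr=e^{-a/b_i}.
\end{equation*}
The third step substitutes $a=\sqrt{\tau}$ and $b_i=\sqrt{v_i/2}$, so that $a/b_i=\sqrt{\tau}\cdot\sqrt{2/v_i}=\sqrt{2\tau/v_i}$. This yields Eq.~\eqref{eq:app_gating_laplace}.

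No step poses a real obstacle. The only point needing care is the scale convention. The relation $\mathrm{Var}=2b^2$ holds when the density is written with $e^{-|r|/b}$ rather than a rate parameter, and choosing the matching scale is what places the factor $2$ inside the square root.

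It is also worth checking that the result has the same qualitative behavior as the Gaussian gate. It depends only on $\tau/v_i$. It tends to $1$ as $v_i\gg\tau$ and to $0$ as $v_i\ll\tau$. It decays only exponentially in $\sqrt{\tau/v_i}$, versus the Gaussian tail's $\exp(-\tau/(2v_i))$-type decay, which is the predicted difference in the sharpness of the transition.
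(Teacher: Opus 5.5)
Your proof is correct and matches the paper's argument. Both write $g_{t,i}=\mathbb{P}(|R_i|>\sqrt{\tau}\mid\mathbf{x}_t)$, use the two-sided Laplace survival function $\exp(-a/b_i)$, and substitute $a=\sqrt{\tau}$, $b_i=\sqrt{v_i/2}$; you additionally verify the moment matching $\mathrm{Var}(R_i)=2b_i^2=v_i$ and compute the tail integral explicitly, which the paper takes as standard.
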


\begin{proof}
The Laplace survival function is $\mathbb{P}(|R_i|>a)=\exp(-a/b_i)$ for $a\geq 0$. Setting $a=\sqrt{\tau}$ and substituting $b_i=\sqrt{v_i/2}$ yields Eq.~\ref{eq:app_gating_laplace}.
\end{proof}

\subsection{Comparison}

Both gates exhibit the same boundary behavior ($g\to 0$ as $v_i/\tau\to 0$, $g\to 1$ as $v_i/\tau\to\infty$) and differ only in their tail decay, as shown in~\Cref{tab:app_gates}. Heavier-tailed distributions assign more probability to large denoising errors, so for a given $\tau/v_i$ the ordering at large $\rho$ is $g^{\mathrm{Gauss}}<g^{\mathrm{Lap}}$ (for small~$\nu$). At moderate~$\rho$, the ordering is non-monotone. The Laplace distribution's sharper peak concentrates mass near zero, reducing the probability in the tails. For $\rho\lesssim 3$, the Gaussian gate is higher than the Laplace, with a crossover point where the heavier tails of the Laplace dominate.

\begin{table}[h]
\centering
\small
\caption{Gating functions under alternative residual models. $\rho\coloneqq\tau/v_i$, $\Phi$ is the standard normal CDF, $I_x$ is the regularized incomplete beta function.}
\label{tab:app_gates}
\begin{tabular}{lll}
\toprule
Distribution & Gating function $g_{t,i}$ & Tail decay ($\rho\to\infty$) \\
\midrule
Gaussian & $2(1-\Phi(\sqrt{\rho}))$ & $\sim e^{-\rho/2}/\sqrt{\rho}$ \\
Laplace  & $\exp(-\sqrt{2\rho})$ & $\sim e^{-\sqrt{2\rho}}$ \\
%Student-$t_\nu$ & $I_{(\nu-2)/((\nu-2)+\rho)}(\nu/2,1/2)$ & $\sim\rho^{-\nu/2}$ \\
\bottomrule
\end{tabular}
\end{table}

%The Gaussian gate (used by default) is closed-form in elementary terms and the Laplace gate is similarly cheap. 
Empirical results (\cref{sec:exp_residual}, \cref{tab:fid_all}) show that $\mathrm{SANI}^G$ and $\mathrm{SANI}^L$ achieve similar FID scores, confirming that the spatial gating structure is consistent across these choices.
 
% ----------------------------------------------------------------------------
\section{Proof of \Cref{prop:kl_opt} (KL-Optimal Per-Pixel Variance)}
\label{app:kl_derivation}
% ----------------------------------------------------------------------------

\paragraph{Setup.} The true reverse posterior at pixel~$i$, conditioned on~$\mathbf{x}_t$ and~$x_{0,i}$, is given by~\citep{ho20}
\begin{equation}\label{eq:app_true_reverse}
  q(x_{t-1,i}\mid\mathbf{x}_t,x_{0,i})=\mathcal{N}\!\bigl(\tilde\mu_{t,i}(\mathbf{x}_t,x_{0,i}),\,\tilde\beta_t\bigr),
\end{equation}
with $\tilde\mu_{t,i}(\mathbf{x}_t,x_0)=\frac{\sqrt{\bar\alpha_{t-1}}\beta_t}{1-\bar\alpha_t}x_{0,i}+\frac{\sqrt{\alpha_t}(1-\bar\alpha_{t-1})}{1-\bar\alpha_t}x_{t,i}$. The model transition is $p_\theta(x_{t-1,i}\mid\mathbf{x}_t)=\mathcal{N}(\hat\mu_{t,i},(\sigma_{t,i}^\theta)^2)$, where $\hat\mu_{t,i}$ is obtained by substituting $\hat{x}_{0,i}$ for $x_{0,i}$ in $\tilde\mu_{t,i}$.

\paragraph{Step 1: Mean-error decomposition.} Writing $x_{0,i}=\hat{x}_{0,i}+e_i$ with $e_i\coloneqq x_{0,i}-\hat{x}_{0,i}$:
\begin{equation}\label{eq:app_kl_mean_decomp}
  \tilde\mu_{t,i}=\hat\mu_{t,i}+c_t\,e_i,\qquad c_t=\tfrac{\sqrt{\bar\alpha_{t-1}}\beta_t}{1-\bar\alpha_t}.
\end{equation}
Under~(A1), $\mathbb{E}[e_i\mid\mathbf{x}_t]=0$ and $\mathbb{E}[e_i^2\mid\mathbf{x}_t]=v_i$. Therefore,
\begin{equation}\label{eq:app_kl_mean_sq}
  \mathbb{E}\!\bigl[(\tilde\mu_{t,i}-\hat\mu_{t,i})^2\mid\mathbf{x}_t\bigr]=c_t^2 v_i.
\end{equation}

\paragraph{Step 2: Expected KL.} For two univariate Gaussian distributions $\mathcal{N}(\mu_q,\sigma_q^2)$ and $\mathcal{N}(\mu_p,\sigma_p^2)$, the Kullback-Leibler divergence is given by $\mathrm{KL}=\log(\sigma_p/\sigma_q)+(\sigma_q^2+(\mu_q-\mu_p)^2)/(2\sigma_p^2)-1/2$. by setting $\mu_q=\tilde\mu_{t,i}$, $\sigma_q^2=\tilde\beta_t$, $\mu_p=\hat\mu_{t,i}$, $\sigma_p^2=(\sigma_{t,i}^\theta)^2$ and taking the expectation over $x_{0,i}\mid\mathbf{x}_t$, using Eq.~\ref{eq:app_kl_mean_sq}:
\begin{equation}\label{eq:app_expected_kl}
  \mathbb{E}[\mathrm{KL}]
  =\log\frac{\sigma_{t,i}^\theta}{\sqrt{\tilde\beta_t}}+\frac{\tilde\beta_t+c_t^2 v_i}{2(\sigma_{t,i}^\theta)^2}-\tfrac{1}{2}.
\end{equation}

\paragraph{Step 3: Optimization.} Differentiating Eq.~\ref{eq:app_expected_kl} with respect to $(\sigma_{t,i}^\theta)^2$:
\[
  \frac{\partial}{\partial(\sigma_{t,i}^\theta)^2}\mathbb{E}[\mathrm{KL}]
  =\frac{1}{2(\sigma_{t,i}^\theta)^2}-\frac{\tilde\beta_t+c_t^2 v_i}{2(\sigma_{t,i}^\theta)^4}.
\]
Setting the derivative to zero yields $(\sigma_{t,i}^\theta)^2_{\mathrm{opt}}=\tilde\beta_t+c_t^2 v_i=\gamma^*_{t,i}$, which is Eq.~\ref{eq:gamma_opt}. The second derivative at the optimum is $1/(2\gamma_{t,i}^{*4})>0$, confirming that this point is a minimum.

\paragraph{Consistency with prior work.} Averaging Eq.~\ref{eq:gamma_opt} over pixels and applying the score-Hessian relation recovers the scalar optimal variance of Bao \etal\cite{bao2022analytic}, up to the diagonal approximation. This derivation extends their result to a per-pixel quantity, which is the form required for spatially adaptive gating.

\paragraph{Recovery of standard samplers.} This update recovers DDIM in the joint limit $g_{t,i}=0$, $v_i=0$ (confident pixels), and the KL-optimal stochastic sampler when $g_{t,i}=1$ ($\Sigma_{t,i}^2=\gamma^*_{t,i}$). It also recovers DDPM when $g_{t,i}=1$ and $v_i=0$ (perfect denoiser, $\Sigma_{t,i}^2=\tilde\beta_t$). A spatially uniform gate $g_{t,i}=\eta$ implements the $\eta$-interpolation of~\cite{song2021ddim}, but with a per-pixel, data-aware variance instead of the fixed~$\tilde\beta_t$.

\paragraph{Scope of optimality.} As discussed in \cref{sec:method:variance}, the optimality of $\gamma^*_{t,i}$ concerns the fully stochastic per-pixel transition with the mean fixed at the Tweedie estimate. The deployed SANI variance $\Sigma_{t,i}^2=c_t^2 v_i+g_{t,i}\tilde\beta_t$ equals $\gamma^*_{t,i}$ only when $g_{t,i}=1$, and the admissibility clip of \cref{app:admissibility} may reduce it further. SANI as a whole therefore carries no global KL-optimality guarantee.

\paragraph{Relation to the DDIM family.}
For a scalar, state-independent noise scale, the non-Markovian family of~\cite{song2021ddim} preserves the marginals of the forward process. The SANI variance $\Sigma_{t,i}^2$ is per-pixel and depends on $\mathbf{x}_t$ through $v_i$ and $g_{t,i}$, so this marginal-preservation argument does not carry over, and we do not claim that the spatially adaptive transition matches the forward marginals. SANI should therefore be understood as a per-pixel update rule whose endpoints coincide with the DDIM and DDPM updates, supported empirically in \cref{sec:experiments}
%
% ----------------------------------------------------------------------------
\section{Mean-Error Identity and the Asymmetric Allocation}
\label{app:coupled}
% ----------------------------------------------------------------------------
%
The mean-error identity~(Eq.~\ref{eq:mean_err}) stated in the main text is proved here. This result underpins the rationale for applying the calibration term $c_t^2 v_i$ in $\Sigma_{t,i}^2$ uniformly, while only $\tilde\beta_t$ is gated.
\begin{proposition}[Mean-Error Identity]
\label{prop:mean_error}
Let $\hat\mu_{t,i}$ denote the model's posterior mean obtained by substituting the Tweedie estimate~$\hat{x}_{0,i}$ for~$x_{0,i}$ in the true posterior mean, and let $\tilde\mu_{t,i}$ denote the true posterior mean. Under~(A1),
\begin{equation}\label{eq:app_mean_err}
  \mathbb{E}\!\bigl[(\tilde\mu_{t,i}-\hat\mu_{t,i})^2\mid\mathbf{x}_t\bigr]=c_t^2\,v_i(\mathbf{x}_t,t).
\end{equation}
\end{proposition}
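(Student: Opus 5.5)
The plan is to exploit the fact that both posterior means are affine in the clean-data coordinate with the same coefficients, so their difference is linear in the denoising residual. We then take the conditional second moment using Lemma~\ref{lem:residual_moments}. This is the same computation as Step~1 of the proof of \Cref{prop:kl_opt}, isolated and stated as an identity.

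\textbf{Step 1: express the difference through the residual.} Recall the closed form of the true reverse mean from Eq.~\eqref{eq:app_true_reverse}:
\[
\tilde\mu_{t,i}=c_t\,x_{0,i}+\frac{\sqrt{\alpha_t}(1-\bar\alpha_{t-1})}{1-\bar\alpha_t}\,x_{t,i},
\]
with $c_t=\sqrt{\bar\alpha_{t-1}}\beta_t/(1-\bar\alpha_t)$. By definition, $\hat\mu_{t,i}$ is the same expression with $\hat{x}_{0,i}$ in place of $x_{0,i}$. The $x_{t,i}$ terms are identical and cancel, which gives
\[
\tilde\mu_{t,i}-\hat\mu_{t,i}=c_t\bigl(x_{0,i}-\hat{x}_{0,i}\bigr)=c_t R_i.
\]
Because $c_t$ depends only on the schedule and $\hat{x}_{0,i}$ is a deterministic function of $\mathbf{x}_t$, the only randomness left given $\mathbf{x}_t$ is in $X_{0,i}$.

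\textbf{Step 2: square and condition.} Squaring gives $c_t^2 R_i^2$, and taking the conditional expectation gives
\[
\mathbb{E}\bigl[(\tilde\mu_{t,i}-\hat\mu_{t,i})^2\mid\mathbf{x}_t\bigr]=c_t^2\,\mathbb{E}[R_i^2\mid\mathbf{x}_t].
\]
Under (A1), Lemma~\ref{lem:residual_moments} gives $\mathbb{E}[R_i\mid\mathbf{x}_t]=0$ and $\mathrm{Var}(R_i\mid\mathbf{x}_t)=v_i$. Hence $\mathbb{E}[R_i^2\mid\mathbf{x}_t]=\mathrm{Var}(R_i\mid\mathbf{x}_t)+0^2=v_i$, which proves Eq.~\eqref{eq:app_mean_err}.

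\textbf{Main obstacle: interpreting (A1) correctly.} If (A1) holds only approximately, write $b_i$ for the denoiser bias, $b_i=\mathbb{E}[X_{0,i}\mid\mathbf{x}_t]-\hat{x}_{0,i}$. The identity then becomes
\[
\mathbb{E}\bigl[(\tilde\mu_{t,i}-\hat\mu_{t,i})^2\mid\mathbf{x}_t\bigr]=c_t^2\,(v_i+b_i^2),
\]
so exact equality requires the exact MMSE reading of (A1). A second subtlety is that $v_i$ here is the true posterior variance $[\mathrm{Cov}[\mathbf{x}_0\mid\mathbf{x}_t]]_{ii}$. It coincides with the Hessian formula Eq.~\eqref{eq:v_i} only through Proposition~\ref{prop:fim} evaluated with the true Hessian $\mathbf{H}_t^\star$. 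The proof should state explicitly that the identity refers to this true variance, and not to the clamped or network-estimated value used in the algorithm.
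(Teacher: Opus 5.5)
Your proposal is correct and matches the paper's proof: the affine dependence on $x_{0,i}$ gives $\tilde\mu_{t,i}-\hat\mu_{t,i}=c_t\,(x_{0,i}-\hat{x}_{0,i})$, and (A1) then reduces $\mathbb{E}[(x_{0,i}-\hat{x}_{0,i})^2\mid\mathbf{x}_t]$ to $\mathrm{Var}(X_{0,i}\mid\mathbf{x}_t)=v_i$. Your additional remarks are accurate refinements that the paper leaves implicit: the $c_t^2(v_i+b_i^2)$ form when (A1) holds only approximately, and the point that $v_i$ denotes the true posterior variance rather than the clamped or network-estimated one.
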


\begin{proof}
As in Step~1 of \cref{app:kl_derivation}: substituting $x_{0,i}=\hat{x}_{0,i}+e_i$ into $\tilde\mu_{t,i}$ yields $\tilde\mu_{t,i}=\hat\mu_{t,i}+c_t e_i$, so $(\tilde\mu_{t,i}-\hat\mu_{t,i})^2=c_t^2 e_i^2$. Under~(A1), $\hat{x}_{0,i}=\mathbb{E}[X_{0,i}\mid\mathbf{x}_t]$ is a deterministic function of~$\mathbf{x}_t$, so $\mathbb{E}[e_i^2\mid\mathbf{x}_t]=\mathrm{Var}(X_{0,i}\mid\mathbf{x}_t)=v_i$.
\end{proof}

\paragraph{Implication for the asymmetric allocation.} \cref{prop:mean_error} demonstrates that when the posterior variance~$v_i$ is large, the model's mean~$\hat\mu_{t,i}$ becomes an unreliable estimate of $\tilde\mu_{t,i}$, with error variance precisely~$c_t^2 v_i$. Two key consequences arise from this observation.

First, the KL-optimal variance~$\gamma_{t,i}^*=\tilde\beta_t+c_t^2 v_i$ in Eq.~\ref{prop:kl_opt} includes this term as a necessary additive component. This term ensures that the transition remains calibrated to the true posterior when~$\hat\mu_{t,i}$ is used in place of~$\tilde\mu_{t,i}$. Gating this term (i.e., scaling it by some $g\in[0,1)$) would suppress the correction that compensates for the mean error, particularly at pixels where the mean error is largest. %We therefore apply $c_t^2 v_i$ uniformly across pixels.

Second, the exploratory term~$\tilde\beta_t$ represents the irreducible stochasticity of the true reverse SDE. This term persists even with a perfect denoiser ($v_i=0$). The decision to include or suppress this term distinguishes DDIM (where $\tilde\beta_t$ is suppressed, resulting in a deterministic process) from DDPM (where $\tilde\beta_t$ is retained, resulting in a stochastic process). This is the term that is gated on a per-pixel basis:
$$
  \Sigma_{t,i}^2=c_t^2 v_i+g_{t,i}\tilde\beta_t.
$$
The coupling in the SANI update (Eq.~\ref{eq:sani_update}) reinforces this point. Since $\Sigma_{t,i}$ appears in both the noise scale and through $\sqrt{1-\bar\alpha_{t-1}-\Sigma_{t,i}^2}$, the direction coefficient, increasing $v_i$ simultaneously inflates the noise and reduces the weight of the unreliable direction. The same quantity that increases the variance also indicates the unreliability of the direction, so the update addresses both aspects simultaneously. In contrast, a decoupled rule that adjusted only the variance would continue to follow an inaccurate direction at full weight.
%
% ----------------------------------------------------------------------------
\section{Admissibility of the SANI Update}
\label{app:admissibility}
% ----------------------------------------------------------------------------
%
The SANI update (Eq.~\ref{eq:sani_update}) is well-defined when
\begin{equation}\label{eq:app_admissible_range}
  0\leq\Sigma_{t,i}^2\leq 1-\bar\alpha_{t-1}
\end{equation}
at every pixel, ensuring that both $\Sigma_{t,i}$ and the direction coefficient $\sqrt{1-\bar\alpha_{t-1}-\Sigma_{t,i}^2}$ are real. The lower bound is satisfied due to the clamp $v_i\gets\max(v_i,\epsilon)$ and the condition $g_{t,i}\geq 0$. The following establishes the upper bound.
\subsection{Per-Pixel Bounds}
Since $g_{t,i}\in[0,1]$, $\Sigma_{t,i}^2\leq c_t^2 v_i+\tilde\beta_t=\gamma_{t,i}^*$. It therefore suffices to bound~$\gamma_{t,i}^*$.
\begin{proposition}[Per-pixel bounds on $\gamma_{t,i}^*$]
\label{prop:app_bounds}
$\gamma_{t,i}^*\geq\tilde\beta_t$, with equality iff $v_i=0$. Moreover, $\gamma_{t,i}^*\leq\beta_t/\alpha_t$ iff $[\mathbf{H}_t]_{ii}\leq 0$, with equality at $[\mathbf{H}_t]_{ii}=0$ (equivalently $v_i=(1-\bar\alpha_t)/\bar\alpha_t$).
\end{proposition}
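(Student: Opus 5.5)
The plan is a direct computation from the definition $\gamma_{t,i}^*=\tilde\beta_t+c_t^2 v_i$ (Prop.~\ref{prop:kl_opt}), with $v_i$ expressed through Eq.~\eqref{eq:v_i}. The lower bound is immediate. Since $c_t^2>0$ for $t\ge 1$ (because $\beta_t>0$ and $\bar\alpha_{t-1}>0$) and $v_i\ge 0$ after the clamp, we get $c_t^2 v_i\ge 0$, with equality iff $v_i=0$. Strictly, the clamp makes $v_i\ge\epsilon$, so I would state the equality case for the unclamped quantity, as the proposition implicitly does.

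For the upper bound I would substitute Eq.~\eqref{eq:v_i}:
\[
c_t^2 v_i=\frac{\bar\alpha_{t-1}\beta_t^2}{(1-\bar\alpha_t)^2}\cdot\frac{1-\bar\alpha_t}{\bar\alpha_t}\bigl(1+(1-\bar\alpha_t)[\mathbf{H}_t]_{ii}\bigr)
=\frac{\beta_t^2}{\alpha_t(1-\bar\alpha_t)}\bigl(1+(1-\bar\alpha_t)[\mathbf{H}_t]_{ii}\bigr),
\]
using $\bar\alpha_{t-1}/\bar\alpha_t=1/\alpha_t$. Evaluating at $[\mathbf{H}_t]_{ii}=0$ then gives
\[
\gamma_{t,i}^*=\frac{(1-\bar\alpha_{t-1})\beta_t}{1-\bar\alpha_t}+\frac{\beta_t^2}{\alpha_t(1-\bar\alpha_t)}=\frac{\beta_t}{\alpha_t}\cdot\frac{\alpha_t-\bar\alpha_t+\beta_t}{1-\bar\alpha_t}.
\]
Since $\alpha_t+\beta_t=1$, the numerator is $1-\bar\alpha_t$, so $\gamma^*_{t,i}=\beta_t/\alpha_t$ at this point. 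This algebraic identity is the one step to check carefully; everything else is monotonicity.

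Finally, $\gamma_{t,i}^*$ is affine in $[\mathbf{H}_t]_{ii}$ with strictly positive slope $\beta_t^2/\alpha_t>0$. Hence $\gamma_{t,i}^*\le\beta_t/\alpha_t$ iff $[\mathbf{H}_t]_{ii}\le 0$, with equality exactly at $[\mathbf{H}_t]_{ii}=0$. By Eq.~\eqref{eq:v_i}, $[\mathbf{H}_t]_{ii}=0$ is equivalent to $v_i=(1-\bar\alpha_t)/\bar\alpha_t$.

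The main obstacle is bookkeeping rather than substance. First, the clamp must be handled: the statement should be read for unclamped $v_i$ in the regime where Eq.~\eqref{eq:v_i} is nonnegative. Second, the simplification $\alpha_t-\bar\alpha_t+\beta_t=1-\bar\alpha_t$ must be carried out correctly.
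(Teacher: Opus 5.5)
Your proposal is correct and follows essentially the same route as the paper. You evaluate $\gamma^*_{t,i}$ at the threshold $[\mathbf{H}_t]_{ii}=0$ (i.e.\ $v_i=(1-\bar\alpha_t)/\bar\alpha_t$), reduce it to $\beta_t/\alpha_t$ via $\alpha_t(1-\bar\alpha_{t-1})+\beta_t=1-\bar\alpha_t$, and conclude by monotonicity, which you write as affine dependence on $[\mathbf{H}_t]_{ii}$ with slope $\beta_t^2/\alpha_t$ where the paper uses monotonicity in $v_i$; your caveat that the lower bound and its equality case must be read for the unclamped, nonnegative $v_i$ is a fair clarification the paper leaves implicit.
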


\begin{proof}
The lower bound is immediate. For the upper bound, recall from Eq.~\ref{eq:v_i} that $v_i\leq(1-\bar\alpha_t)/\bar\alpha_t\iff[\mathbf{H}_t]_{ii}\leq 0$. Evaluating $\gamma_{t,i}^*$ at $v_i=(1-\bar\alpha_t)/\bar\alpha_t$ with $c_t=\sqrt{\bar\alpha_{t-1}}\beta_t/(1-\bar\alpha_t)$:
\begin{align*}
  \gamma_{t,i}^*\big|_{v_i=(1-\bar\alpha_t)/\bar\alpha_t}
  &=\tilde\beta_t+c_t^2\,\frac{1-\bar\alpha_t}{\bar\alpha_t}
  =\frac{\beta_t}{1-\bar\alpha_t}\Bigl[(1-\bar\alpha_{t-1})+\frac{\bar\alpha_{t-1}\beta_t}{\bar\alpha_t}\Bigr]\\
  &=\frac{\beta_t}{1-\bar\alpha_t}\cdot\frac{1-\bar\alpha_t}{\alpha_t}
  =\frac{\beta_t}{\alpha_t},
\end{align*}
where we used $\bar\alpha_{t-1}/\bar\alpha_t=1/\alpha_t$ and $(1-\bar\alpha_{t-1})+\beta_t/\alpha_t=(1-\bar\alpha_t)/\alpha_t$. Since $\gamma_{t,i}^*$ is increasing in~$v_i$, $\gamma_{t,i}^*\leq\beta_t/\alpha_t$ holds exactly when $v_i\leq(1-\bar\alpha_t)/\bar\alpha_t$.
\end{proof}

\Cref{prop:app_bounds} demonstrates that $\beta_t/\alpha_t$ bounds $\gamma_{t,i}^*$ only at pixels of negative local curvature. At pixels of positive curvature, $\gamma_{t,i}^*$ may exceed $\beta_t/\alpha_t$, indicating that the bound is not uniform. However, this value serves as the average-case bound. Averaging $\gamma_{t,i}^*$ over all pixels and applying the score-Hessian relation yiels the scalar Analytic-DPM variance $\sigma_t^{*2}$, which satisfies $\tilde\beta_t\leq\sigma_t^{*2}\leq\beta_t/\alpha_t$~\citep{bao2022analytic}.

\subsection{Schedule Condition}

\begin{lemma}\label{lem:app_schedule}
$\beta_t/\alpha_t\leq 1-\bar\alpha_{t-1}\iff\bar\alpha_t\leq 1-2\beta_t$.
\end{lemma}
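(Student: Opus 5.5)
The plan is to reduce the inequality to a linear one by clearing the denominator $\alpha_t$, and then to rewrite it using only the identities $\alpha_t = 1-\beta_t$ and $\bar\alpha_t = \bar\alpha_{t-1}\alpha_t$ from \cref{sec:ddpm}. Every step will be an equivalence, so both directions of the ``iff'' come out at once.

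\textbf{Step 1 (clear the denominator).} Since $\beta_t\in(0,1)$, we have $\alpha_t = 1-\beta_t > 0$. Multiplying both sides of $\beta_t/\alpha_t \le 1-\bar\alpha_{t-1}$ by $\alpha_t$ therefore preserves the direction of the inequality and is reversible. This gives the equivalent form
\[
\beta_t \le \alpha_t - \alpha_t\bar\alpha_{t-1}.
\]

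\textbf{Step 2 (recognize $\bar\alpha_t$).} Using $\alpha_t\bar\alpha_{t-1} = \bar\alpha_t$, the condition becomes $\beta_t \le \alpha_t - \bar\alpha_t$, which is equivalent to $\bar\alpha_t \le \alpha_t - \beta_t$.

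\textbf{Step 3 (eliminate $\alpha_t$).} Substituting $\alpha_t = 1-\beta_t$ turns $\alpha_t - \beta_t$ into $1 - 2\beta_t$, so the condition reads $\bar\alpha_t \le 1-2\beta_t$, as claimed.

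There is no real obstacle here. The only points needing care are the sign of $\alpha_t$, which justifies both the multiplication and its reversal, and the convention $\bar\alpha_0 := 1$ at $t=1$. At $t=1$ we have $\bar\alpha_1 = \alpha_1 = 1-\beta_1$, so both sides of the lemma are false for $\beta_1 > 0$. The equivalence still holds vacuously there, and in any case the update uses $\mathbf{z}=\mathbf{0}$ at $t=1$.

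I would close with a brief remark that combines the lemma with \cref{prop:app_bounds}. Whenever the schedule satisfies $\bar\alpha_t \le 1-2\beta_t$ and $[\mathbf{H}_t]_{ii}\le 0$, we get $\Sigma_{t,i}^2 \le \gamma^*_{t,i} \le \beta_t/\alpha_t \le 1-\bar\alpha_{t-1}$. In that case the clip in \cref{alg:sani} is inactive and the update in Eq.~\ref{eq:sani_update} is well defined.
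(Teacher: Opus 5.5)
Your proof is correct and matches the paper's argument step for step: multiply through by $\alpha_t$, use $\alpha_t\bar\alpha_{t-1}=\bar\alpha_t$ to obtain $\beta_t\le\alpha_t-\bar\alpha_t$, then substitute $\alpha_t=1-\beta_t$. Your explicit remark that $\alpha_t>0$ is what makes the multiplication reversible is a detail the paper leaves implicit.
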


\begin{proof}
$\beta_t/\alpha_t\leq 1-\bar\alpha_{t-1}\iff\beta_t\leq\alpha_t(1-\bar\alpha_{t-1})=\alpha_t-\bar\alpha_t$. With $\beta_t=1-\alpha_t$, this is $1-\alpha_t\leq\alpha_t-\bar\alpha_t$, i.e., $\bar\alpha_t\leq 2\alpha_t-1=1-2\beta_t$.
\end{proof}

For standard schedules where $\beta_t\ll 1$, the right-hand side approaches~$1$, so the condition is violated only when $\bar\alpha_t$ is close to~$1$, corresponding to the earliest forward steps (or latest reverse steps). At $t=1$, $\bar\alpha_0=1$ and the condition fails, however, the algorithm sets $\mathbf{z}=\mathbf{0}$, so admissibility vacuous. At $t=2$, $1-\bar\alpha_1=\beta_1$ and the condition requires $\beta_2/\alpha_2\leq\beta_1$, which does not hold for monotonically increasing schedules. Numerically, the condition is satisfied for all $t\geq t^*$, with $t^*=3$ for the linear schedule ($\beta_1=10^{-4}$, $\beta_T=0.02$, $T=1000$) and $t^*=6$ for the cosine schedule \cite{bao2022analytic}.

\subsection{Admissibility and the Per-Pixel Clip}

Combining \cref{prop:app_bounds} and Lemma~\ref{lem:app_schedule}: for $t\geq t^*$, every pixel with $[\mathbf{H}_t]_{ii}\leq 0$ satisfies
\begin{equation}\label{eq:app_admit_chain}
  \Sigma_{t,i}^2\leq\gamma_{t,i}^*\leq\frac{\beta_t}{\alpha_t}\leq 1-\bar\alpha_{t-1},
\end{equation}
and is admissible by construction. Only pixels with positive local curvature, where $v_i$ exceeds the forward-noise level, can violate Eq.~\ref{eq:app_admissible_range}. For these cases, the per-pixel clip is applied
\begin{equation}\label{eq:app_clip}
  \Sigma_{t,i}^2\gets\min\!\bigl(\Sigma_{t,i}^2,\,1-\bar\alpha_{t-1}\bigr).
\end{equation}
This procedure guarantees a real direction coefficient and preserves $\Sigma_{t,i}^2\geq 0$.
\paragraph{Empirical negligibility.} The clip (Eq.~\ref{eq:app_clip}) is activated for only a negligible fraction of pixels. During sampling, the iterate~$\mathbf{x}_t$ remains within high-probability regions, where the log-marginal is locally concave and $[\mathbf{H}_t]_{ii}\leq 0$ at the majority of coordinates. By \Cref{prop:app_bounds}, these are automatically admissible. The remaining cases are further suppressed at timesteps where Eq.~\ref{eq:app_admissible_range} is most restrictive. At $t=1$ no noise is injected ($\mathbf{z}=\mathbf{0}$), and for $t\in\{2,\ldots,t^*-1\}$ the gate satisfies $g_{t,i}\approx 0$ and $v_i$ is small (the signal-to-noise ratio $\bar\alpha_t/(1-\bar\alpha_t)\gg 1$), so $\Sigma_{t,i}^2\approx c_t^2 v_i\approx 0$ regardless of the clip (Fig.~\ref{fig:perstep_stats}).
% ===========================================================================
%  End theory appendix
% ===========================================================================
%
% ===========================================================================
%  Start theory experiments
% ===========================================================================
%
\section{Experimental Details}
\label{app:experimental_details}
This appendix provides reproducibility details, pretrained-model sources, per-dataset hyperparameters, and information about the Hessian-diagonal source and its training setup to support~\Cref{sec:experiments}. The experiments include comparisons with DDPM using $\sigma_t^2 = \tilde{\beta}_t$ and $\sigma_t^2 = \beta_t$~\citep{ho20}, DDIM~\citep{song2021ddim}, Analytic-DPM (A-DDPM/DDIM)~\citep{bao2022analytic}, NPR DDPM/DDIM~\citep{bao2022estimating} (which models the noise prediction residual), SN-DDPM/DDIM~\citep{bao2022estimating} (which models the second moment of the noise), and OCM-DDPM/DDIM~\citep{ou2025improving}.
Consistent with \cite{bao2022estimating}, the approach relies on a pretrained score-based neural network in fixed parameters throughout our procedures.

\paragraph{\textbf{Details of Pretrained Models.}}
\Cref{tab:app_pretrained} summarizes the pretrained score and Hessian-diagonal network used to estimate~$v_i$. Score models parameterize the noise prediction, $\boldsymbol{\epsilon}_\theta(\mathbf{x}_t,t)$, from which the score is recovered as $\mathbf{s}_\theta(\mathbf{x}_t)=\nabla_{\mathbf{x}_t}\log p_\theta(\mathbf{x}_t)=-\boldsymbol{\epsilon}_\theta(\mathbf{x}_t,t)/\sqrt{1-\bar\alpha_t}$. 

For the per-pixel posterior variance $v_i(\mathbf{x}_t,t)=\bigl[\mathrm{Cov}[\mathbf{x}_0\mid\mathbf{x}_t]\bigr]_{ii}$ of Eq.~\ref{eq:v_i}, the pre-trained Hessian-diagonal networks released by~\cite{ou2025improving} are employed, which predict $[\mathbf{H}_t]_{ii}$ in a single forward pass. These networks are lightweight, being significantly smaller than the score network, and are trained by minimizing $\mathbb{E}_{t,\mathbf{x}_t,\mathbf{r}}\|h_\phi(\mathbf{x}_t,t)-\mathbf{r}\odot(\mathbf{H}_t\mathbf{r})\|^2$ with Rademacher probes $\mathbf{r}\in\{-1,+1\}^D$. The same EMA weightsare used at inference as in~\cite{ou2025improving}.
 
The alternative Hutchinson estimator $[\mathbf{H}_t]_{ii}\approx r_i\cdot[\mathbf{H}_t\mathbf{r}]_i$ does not require an auxiliary network but introduces one additional Jacobian-vector product (JVP) per step, resulting approximately twice the computational cost of DDIM.

\begin{table}[h]
\centering
\caption{Source of pretrained score prediction networks used in our experiments.}
\label{tab:app_pretrained}
%\vskip -0.1in
\begin{small}
\begin{sc}
\setlength{\tabcolsep}{.55pt}{
\begin{tabular}{lc}
\toprule 
     & Provided By \\
\midrule 
    CIFAR10 (LS) & \cite{bao2022analytic} \\
    CIFAR10 (CS) & \cite{bao2022analytic} \\
    CelebA 64X64 & \cite{song2021ddim} \\
    LSUN Bedroom & \cite{ho20} \\
    Hessian net & \cite{ou2025improving}\\
\bottomrule
\end{tabular}}
\end{sc}
\end{small}
\end{table}
 
% \begin{table}[h]
% \centering
% \caption{Pretrained score and Hessian networks used in the experiments.}
% \label{tab:app_pretrained}
% \small
% \begin{tabular}{lc}
% \toprule
%  & Source \\
% \midrule
% CIFAR-10 (LS) & \cite{bao2022analytic} \\
% CIFAR-10 (CS) & \cite{bao2022analytic} \\
% CelebA $64\times 64$ & \cite{song2021ddim} \\
% LSUN Bedroom $256\times 256$ & \cite{ho20} \\
% Hessian-diagonal network & \cite{ou2025improving} \\
% \bottomrule
% \end{tabular}
% \end{table}
 
\subsection{Per-Dataset Hyperparameters}
The only hyperparameter introduced by SANI is the tolerance~$\tau$, which determines the operating point of the gating function via the ratio $\tau/v_i$. The value of~$\tau$ is calibrated for each dataset, selected from the range $[10^{-4}-10^{-2}]$ in which the gate exhibits the intended stochastic-to-deterministic transition within the trajectory.  \Cref{tab:app_hyperparams} lists the values used for the main FID and qualitative results.
\begin{table}[h]
\centering
\caption{Per-dataset hyperparameters. $T$ is the training-time number of timesteps; $K\in\{10,25,50,100,200,1000\}$ at inference is reported in the relevant figures.}
\label{tab:app_hyperparams}
\small
\begin{tabular}{lcccc}
\toprule
& CIFAR-10 (LS) & CIFAR-10 (CS) & CelebA $64$ & LSUN Bedroom \\
\midrule
Resolution        & $32\times 32$ & $32\times 32$ & $64\times 64$ & $256\times 256$ \\
Schedule          & linear        & cosine        & linear        & linear \\
$T$ (training)    & 1000          & 1000          & 1000          & 1000 \\
Tolerance $\tau$  & $10^{-3}$     & $10^{-4}$     & $10^{-3}$     & $10^{-3}$ \\
Variance floor $\epsilon$ & $10^{-6}$ & $10^{-6}$ & $10^{-6}$ & $10^{-6}$ \\
\bottomrule
\end{tabular}
\end{table}
 
% \paragraph{\textbf{Hessian-Diagonal.}} For the per-pixel posterior variance $v_i(\mathbf{x}_t,t)=\bigl[\mathrm{Cov}[\mathbf{x}_0\mid\mathbf{x}_t]\bigr]_{ii}$ of Eq.~\ref{eq:v_i}, we use the pre-trained Hessian-diagonal networks released by~\cite{ou2025improving}, which predict $[\mathbf{H}_t]_{ii}$ in a single forward pass. These networks are lightweight (significantly smaller than the score network) and are trained by minimizing $\mathbb{E}_{t,\mathbf{x}_t,\mathbf{r}}\|h_\phi(\mathbf{x}_t,t)-\mathbf{r}\odot(\mathbf{H}_t\mathbf{r})\|^2$ with Rademacher probes $\mathbf{r}\in\{-1,+1\}^D$. We use them with the same EMA weights at inference as~\cite{ou2025improving}.
 
% The alternative Hutchinson estimator $[\mathbf{H}_t]_{ii}\approx r_i\cdot[\mathbf{H}_t\mathbf{r}]_i$ requires no auxiliary network but adds one Jacobian--vector product per step (about $2\times$ the cost of DDIM).
 
\paragraph{\textbf{Evaluation.}} All evaluations employ exponential-moving-average (EMA) weights with decay~$0.9999$, which is the standard setting in the diffusion literature~\citep{nichol2021improved,bao2022estimating,peebles2023scalable}. FID is computed from $50{,}000$ generated samples. For CIFAR-10, the reference statistics are extracted from the entire training set (50{,}000 images), while for CelebA, the standard test split is used. Likelihoods are reported as variational upper bounds in bits per dimension along the same sub-sequence used at sampling time, ensuring that FID and NLL are reported under matched schedules. 
% ===========================================================================
 %
\section{Extended Theory Validation}
\label{app:theory_validation}
 This appendix provides extended visualizations of the theoretical identities across the remaining datasets and presents the complete gate-error correlation scatter to support \Cref{sec:exp_fim}.
 %
% ---------------------------------------------------------------------------
\subsection{FIM-Jacobian-Posterior Covariance Correspondence on Additional Datasets}
\label{app:fim_validation_extended}
% ---------------------------------------------------------------------------
\Cref{fig:fim_correspondence_extended} extends \Cref{fig:fim_correspondence} from LSUN Bedroom to CelebA and CIFAR-10 (LS and CS). Despite the lower resolution ($32\times 32$ to $64\times 64$), the spatial correspondence between the denoiser Jacobian diagonal (row~2), the posterior covariance diagonal (row~3), and the reconstruction accuracy (row~4) remains clearly visible along object boundaries and persists across different noise schedules.
\begin{figure}[h]
  \centering
  \begin{subfigure}[b]{0.32\linewidth}
    \centering
    \includegraphics[width=\linewidth]{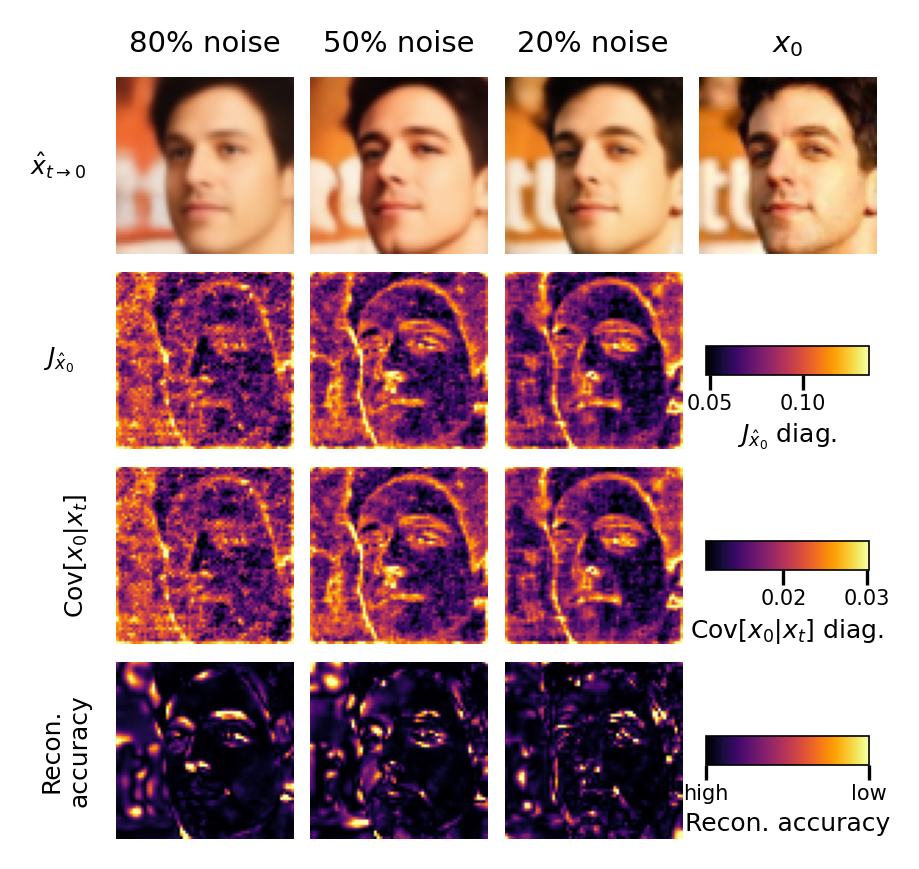}
    \caption{CelebA $64\times 64$}
  \end{subfigure}\hfill
  \begin{subfigure}[b]{0.32\linewidth}
    \centering
    \includegraphics[width=\linewidth]{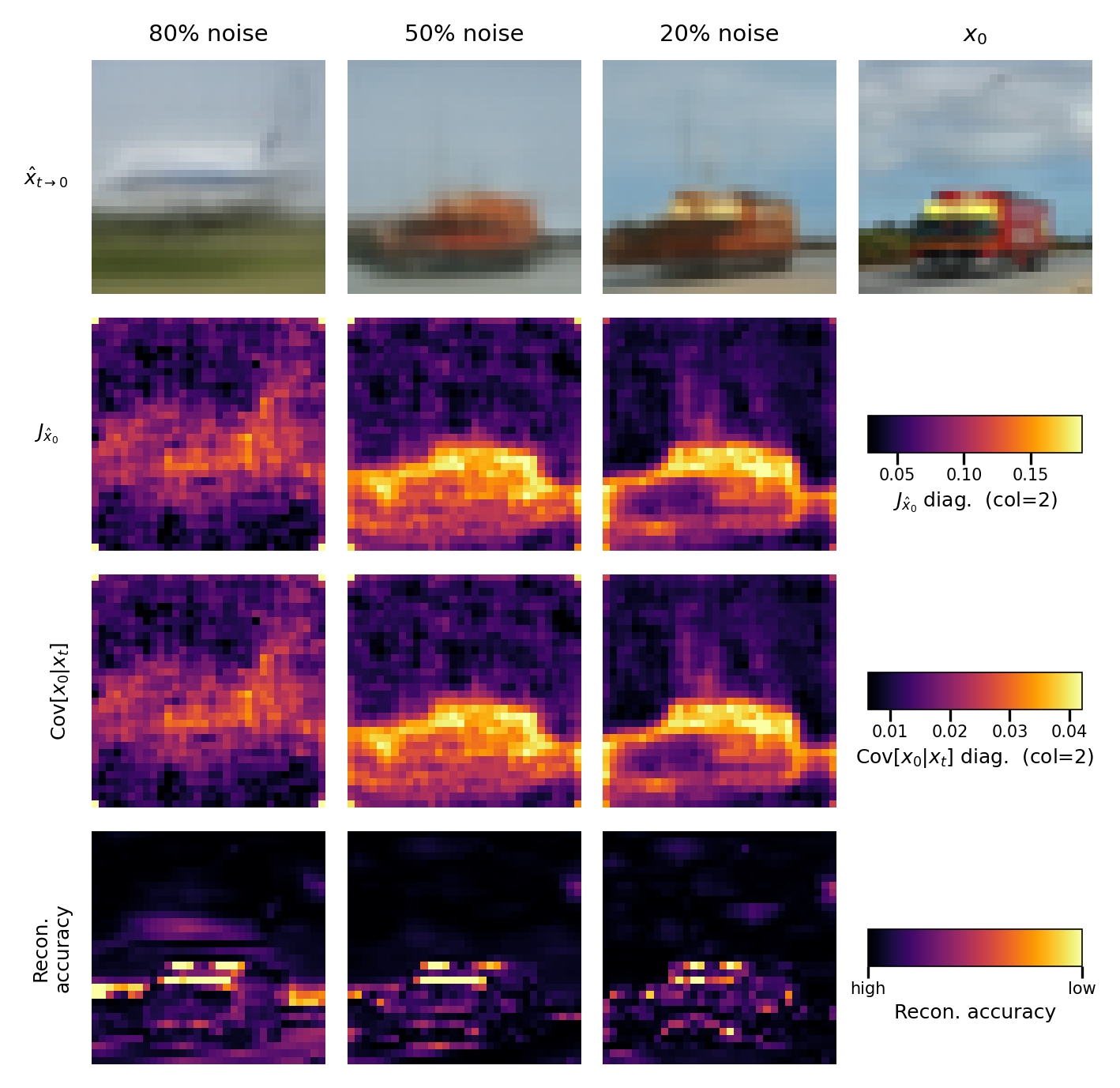}
    \caption{CIFAR-10 (LS)}
  \end{subfigure}\hfill
  \begin{subfigure}[b]{0.32\linewidth}
    \centering
    \includegraphics[width=\linewidth]{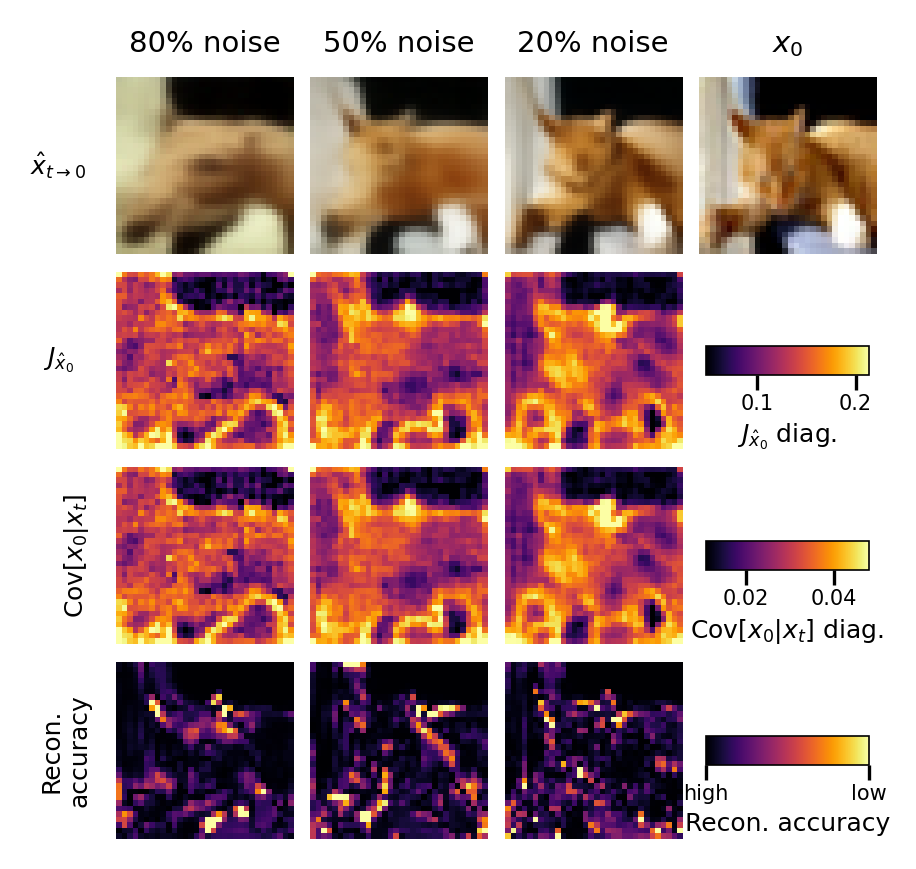}
    \caption{CIFAR-10 (CS)}
  \end{subfigure}
  \caption{FIM-Jacobian-Posterior Covariance correspondence on additional datasets, same layout as Fig.~\ref{fig:fim_correspondence}. The spatial correspondence between the Jacobian diagonal, the posterior covariance diagonal, and reconstruction accuracy is preserved at all resolutions and across both noise schedules.}
  \label{fig:fim_correspondence_extended}
\end{figure}
 
% ---------------------------------------------------------------------------
\subsection{Gate-Error Correlation Across Noise Levels}
\label{app:gate_error_correlation}
% ---------------------------------------------------------------------------
\Cref{fig:app_gate_error_correlation} presents the per-pixel gating value~$g_{t,i}$ versus the validation loss $(x_{0,i}-\hat{x}_{0,i})^2$ on held-out images at six noise levels for LSUN Bedroom, CelebA, and CIFAR-10 (LS and CS) datasets. The log-space Pearson correlation rises monotonically from high (80\%) noise levels to lower noise levels across all datasets. The lower correlation at high noise due to the compression of $g_{t,i}$ near~$1$ across all pixels, resulting in limited variance for correlation. The correlation reported here uses the gating value~$g_{t,i}$ directly, but the same monotonic trend is observed if $v_i$ is substituted for $g_{t,i}$, as the two are related by the monotone CDF transform of~\Cref{prop:gating}.
\begin{figure}[h]
  %\centering
  \begin{subfigure}[b]{0.49\linewidth}%\centering
    \includegraphics[width=\linewidth]{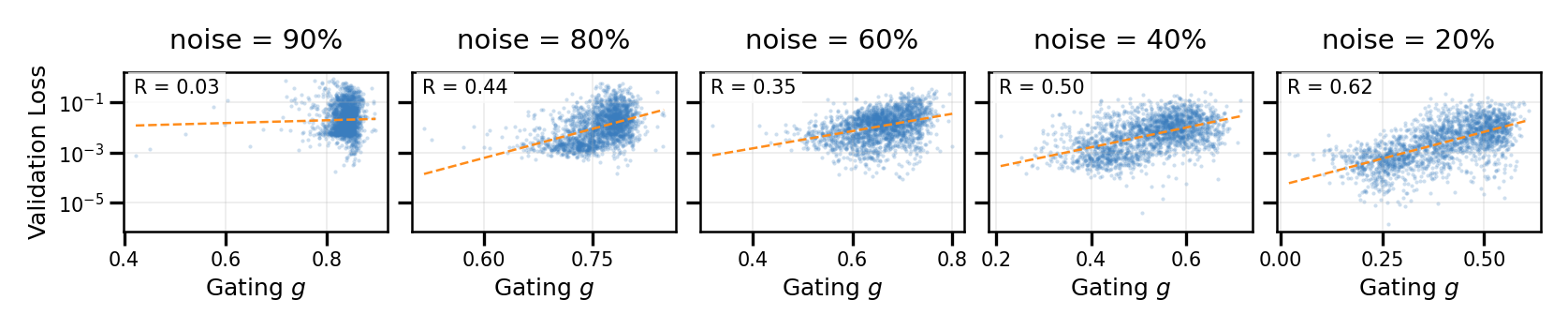}
    \caption{CIFAR-10 (LS)}
  \end{subfigure}\hfill
  \begin{subfigure}[b]{0.49\linewidth}%\centering
    \includegraphics[width=\linewidth]{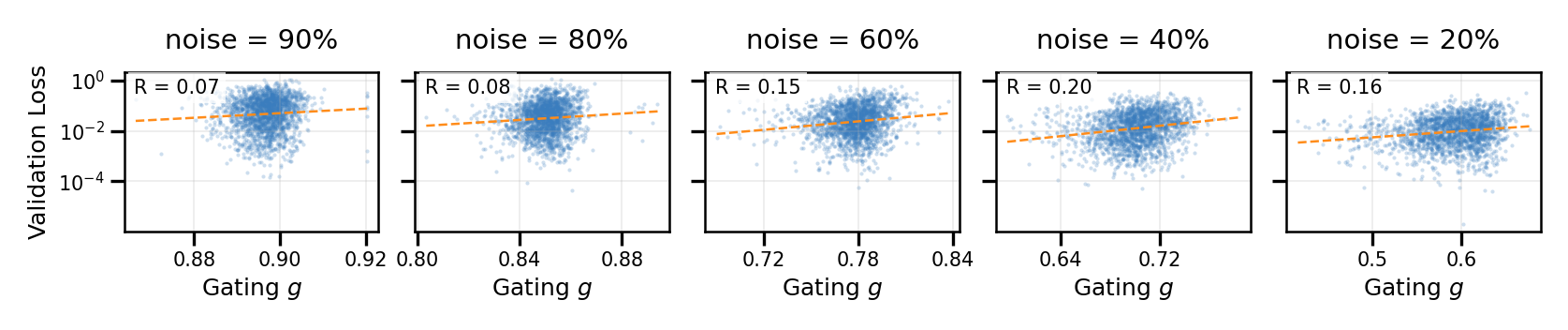}
    \caption{CIFAR-10 (CS)}
  \end{subfigure}

  \begin{subfigure}[b]{0.49\linewidth}%\centering
    \includegraphics[width=\linewidth]{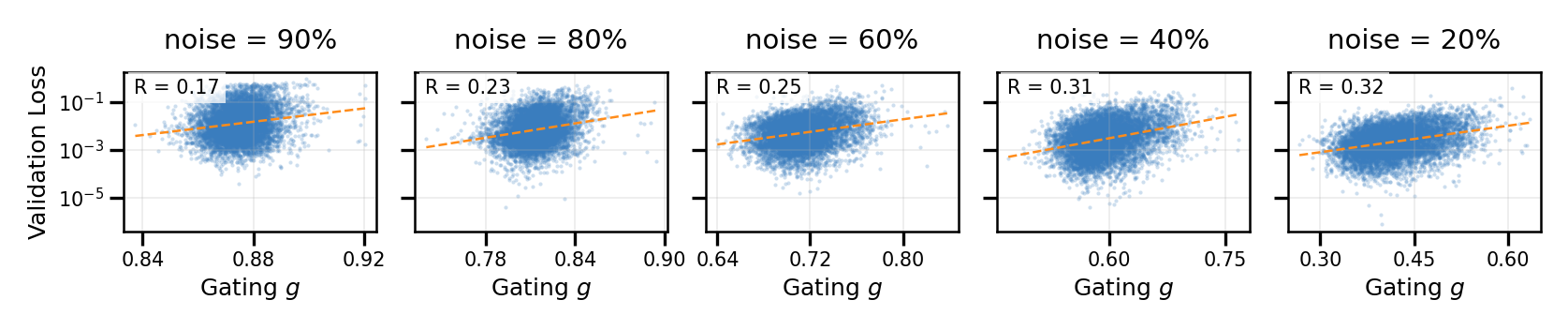}
    \caption{CelebA $64$}
  \end{subfigure}\hfill
  \begin{subfigure}[b]{0.49\linewidth}%\centering
    \includegraphics[width=\linewidth]{figs/heatmaps_new/gating_vs_loss_sang_lsum.png}
    \caption{LSUN Bedroom}
  \end{subfigure}
  \caption{Per-pixel gating value $g_{t,i}$ versus validation loss $(x_{0,i}-\hat{x}_{0,i})^2$ at six noise levels (left to right: decreasing noise). The log-space Pearson correlation $R$ rises monotonically, confirming that high-$g_{t,i}$ pixels are those the model reconstructs poorly.}
  \label{fig:app_gate_error_correlation}
\end{figure}

% ===========================================================================
%  APPENDIX: Extended Gating Visualizations
%  Supports Section 5.3
% ===========================================================================
 
\section{Extended Gating Visualizations}
\label{app:extended_gating}
This appendix provides additional results supporting \cref{sec:exp_qualitative} on the remaining datasets. First, the gate is evaluated at controlled levels of forward noise on validation images, which isolates the gate's behavior from trajectory dynamics by removing dependence on the unfolding of the reverse process (see \cref{app:gating_controlled_noise}). Subsequently, the same gate-versus-Sobel analysis is conducted along the reverse trajectory (see \cref{app:gating_during_sampling}).
% --------------------------------------------------------------------
% ---------------------------------------------------------------------------
\subsection{Gate at Controlled Noise Levels}
\label{app:gating_controlled_noise}
% ---------------------------------------------------------------------------
To isolate the gate's response from trajectory dynamics, we evaluate it on held-out validation images at controlled levels of forward noise. Given a clean image $\mathbf{x}_0$, we sample $\mathbf{x}_t\sim q(\mathbf{x}_t\mid\mathbf{x}_0)$ at noise levels $80\%$, $50\%$, and $20\%$, and compute the gate from the Tweedie prediction. This setting tests the gate's behavior as a function of the pair $(\mathbf{x}_0,t)$ alone.
 
\Cref{fig:app_gating_noise_levels} presents the resulting gating maps on three datasets. The spatial selectivity observed in this setting is consistent with the behavior during sampling. The gate saturates near~$1$ across the image at high noise, sharpens onto geometrically complex regions at moderate noise, and collapses to near-zero on flat regions at low noise. These results confirm that alignment with image structure is a property of the score field at $(\mathbf{x}_t,t)$, rather than an artifact of the specific trajectory followed by the sampler.

\begin{figure}[t]
    \centering
    \begin{subfigure}[b]{0.3\textwidth}
        \centering
        \includegraphics[width=\linewidth]{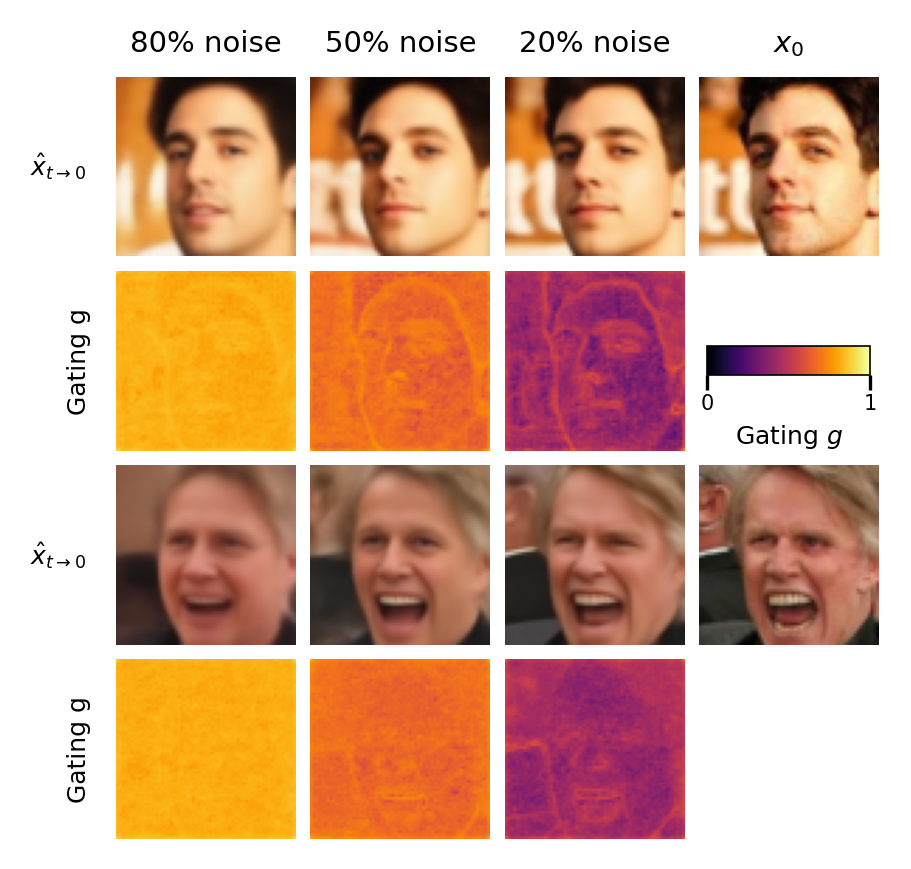}
        \caption{\footnotesize CelebA}
    \end{subfigure}
    \begin{subfigure}[b]{0.3\textwidth}
        \centering
        \includegraphics[width=\linewidth]{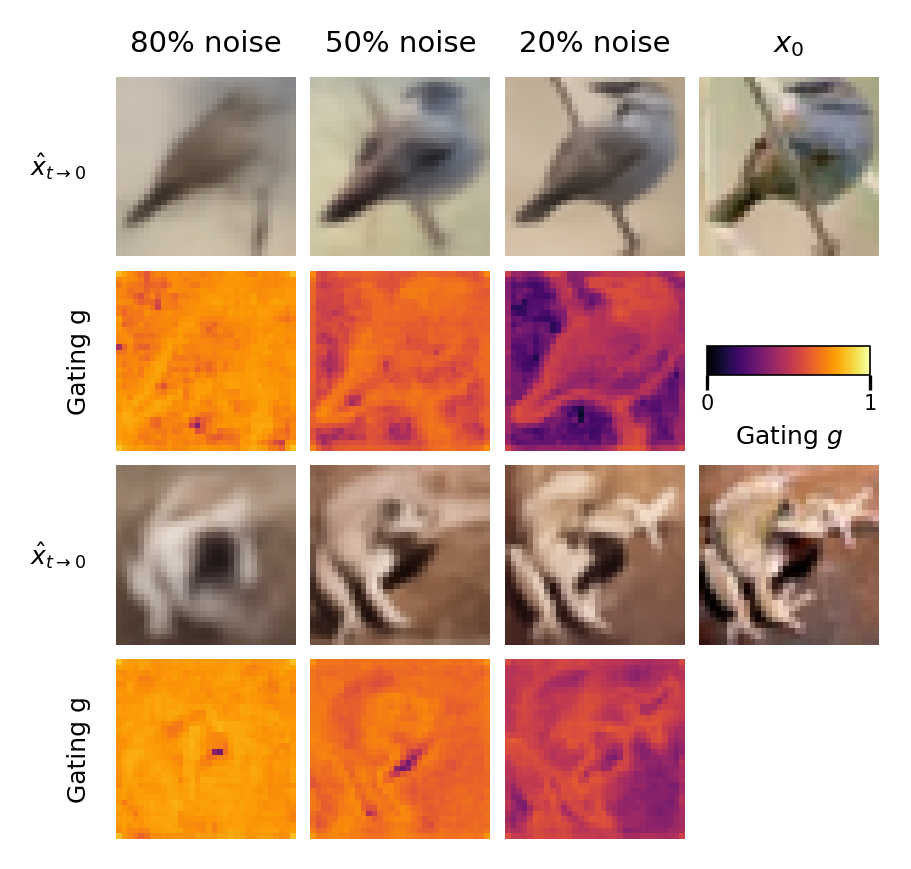}
        \caption{\footnotesize Cifar10 (LS)}
    \end{subfigure}
    \begin{subfigure}[b]{0.3\textwidth}
        \centering
        \includegraphics[width=\linewidth]{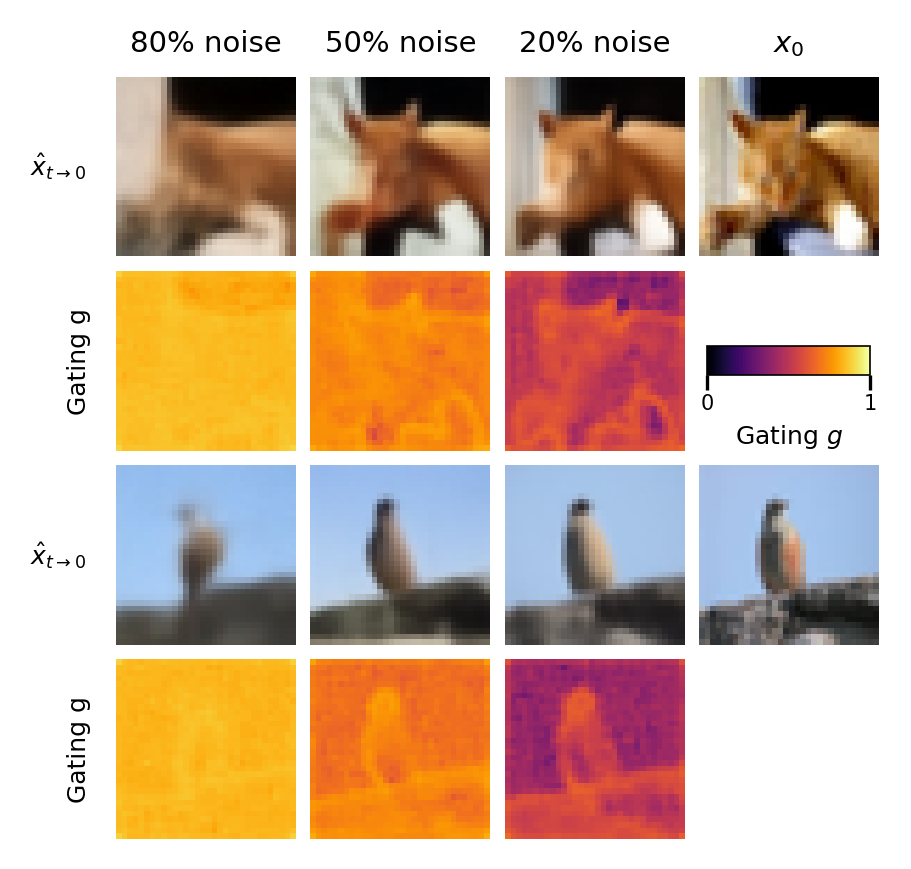}
        \caption{\footnotesize Cifar10 (CS)}
    \end{subfigure}
        \caption{Visualization of one-step predictions at varying timesteps (80\%, 50\%, 20\% noise) and the corresponding gating map (dark$\,{=}\,0$, bright$\,{=}\,1$). The gating function assigns high stochasticity to geometrically complex regions such as edges and textures, and near-zero stochasticity to flat regions such as walls and bed sheets. The spatial separation becomes more pronounced as noise decreases.}
    \label{fig:app_gating_noise_levels}
\end{figure}
 
% ---------------------------------------------------------------------------
\subsection{Gate vs.\ Sobel Along the Reverse Trajectory: Additional Datasets}
\label{app:gating_during_sampling}
% ---------------------------------------------------------------------------
The gate is next evaluated along an actual reverse trajectory. Figures~\ref{fig:app_gating_vs_edges_lsunb}-\ref{fig:app_gating_vs_edges_cifar10cs} further supports that SANI effectively isolates ``difficult'' spatial regions bases solely on inherent geometry, without requiring explicit edge-detection supervision or the architectural modifications proposed in recent work (e.g., \cite{schusterbauer2026patchforcing}).

%The same structural alignment observed on LSUN Bedroom is preserved across resolutions and schedules: the gate is globally near $1$ at high noise, progressively localizes on facial landmarks (CelebA) or object silhouettes (CIFAR) during denoising, and collapses to near-zero on background regions while retaining residual mass on edges and textures at the end of the trajectory.

\begin{figure}[t]
    \centering
    \begin{subfigure}[b]{0.24\textwidth}
        \centering
        \includegraphics[width=\linewidth]{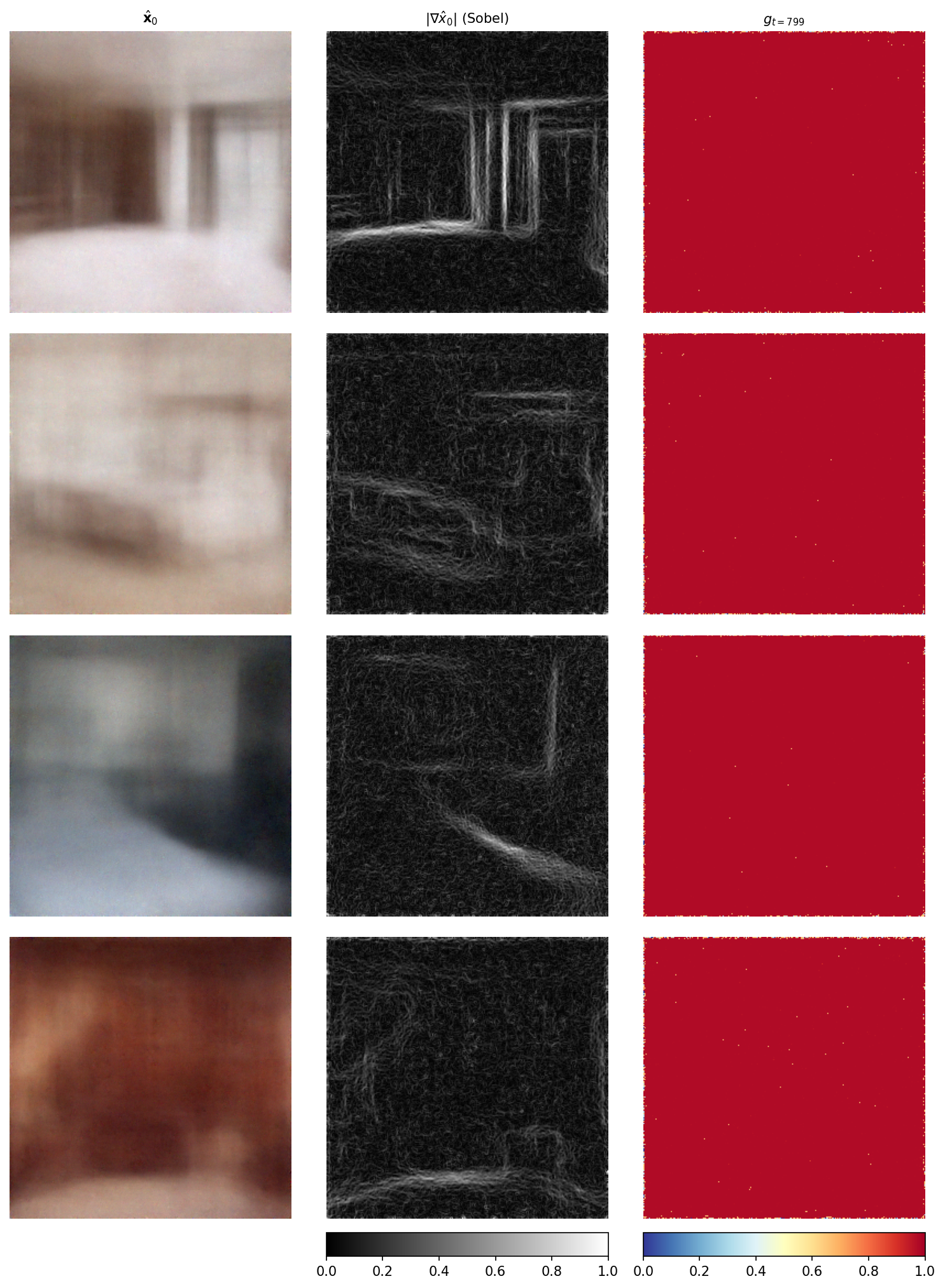}
        \caption{\footnotesize $t\!=\!800$}
        %\label{fig:gating_vs_edges_lsun_800}
    \end{subfigure}
    \begin{subfigure}[b]{0.24\textwidth}
        \centering
        \includegraphics[width=\linewidth]{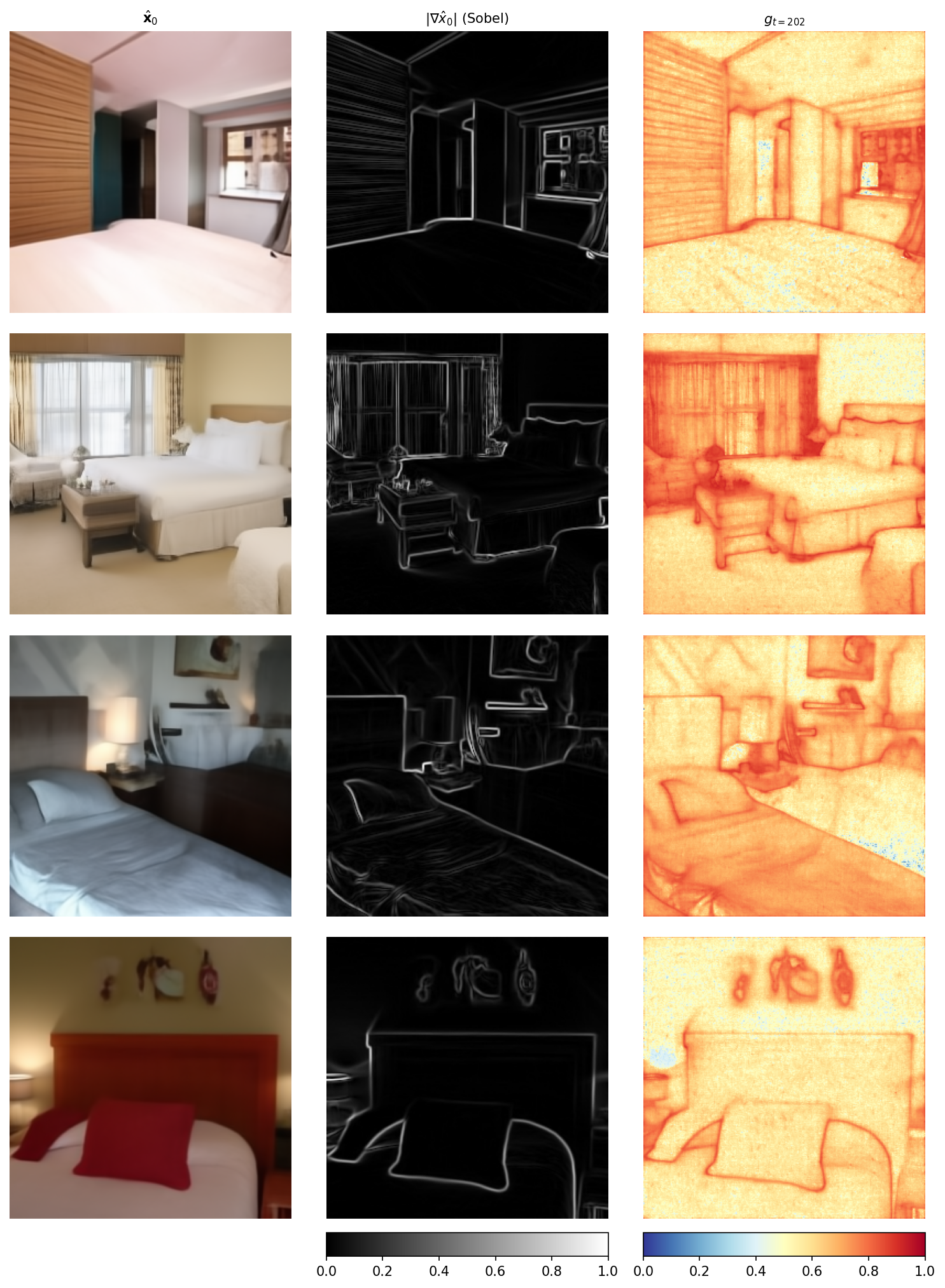}
        \caption{\footnotesize $t\!=\!200$}
        %\label{fig:gating_vs_edges_lsun_200}
    \end{subfigure}
    \begin{subfigure}[b]{0.24\textwidth}
        \centering
        \includegraphics[width=\linewidth]{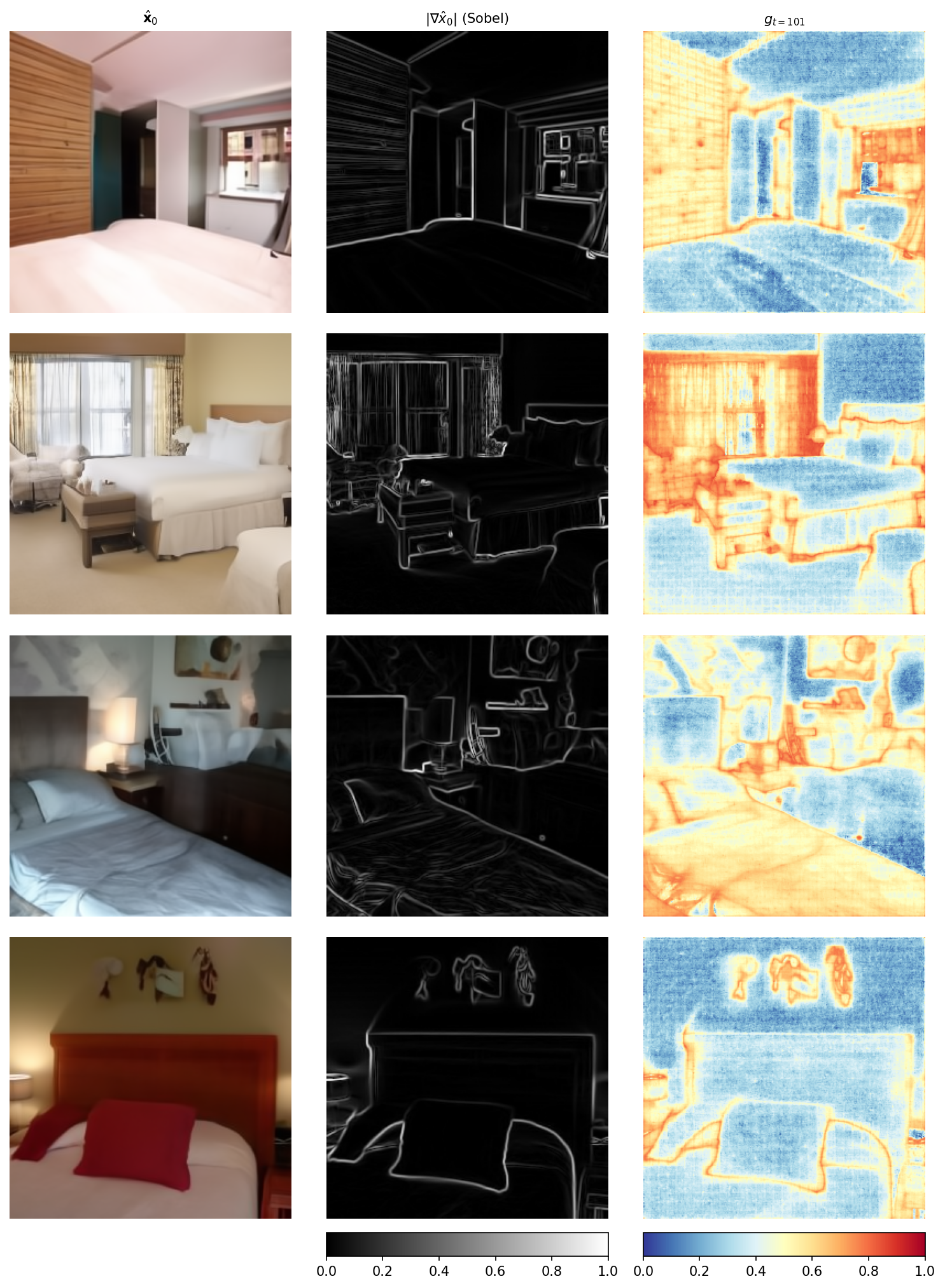}
        \caption{\footnotesize $t\!=\!100$}
    %\label{fig:gating_vs_edges_lsun_100}
    \end{subfigure}
    \begin{subfigure}[b]{0.24\textwidth}
        \centering
        \includegraphics[width=\linewidth]{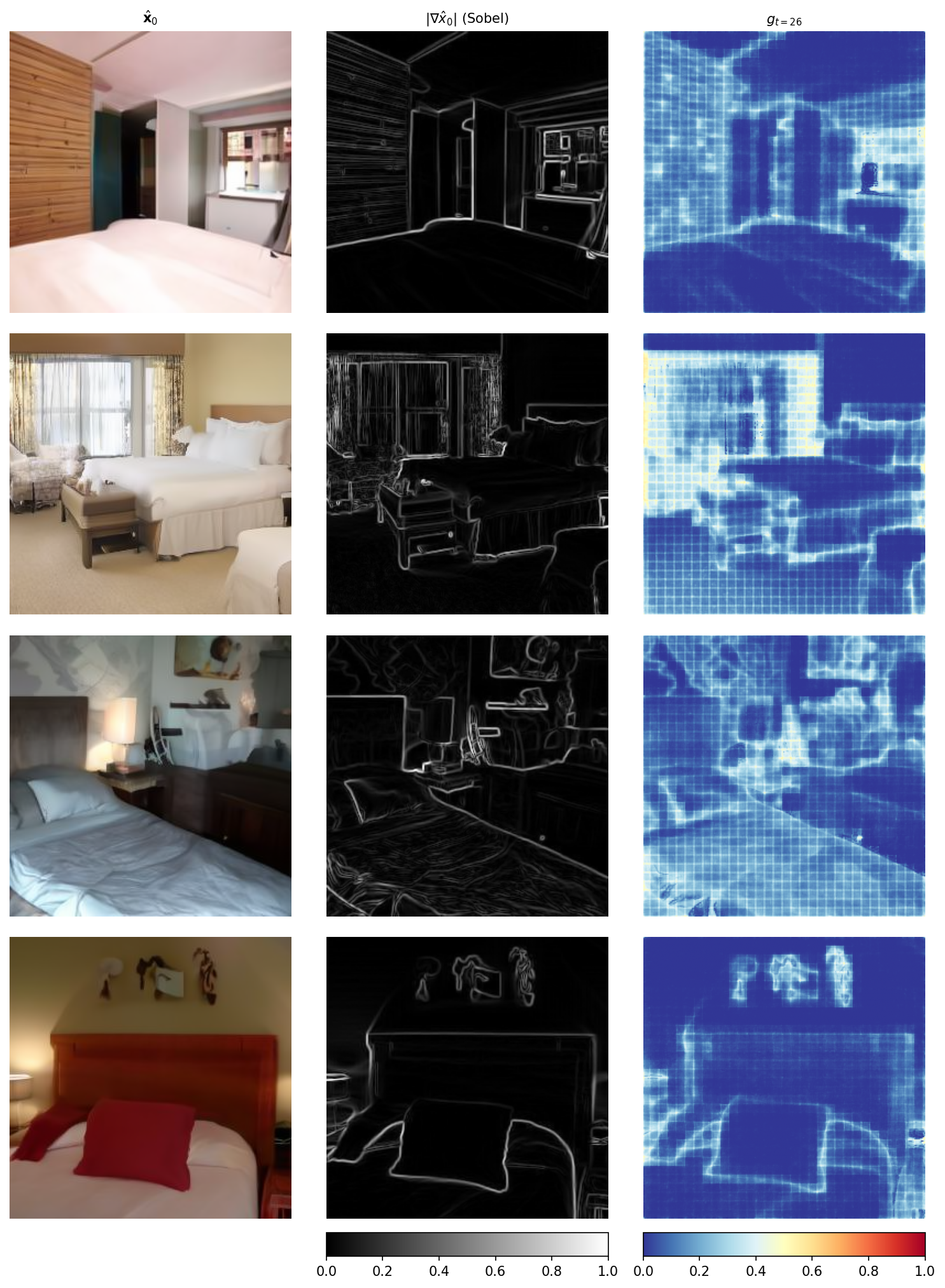}
        \caption{\footnotesize $t\!=\!25$}
                %\label{fig:gating_vs_edges_lsun_25}
    \end{subfigure}
    \caption{Per-pixel gating $g_{t,i}$ vs.\ Sobel edge map of $\hat{\mathbf{x}}_0$ on LSUN Bedroom. Each subplot: predicted image (left), Sobel magnitude (center), gating map (right, blue$\,{=}\,0$, red$\,{=}\,1$). The gating map progressively aligns with image structure without any edge-detection supervision.}
    \label{fig:app_gating_vs_edges_lsunb}
\end{figure}

\begin{figure}[t]
    \centering
    \begin{subfigure}[b]{0.24\textwidth}
        \centering
        \includegraphics[width=\linewidth]{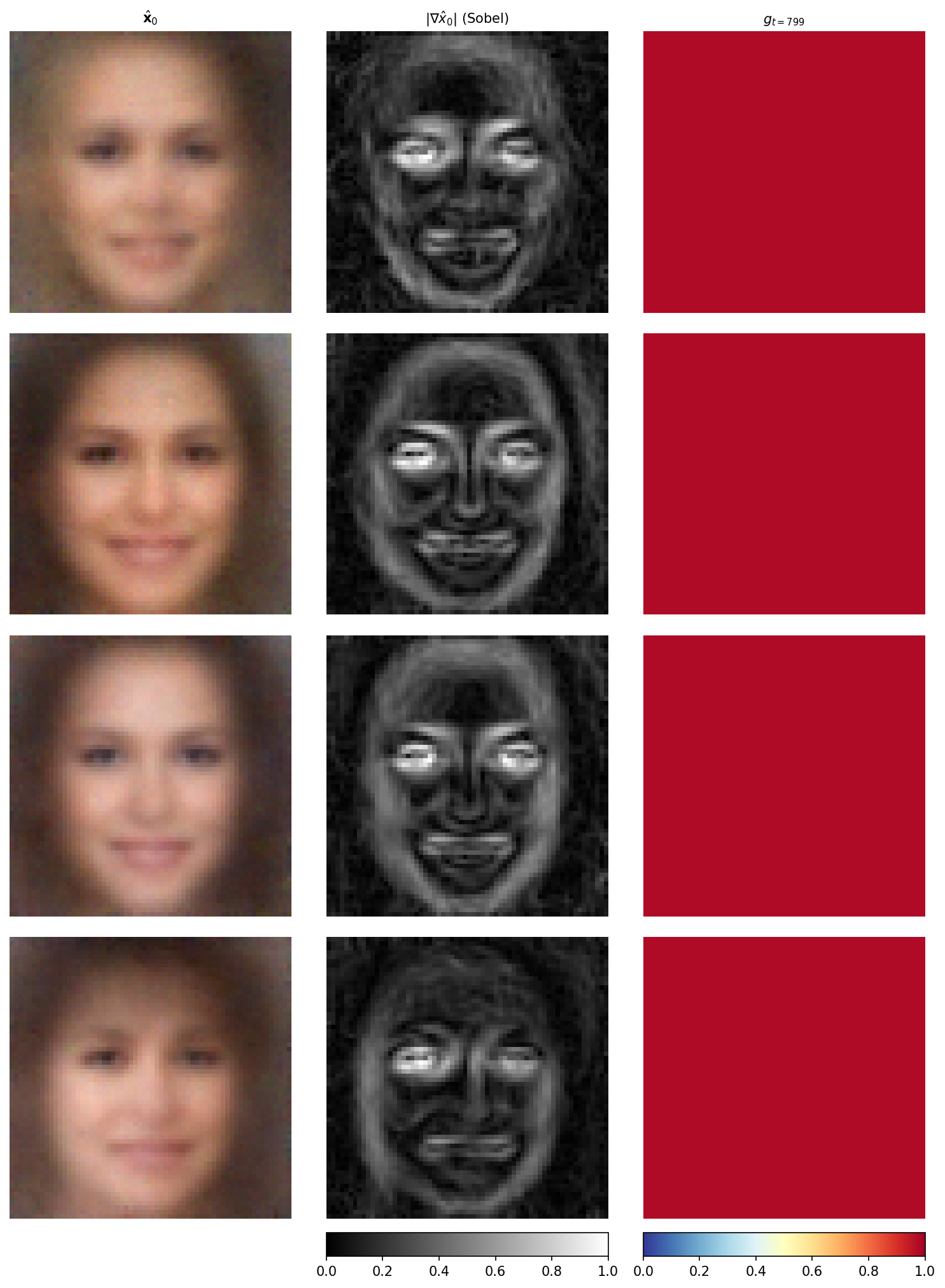}
        \caption{\footnotesize $t\!=\!800$}
        \label{fig:gating_vs_edges_celeba_800}
    \end{subfigure}
    \begin{subfigure}[b]{0.24\textwidth}
        \centering
        \includegraphics[width=\linewidth]{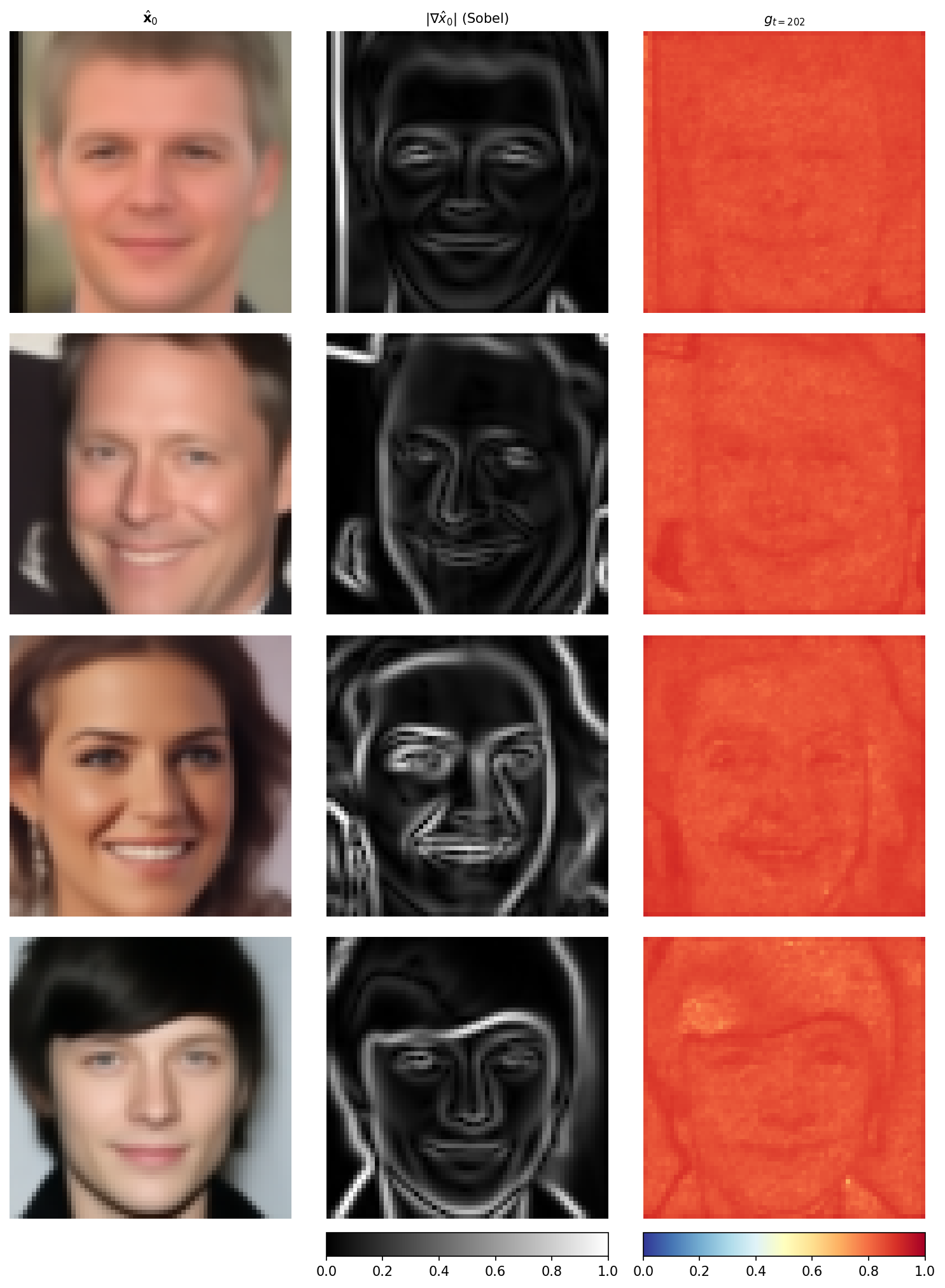}
        \caption{\footnotesize $t\!=\!200$}
        \label{fig:gating_vs_edges_celeba_200}
    \end{subfigure}
    \begin{subfigure}[b]{0.24\textwidth}
        \centering
        \includegraphics[width=\linewidth]{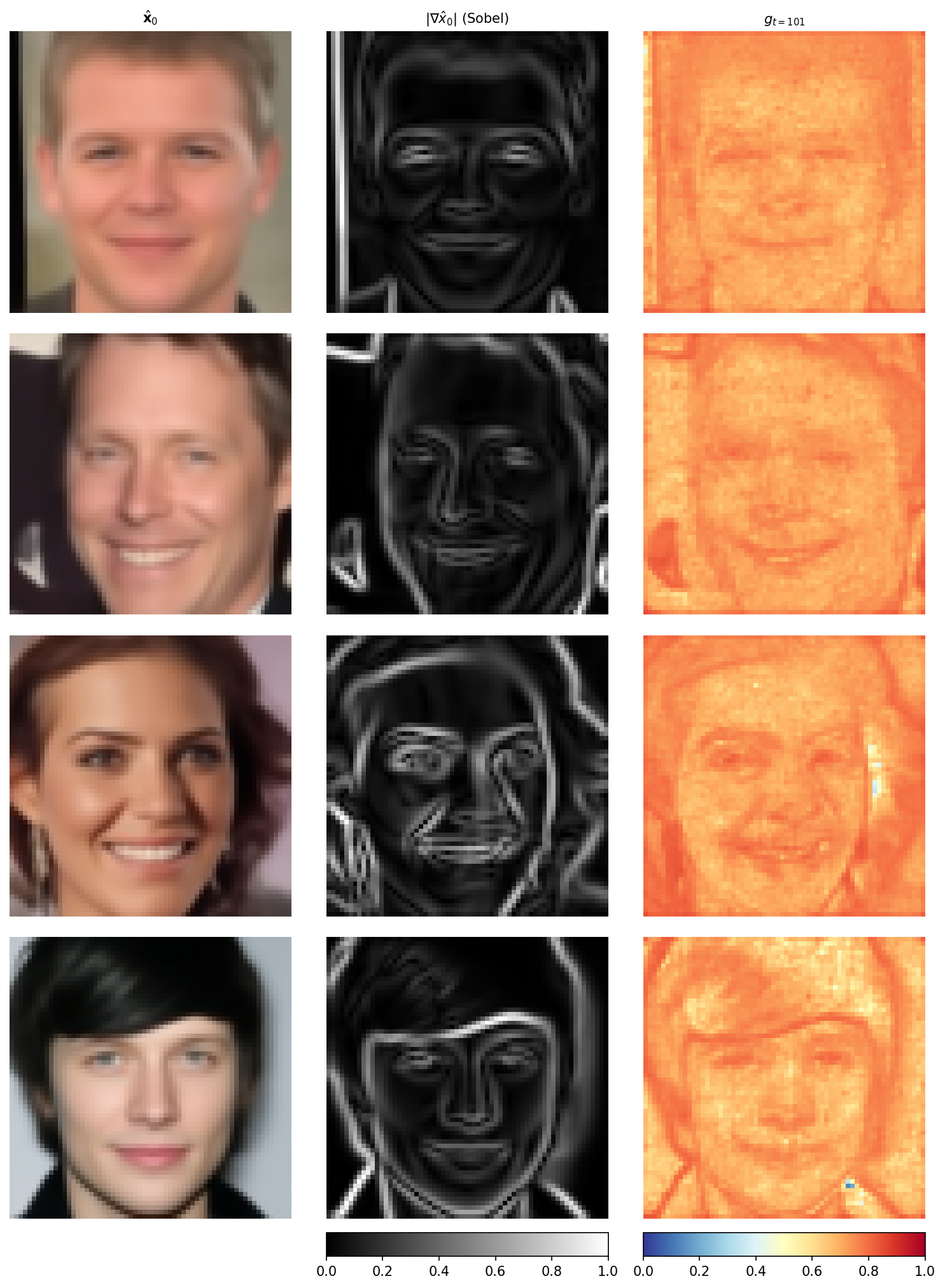}
        \caption{\footnotesize $t\!=\!100$}
    \label{fig:gating_vs_edges_celeba_100}
    \end{subfigure}
    \begin{subfigure}[b]{0.24\textwidth}
        \centering
        \includegraphics[width=\linewidth]{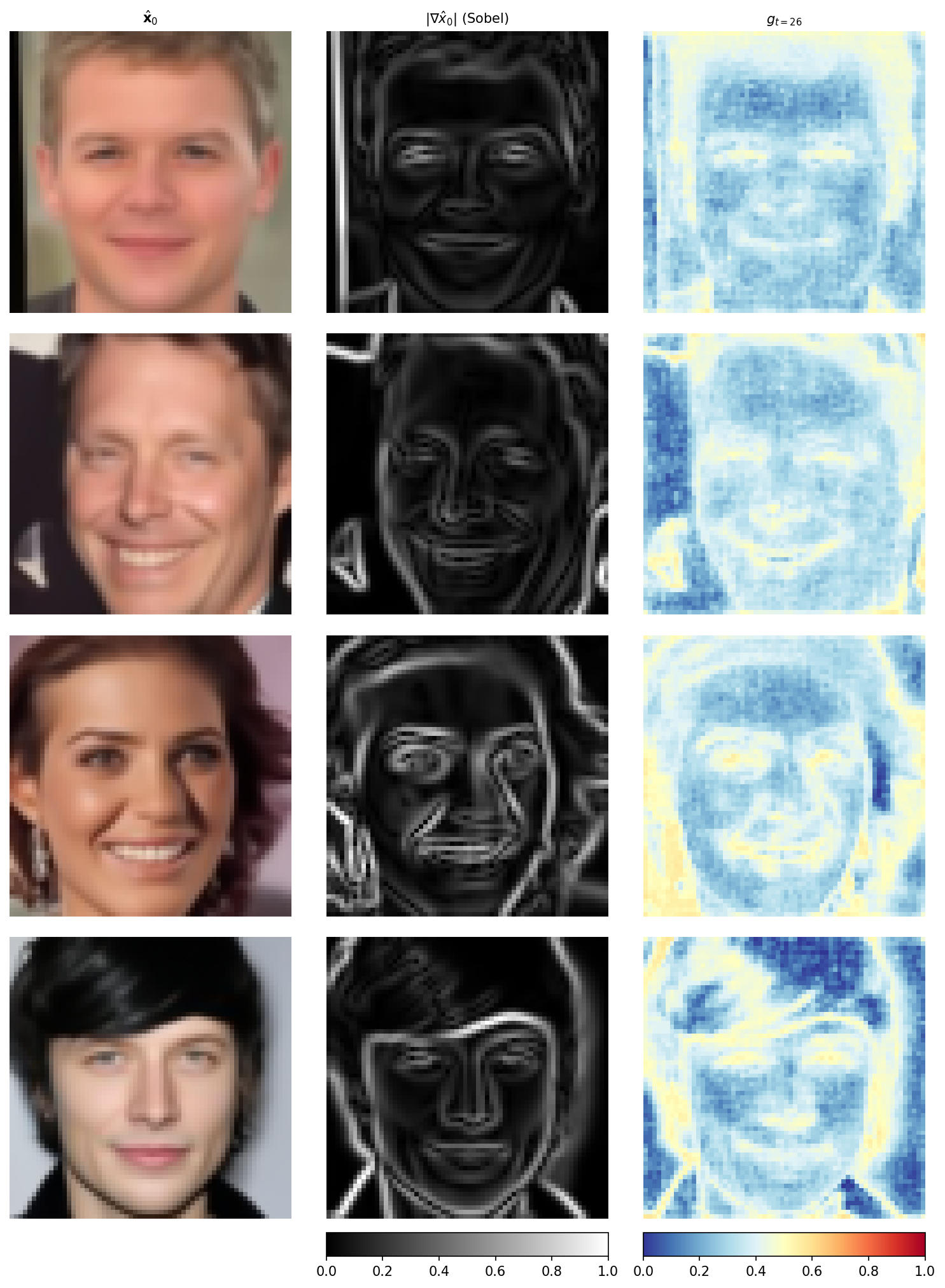}
        \caption{\footnotesize $t\!=\!25$}
                \label{fig:gating_vs_edges_celeba_25}
    \end{subfigure}
    \caption{Per-pixel gating $g_{t,i}$ vs.\ Sobel edge map of $\hat{\mathbf{x}}_0$ on CelebA. Each subplot: predicted image (left), Sobel magnitude (center), gating map (right, blue$\,{=}\,0$, red$\,{=}\,1$). The gating map progressively aligns with image structure without any edge-detection supervision.}
    \label{fig:app_gating_vs_edges_celeba}
\end{figure}

\begin{figure}[t]
    \centering
    \begin{subfigure}[b]{0.24\textwidth}
        \centering
        \includegraphics[width=\linewidth]{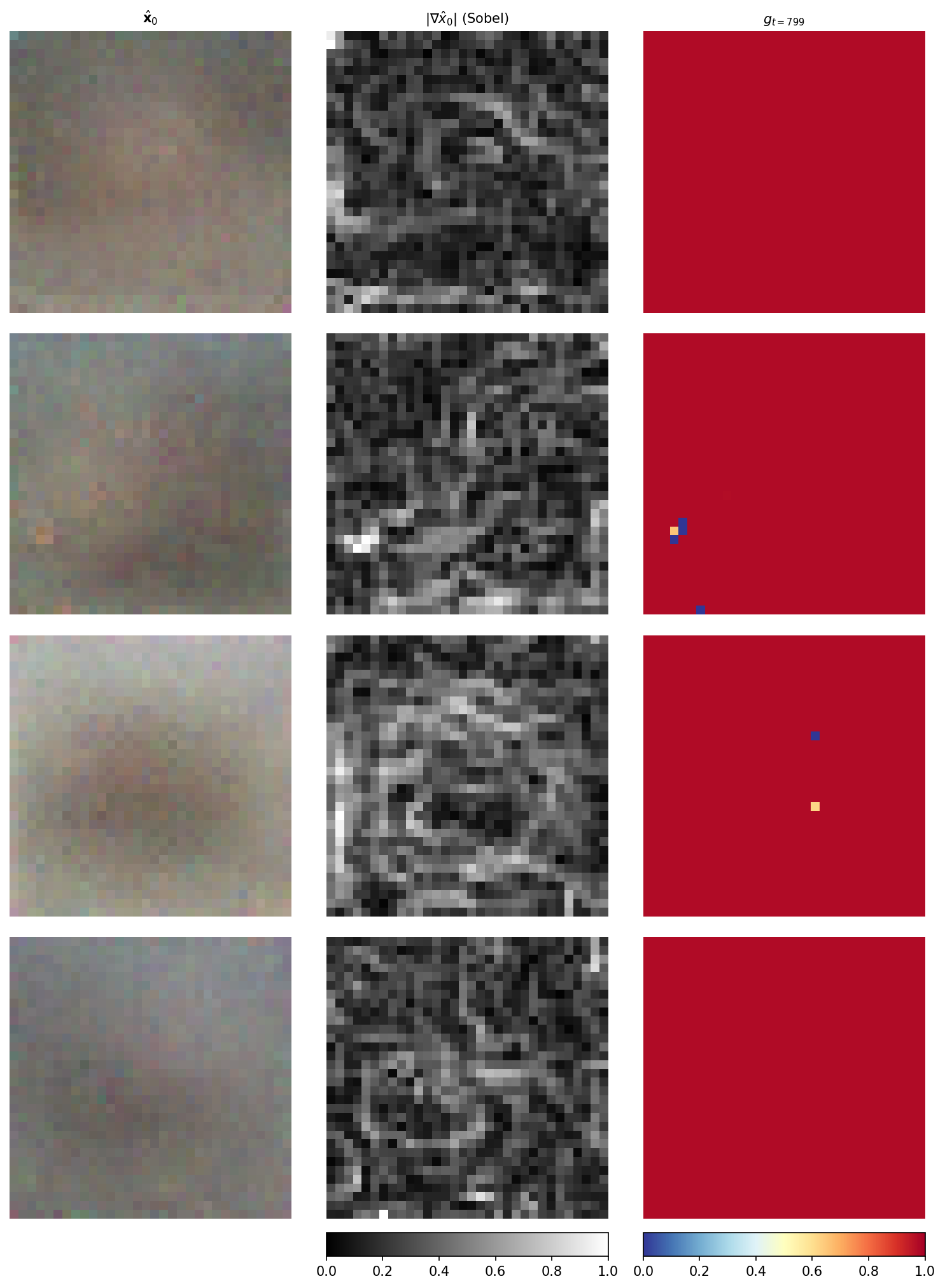}
        \caption{\footnotesize $t\!=\!800$}
        \label{fig:gating_vs_edges_cifar10ls_800}
    \end{subfigure}
    \begin{subfigure}[b]{0.24\textwidth}
        \centering
        \includegraphics[width=\linewidth]{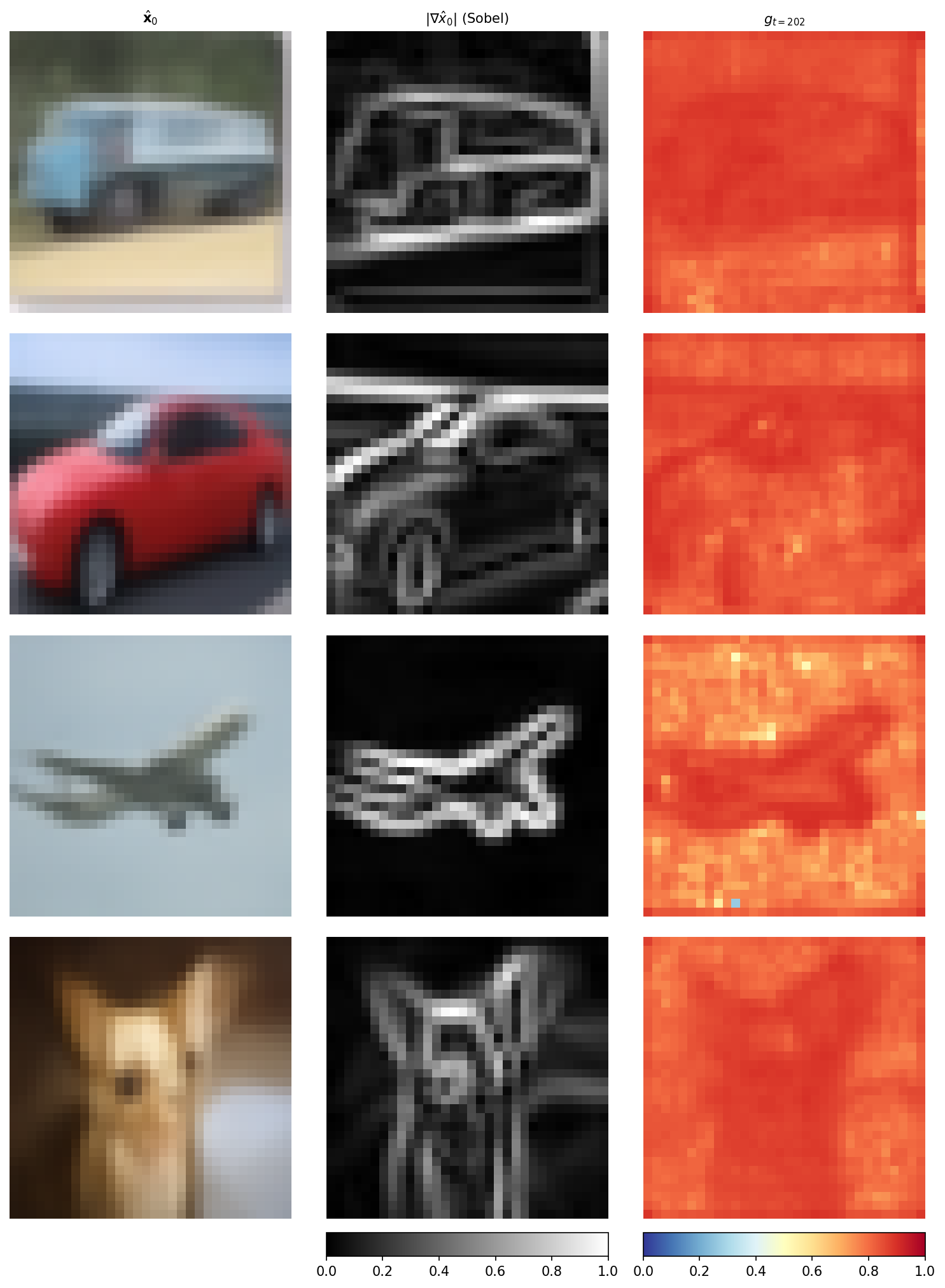}
        \caption{\footnotesize $t\!=\!200$}
        \label{fig:gating_vs_edges_cifar10ls_200}
    \end{subfigure}
    \begin{subfigure}[b]{0.24\textwidth}
        \centering
        \includegraphics[width=\linewidth]{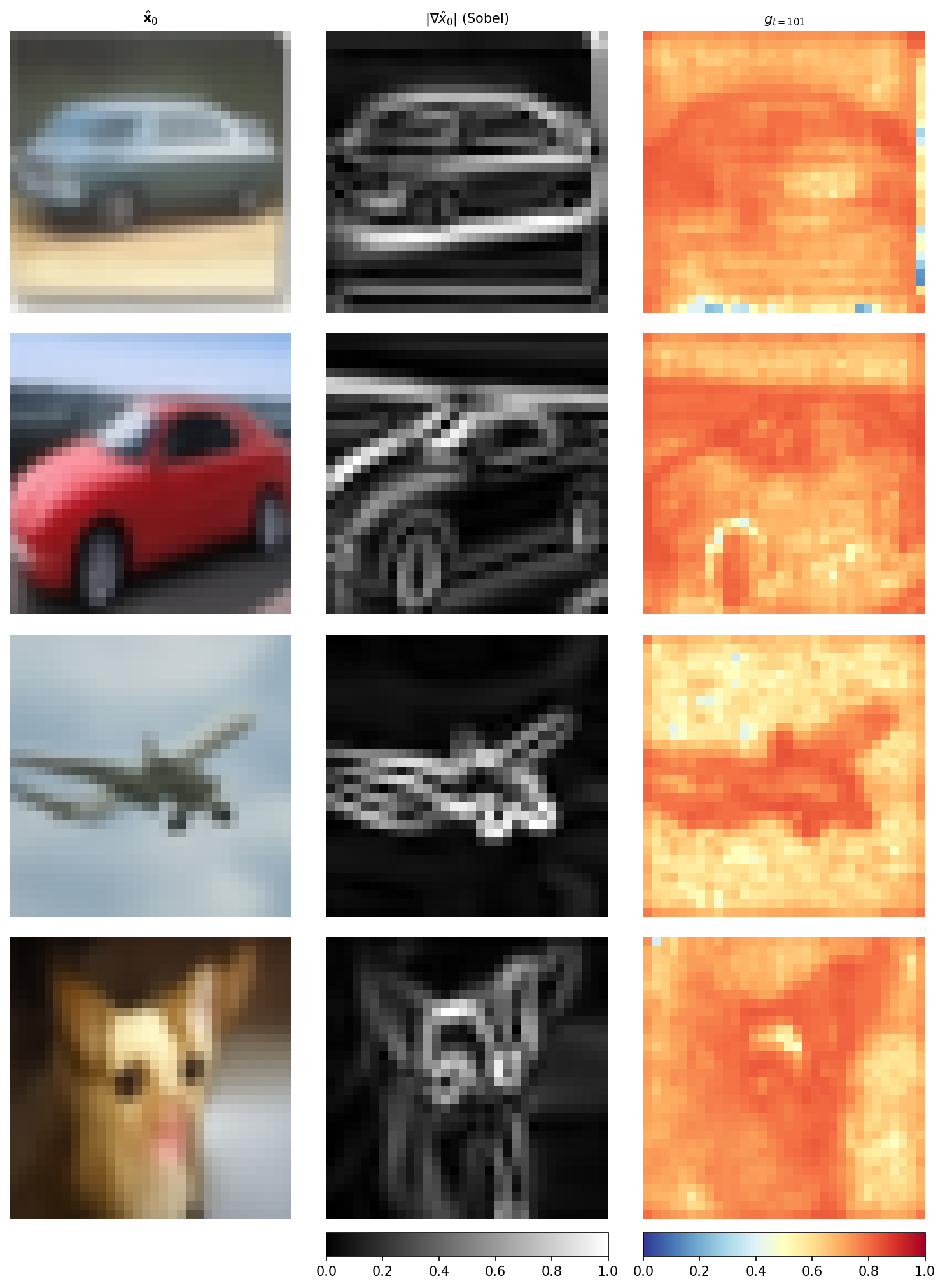}
        \caption{\footnotesize $t\!=\!100$}
    \label{fig:gating_vs_edges_cifar10ls_100}
    \end{subfigure}
    \begin{subfigure}[b]{0.24\textwidth}
        \centering
        \includegraphics[width=\linewidth]{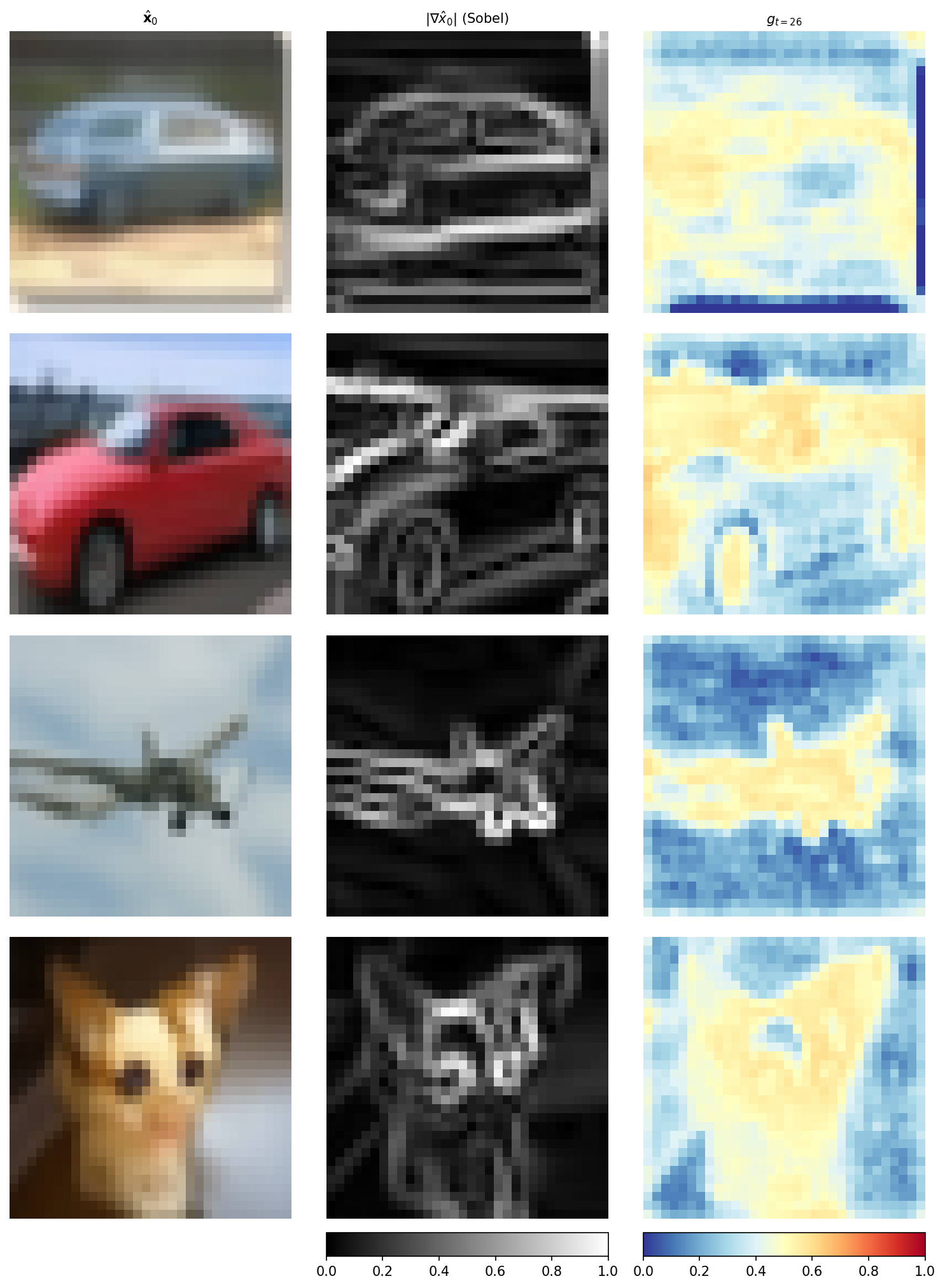}
        \caption{\footnotesize $t\!=\!25$}
                \label{fig:gating_vs_edges_cifar10ls_25}
    \end{subfigure}
    \caption{Per-pixel gating $g_{t,i}$ vs.\ Sobel edge map of $\hat{\mathbf{x}}_0$ on Cifar10 (LS). Each subplot: predicted image (left), Sobel magnitude (center), gating map (right, blue$\,{=}\,0$, red$\,{=}\,1$). The gating map progressively aligns with image structure without any edge-detection supervision.}
    \label{fig:app_gating_vs_edges_cifar10ls}
\end{figure}

\begin{figure}%[t]
    \centering
    \begin{subfigure}[b]{0.24\textwidth}
        \centering
        \includegraphics[width=\linewidth]{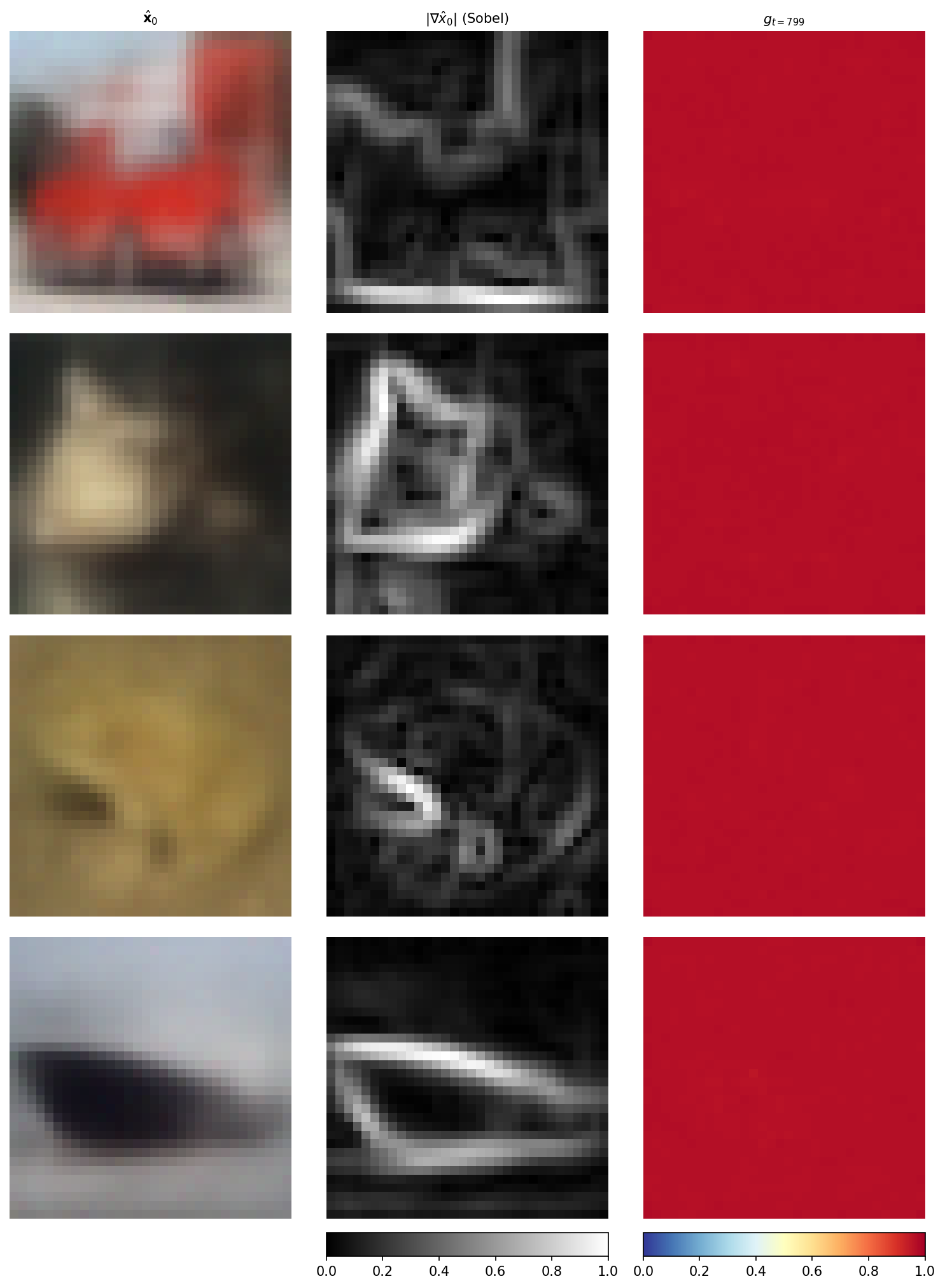}
        \caption{\footnotesize $t\!=\!800$}
        \label{fig:gating_vs_edges_cifar10cs_800}
    \end{subfigure}
    \begin{subfigure}[b]{0.24\textwidth}
        \centering
        \includegraphics[width=\linewidth]{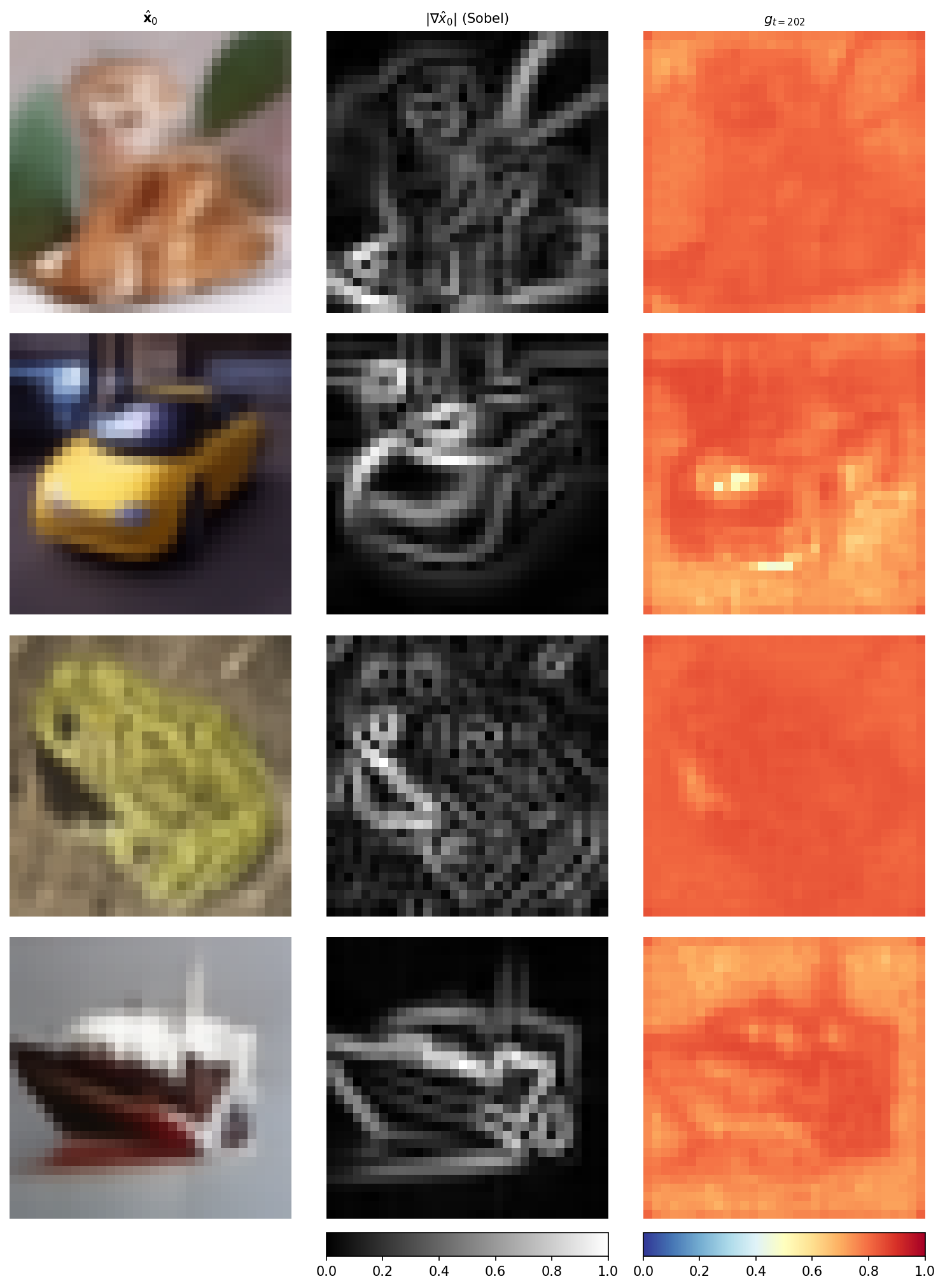}
        \caption{\footnotesize $t\!=\!200$}
        \label{fig:gating_vs_edges_cifar10cs_200}
    \end{subfigure}
    \begin{subfigure}[b]{0.24\textwidth}
        \centering
        \includegraphics[width=\linewidth]{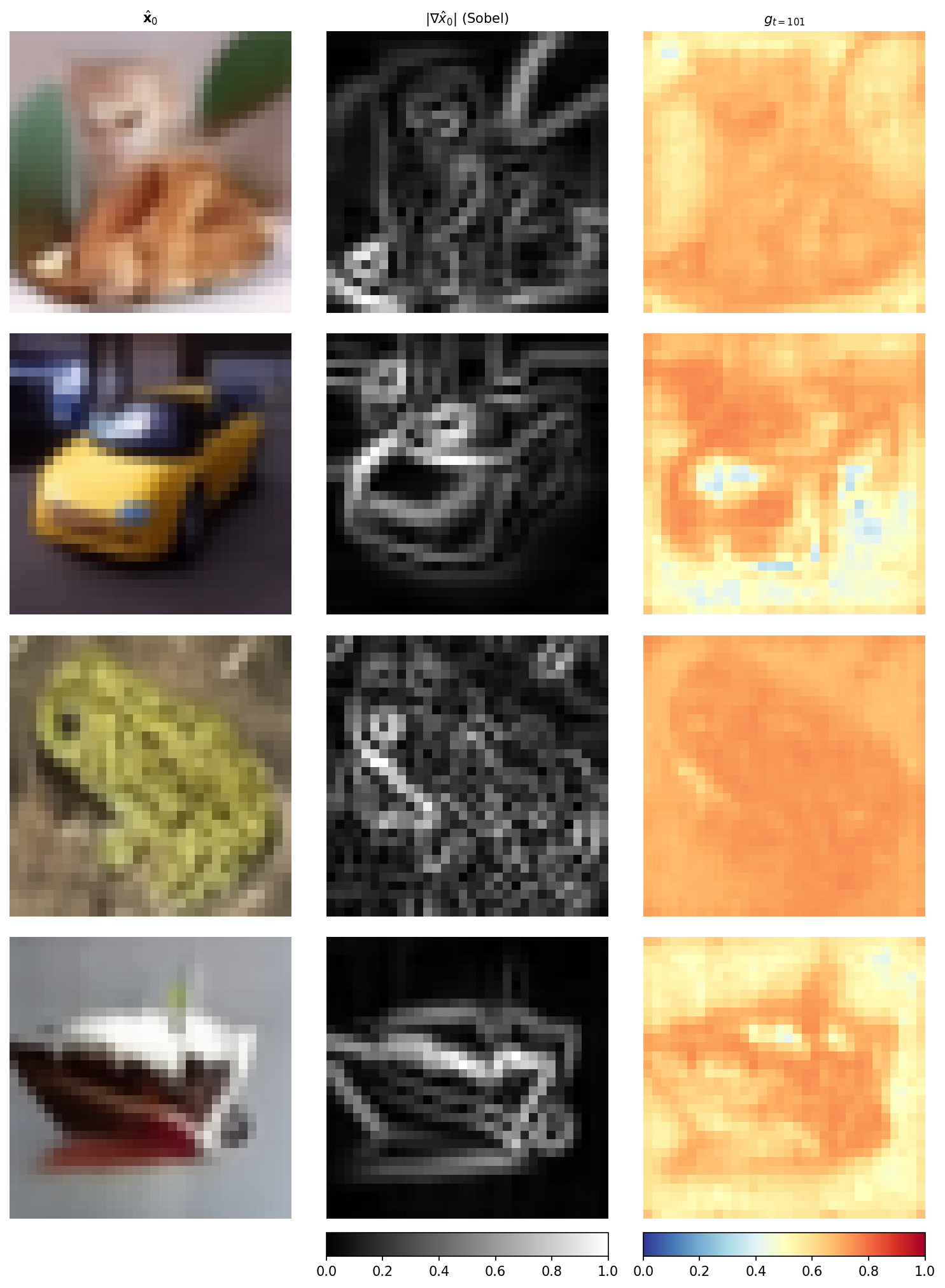}
        \caption{\footnotesize $t\!=\!100$}
    \label{fig:gating_vs_edges_cifar10cs_100}
    \end{subfigure}
    \begin{subfigure}[b]{0.24\textwidth}
        \centering
        \includegraphics[width=\linewidth]{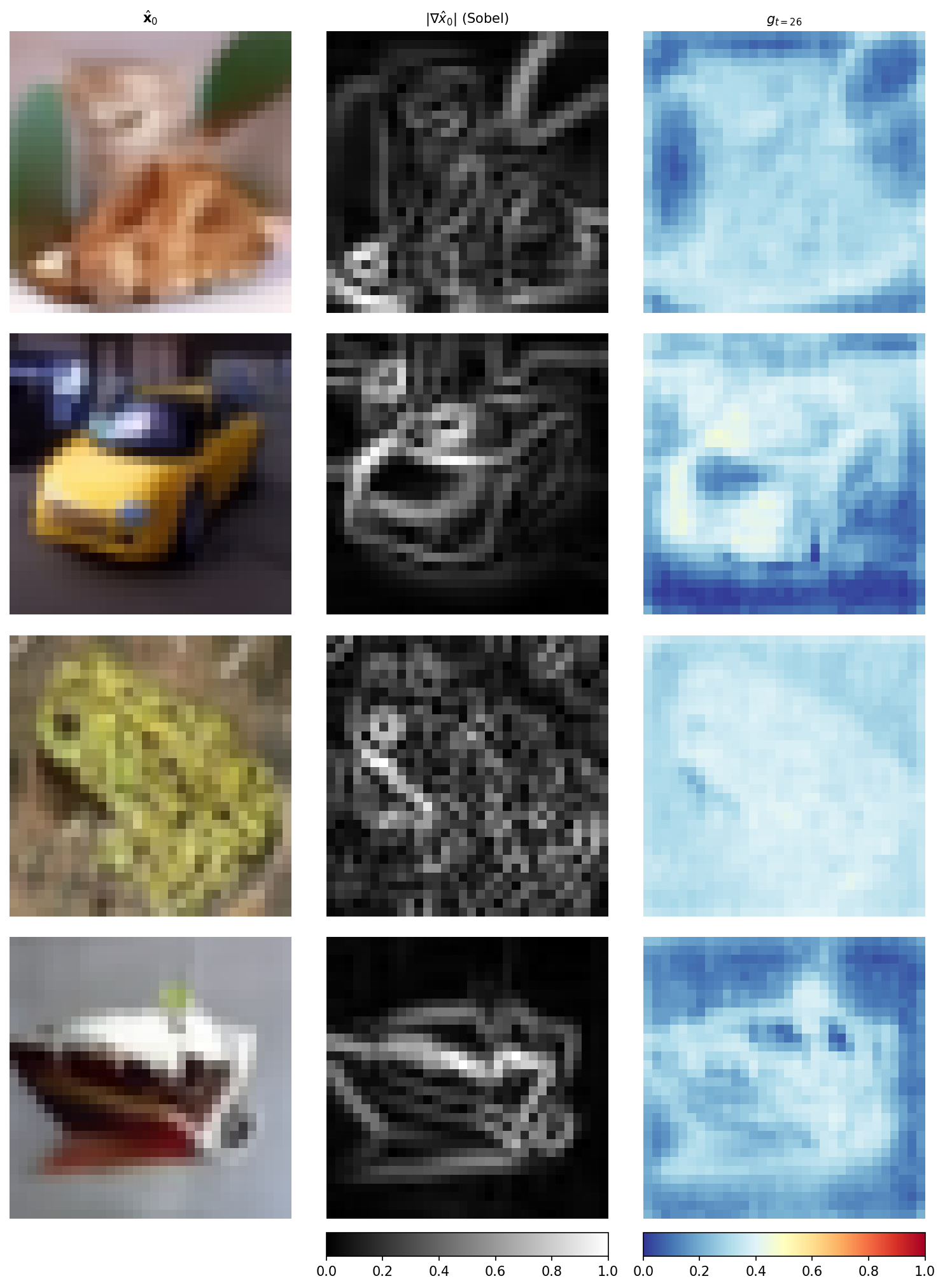}
        \caption{\footnotesize $t\!=\!25$}
                \label{fig:app_gating_vs_edges_cifar10cs_25}
    \end{subfigure}
    \caption{Per-pixel gating $g_{t,i}$ vs.\ Sobel edge map of $\hat{\mathbf{x}}_0$ on Cifar10 (CS). Each subplot: predicted image (left), Sobel magnitude (center), gating map (right, blue$\,{=}\,0$, red$\,{=}\,1$). The gating map progressively aligns with image structure without any edge-detection supervision.}
    \label{fig:app_gating_vs_edges_cifar10cs}
\end{figure}

\section{Extended Residual Assumption \& Gating Dynamics}
\label{app:gating_dynamics_extended}
 
This appendix provides additional analysis for Sections~\ref{sec:exp_residual}-\ref{sec:exp_gating_behavior} extending the discussion to all four datasets.

\subsection{Gating Dynamics}
\label{app:gating_dynamics}
Figure~\ref{fig:app_cross_dataset_histograms} presents gating histograms for $\tau \in \{10^{-4}, 10^{-3}, 10^{-2 }10^{-1}\}$. Despite substantial variation in image content and resolution (CIFAR10 at $32 \times 32$, LSUN Bedroom at $256 \times 256$), the qualitative behavior of $g_{t,i}$ remains consistent, exhibiting a three-stage pattern across all datasets. The distribution is unimodal near $1$ st early timesteps, flattens and spreads broadly at intermediate steps(indicating  spatial discrimination between confident and uncertain regions), and collapses near $0$ at late timesteps. This invariance  across dataset indicates that the spatial adaptivity of SANI is governed by the intrinsic information geometry of the denoising distribution, rather than by dataset-specific characteristics.
 \begin{figure}[t]
  \centering
  % ========== Row 1: cifar10_ls ==========
  \begin{minipage}{0.05\textwidth}
    \centering
    \rotatebox{90}{{Cifar10 (LS)}}
  \end{minipage}%
  \begin{minipage}{0.92\textwidth}
    \begin{subfigure}[b]{0.23\textwidth}
      \centering
      \includegraphics[width=\linewidth]{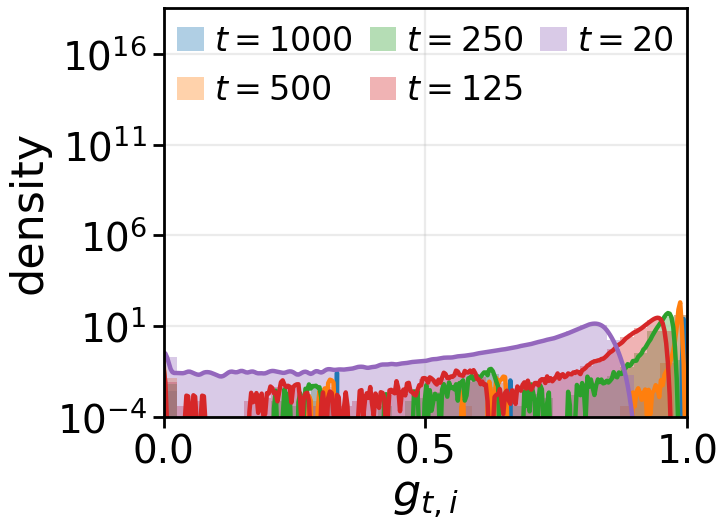}
      \label{fig:gating_histograms_a}
    \end{subfigure}%
    \begin{subfigure}[b]{0.23\textwidth}
      \centering
      \includegraphics[width=\linewidth]{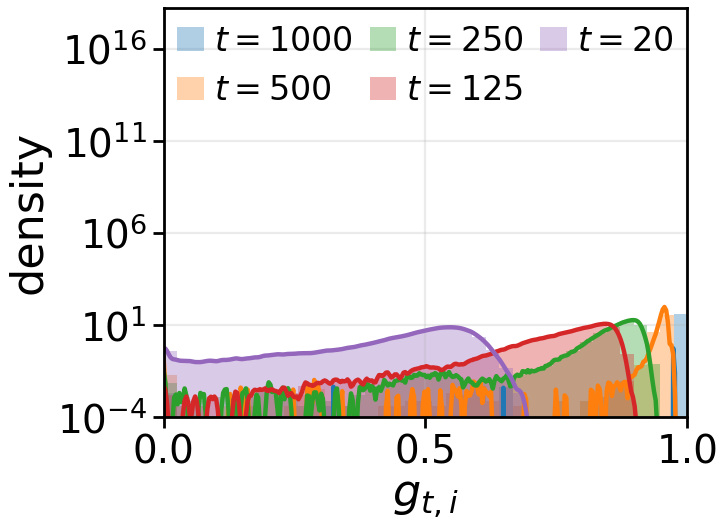}
      \label{fig:gating_histograms_b}
    \end{subfigure}%
    \begin{subfigure}[b]{0.23\textwidth}
      \centering
      \includegraphics[width=\linewidth]{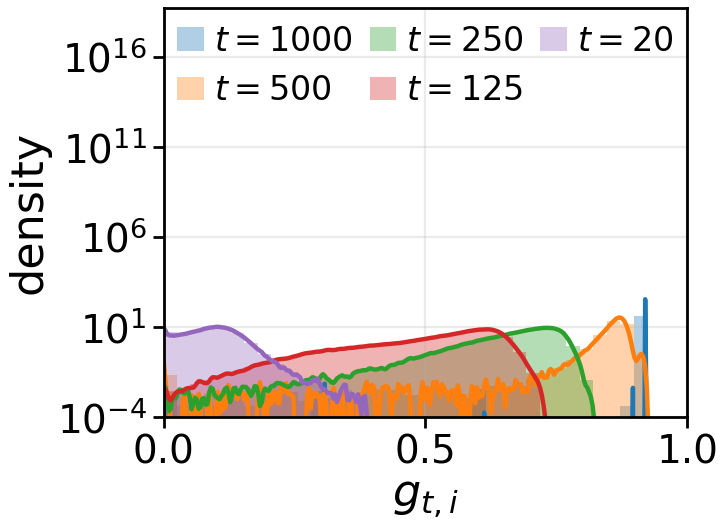}
      \label{fig:gating_histograms_c}
    \end{subfigure}%
    \begin{subfigure}[b]{0.23\textwidth}
      \centering
      \includegraphics[width=\linewidth]{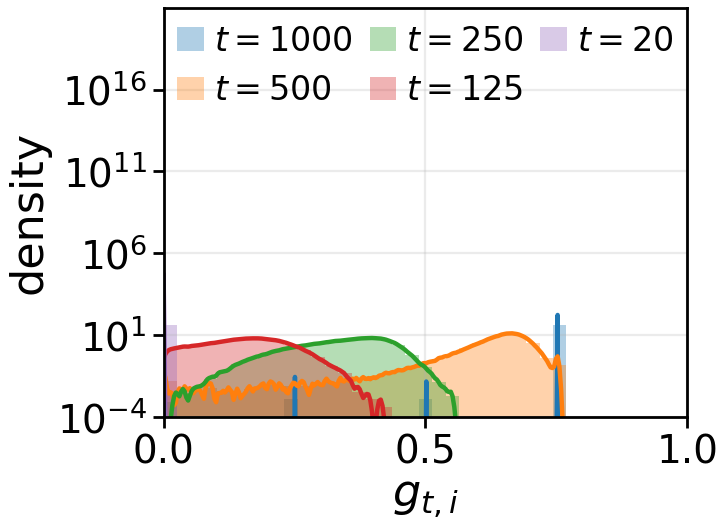}
      \label{fig:gating_histograms_d}
    \end{subfigure}
  \end{minipage}

  \vspace{1ex}  % vertical space between rows

  % ========== Row 2: cifar10_cs ==========
  \begin{minipage}{0.05\textwidth}
    \centering
    \rotatebox{90}{{Cifar10 (CS)}}
  \end{minipage}%
  \begin{minipage}{0.92\textwidth}
    \begin{subfigure}[b]{0.23\textwidth}
      \centering
      \includegraphics[width=\linewidth]{figs/gating_histogram/cifar10_cs/gating_histogram_cifar10_cs_c0.0001_K1000_all_1000.png}
      \label{fig:gating_histograms_e}
    \end{subfigure}%
    \begin{subfigure}[b]{0.23\textwidth}
      \centering
      \includegraphics[width=\linewidth]{figs/gating_histogram/cifar10_cs/gating_histogram_cifar10_cs_c0.001_K1000_all_1000.png}
      \label{fig:gating_histograms_f}
    \end{subfigure}%
    \begin{subfigure}[b]{0.23\textwidth}
      \centering
      \includegraphics[width=\linewidth]{figs/gating_histogram/cifar10_cs/gating_histogram_cifar10_cs_c0.01_K1000_all_1000.png}
      \label{fig:gating_histograms_g}
    \end{subfigure}%
    \begin{subfigure}[b]{0.23\textwidth}
      \centering
      \includegraphics[width=\linewidth]{figs/gating_histogram/cifar10_cs/gating_histogram_cifar10_cs_c0.1_K1000_all_1000.png}
      \label{fig:gating_histograms_h}
    \end{subfigure}
  \end{minipage}

  \vspace{1ex}

  % ========== Row 3: celeba64 ==========
  \begin{minipage}{0.05\textwidth}
    \centering
    \rotatebox{90}{{Celeba64}}
  \end{minipage}%
  \begin{minipage}{0.92\textwidth}
    \begin{subfigure}[b]{0.23\textwidth}
      \centering
      \includegraphics[width=\linewidth]{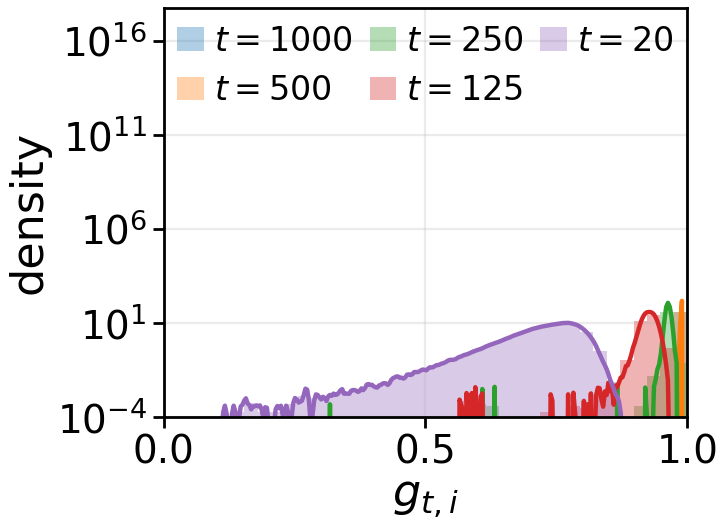}
      \label{fig:gating_histograms_i}
    \end{subfigure}%
    \begin{subfigure}[b]{0.23\textwidth}
      \centering
      \includegraphics[width=\linewidth]{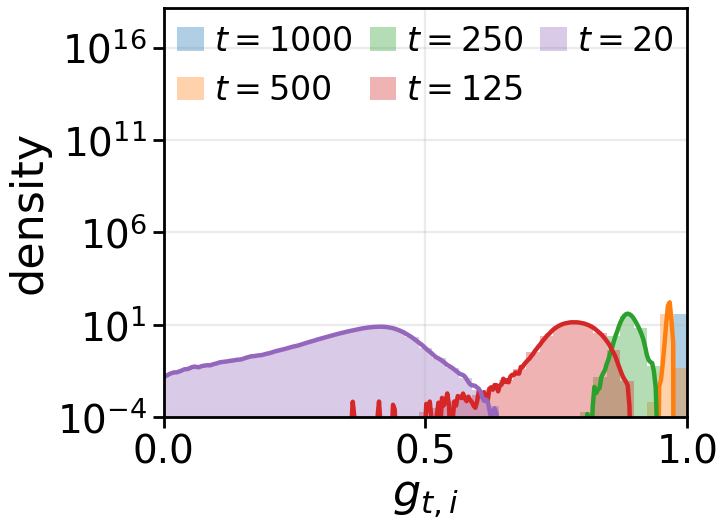}
      \label{fig:gating_histograms_j}
    \end{subfigure}%
    \begin{subfigure}[b]{0.23\textwidth}
      \centering
      \includegraphics[width=\linewidth]{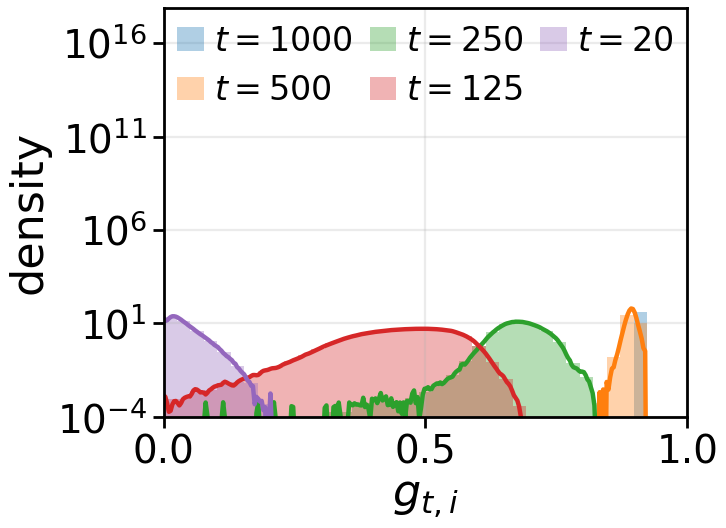}
      \label{fig:gating_histograms_k}
    \end{subfigure}%
    \begin{subfigure}[b]{0.23\textwidth}
      \centering
      \includegraphics[width=\linewidth]{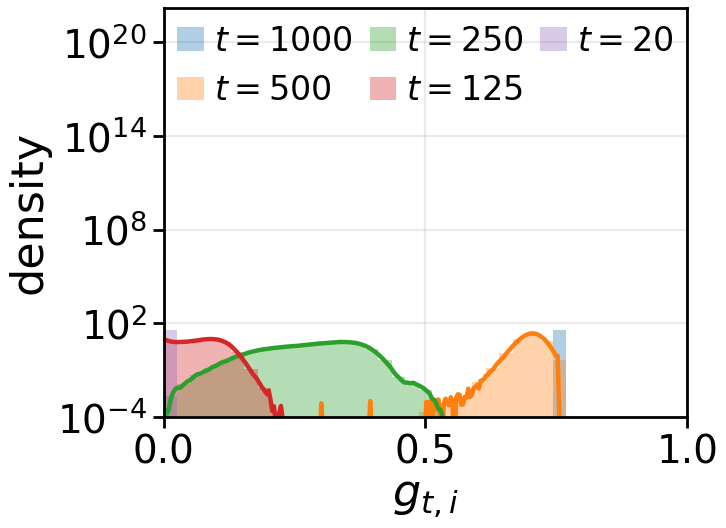}
      \label{fig:gating_histograms_l}
    \end{subfigure}
  \end{minipage}

  \vspace{1ex}

  % ========== Row 4: lsun_bedroom ==========
  \begin{minipage}{0.05\textwidth}
    \centering
    \rotatebox{90}{{LSUN}}
  \end{minipage}%
  \begin{minipage}{0.92\textwidth}
    \begin{subfigure}[b]{0.23\textwidth}
      \centering
      \includegraphics[width=\linewidth]{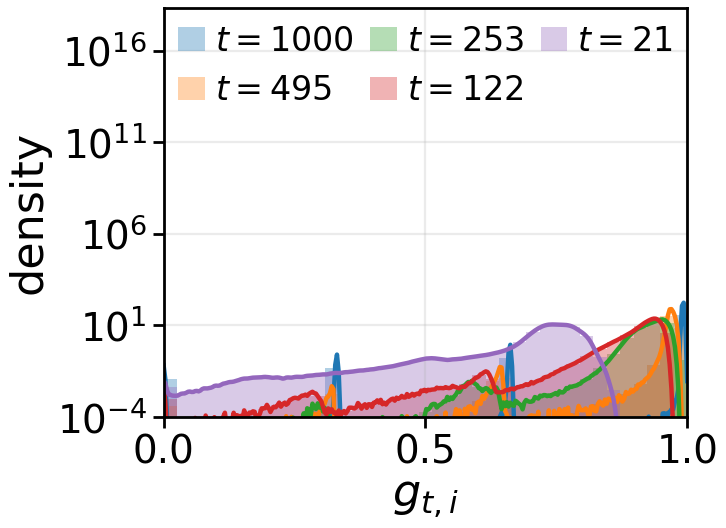}
      \caption{\footnotesize $\tau\!=\!10^{-4}$}
      \label{fig:gating_histograms_m}
    \end{subfigure}%
    \begin{subfigure}[b]{0.23\textwidth}
      \centering
      \includegraphics[width=\linewidth]{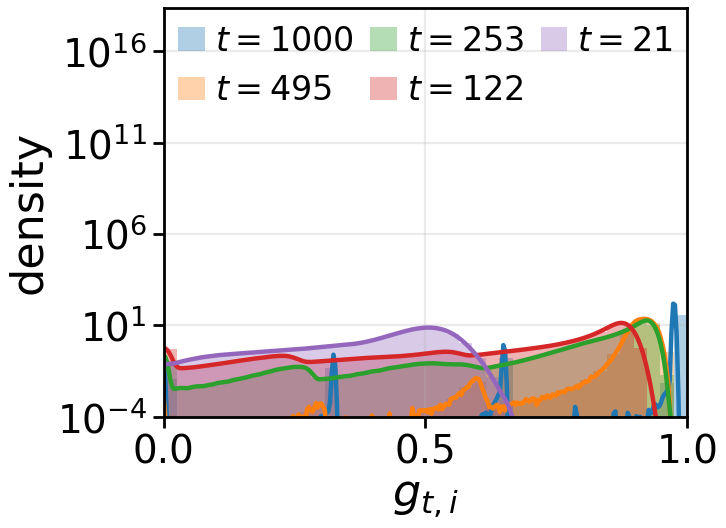}
      \caption{\footnotesize $\tau\!=\!10^{-3}$}
      \label{fig:gating_histograms_n}
    \end{subfigure}%
    \begin{subfigure}[b]{0.23\textwidth}
      \centering
      \includegraphics[width=\linewidth]{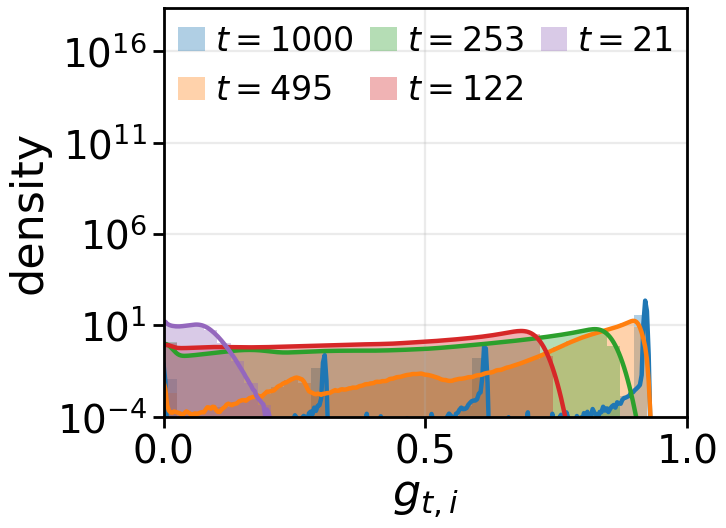}
      \caption{\footnotesize $\tau\!=\!10^{-2}$}
      \label{fig:gating_histograms_o}
    \end{subfigure}%
    \begin{subfigure}[b]{0.23\textwidth}
      \centering
      \includegraphics[width=\linewidth]{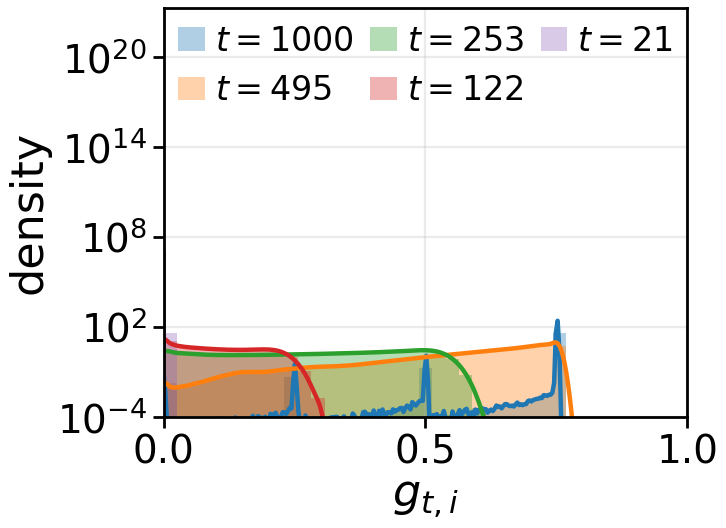}
      \caption{\footnotesize $\tau\!=\!10^{-1}$}
      \label{fig:gating_histograms_p}
    \end{subfigure}
  \end{minipage}
  \caption{Spatial distribution of gating values $g_{t,i}$ along the full reverse trajectory across datasets (rows), for four tolerances (columns). Each subplot overlays five timesteps $t \in \{1000, 500, 250, 125, 20\}$, with density on a logarithmic scale. Despite varying resolution and content, the temporal evolution of $g_{t,i}$ is qualitatively consistent.}
  %The distribution is a near-point mass at t = 1000 (spatially uniform uncertainty under pure noise), spreads over [0, 1] at intermediate timesteps (spatial discrimination between confident and uncertain regions), and collapses toward g = 0 at the final steps. Increasing τ by a decade translates this stochastic-to-deterministic transition earlier along the trajectory without changing its shape, as predicted by the dependence of the gate on τ /vi alone }
  \label{fig:app_cross_dataset_histograms}
\end{figure}

Sections~\ref{sec:exp_gating_behavior} analyzes the effecs of tolerance~$\tau$ and temporal gating dynamics independently. The first examines how~$\tau$ influences the distribution of~$g_{t,i}$ at fixed timesteps, while the second tracks the mean and variance of~$g_{t,i}$ over time for a single~$\tau$. \Cref{fig:app_gating_dynamics_tau} integrates both perspectives by plotting the mean gating value~$\bar{g}$ (solid line) with $\pm 1$ standard deviation (shaded band) throughout the reverse process for four tolerance values $\tau \in \{10^{-5}, 10^{-4}, 10^{-3}, 10^{-2}\}$, across all four datasets with $K = 1000$.

%\textbf{Shared structure across datasets.} The qualitative pattern is remarkably consistent.
All four~$\tau$ curves begin near $\bar{g} = 1$ at $t = T$, indicating global uncertainty, and decay monotonically toward $\bar{g} \approx 0$ as $t$ approaches $1$, reflecting increasing model confidence. Larger~$\tau$ values result in an earlier and steeper decay. For $\tau = 10^{-2}$ (yellow), $\bar{g}$ fails below $0.75$ by $t \approx 600$ on CIFAR10, areas for $\tau = 10^{-5}$ (dark blue), it remains above $0.95$ until $t \approx 100$. These results support the theoretical prediction from Eq.~\ref{eq:gating_closed} that the gating function transitions from stochastic to deterministic when the per-pixel variance~$v_i$ drops below~$\tau$, with larger~$\tau$ inducing this transition earlier.

%\textbf{Spatial heterogeneity from the standard deviation bands.}
The width of the $\pm\mathrm{std}$ band quantifies the spatial heterogeneity of the gating map at each timestep. At the beggining and end of the process ($t$ near $T$ or near $1$), the bands are narrow indicating that all pixels are either uniformly uncertain or uniformly confident. The bands reach their maximum width at intermediate timesteps, corresponding to the phase where some image regions have been resolved while others remain ambiguous. This peak in heterogeneity occurs earlier for larger~$\tau$ values, consistent with the observation that a more permissive tolerance initiates pixel discrimination sooner.%(the yellow band widens before the orange, which widens before the purple)

The ordering and monotonicity of the~$\tau$ curves are maintained across all datasets.
%\textbf{Dataset-dependent features.} On CIFAR10 with both schedules and on CelebA-64, the mean curves are smooth and the standard deviation bands are moderate, reflecting the relatively homogeneous spatial structure at $32 \times 32$ and $64 \times 64$ resolutions. In contrast, for LSUN Bedroom ($256 \times 256$), the bands are substantially wider and the mean curves display mild irregularity, indicative of the increased spatial complexity in high-resolution scenes. At a given timestep, the variance between a smooth wall region and a textured furniture boundary is considerably greater than in low-resolution images. Despite these quantitative differences, the ordering and monotonicity of the~$\tau$ curves are preserved across all datasets.

 \begin{figure}[h!]
    \centering
    \begin{subfigure}[b]{0.24\textwidth}
        \centering
        \includegraphics[width=\linewidth]{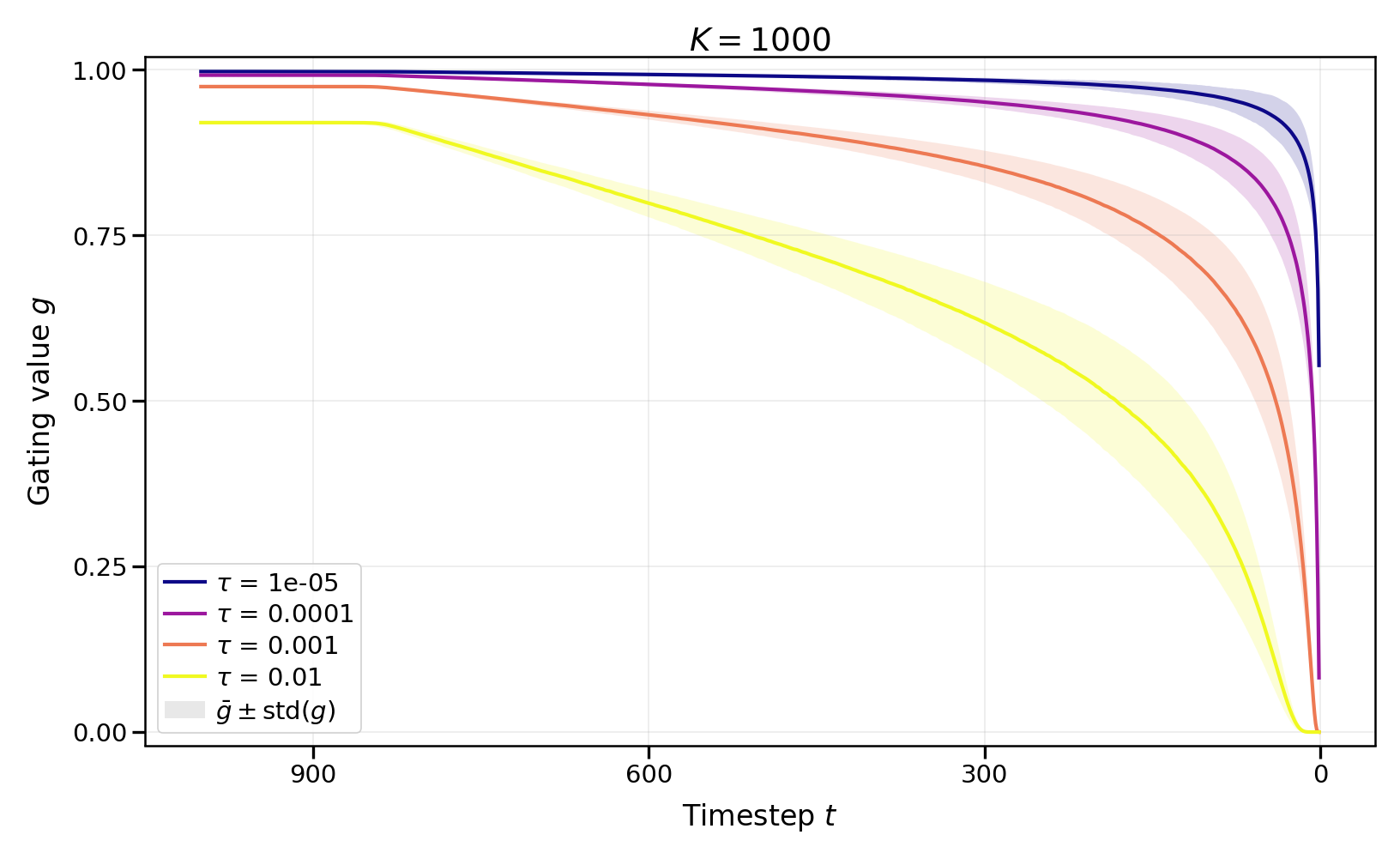}
        \caption{CIFAR10 (CS)}
        \label{fig:gating_dyn_cifar_cs}
    \end{subfigure}
    \hfill
    \begin{subfigure}[b]{0.24\textwidth}
        \centering
        \includegraphics[width=\linewidth]{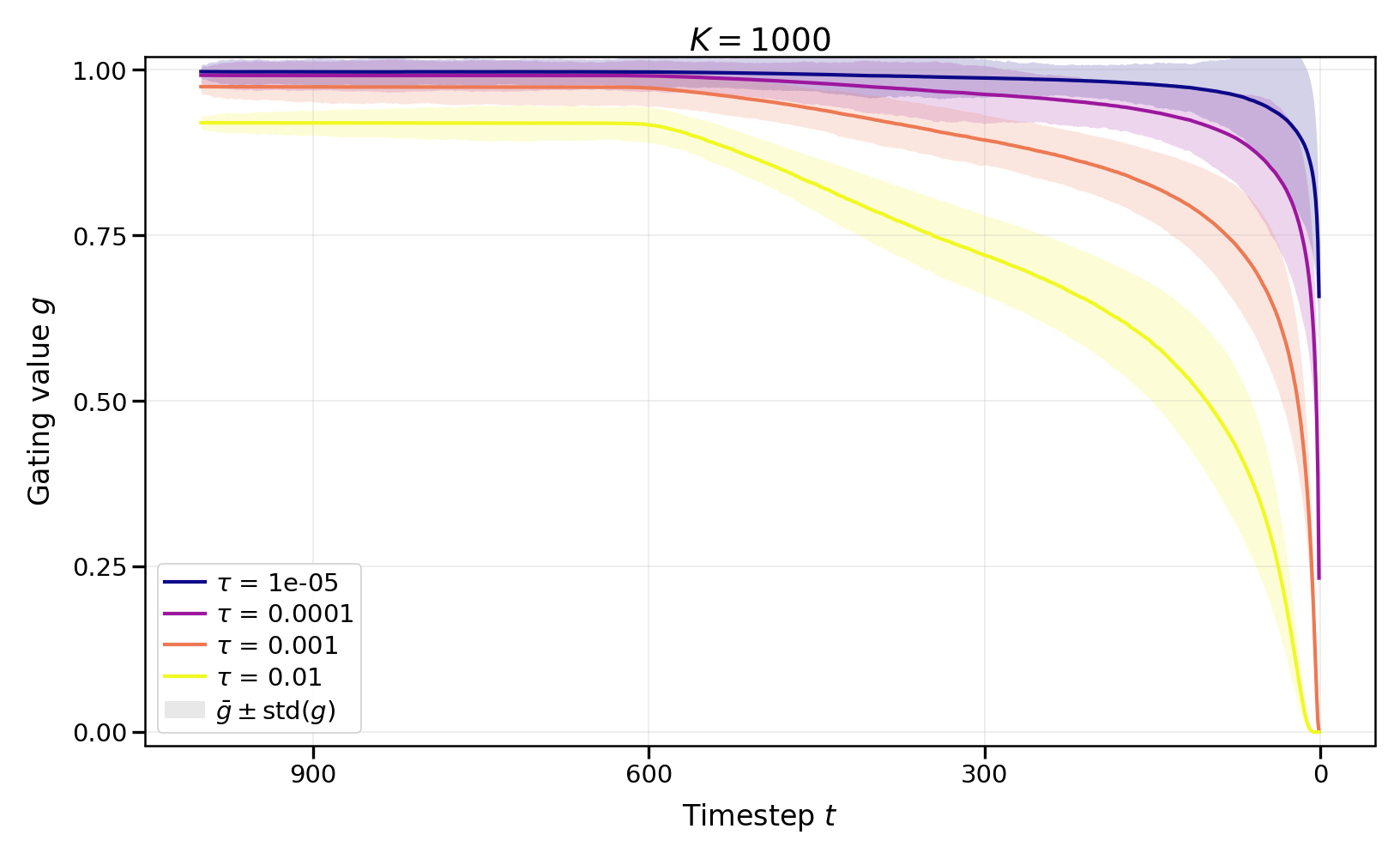}
        \caption{CIFAR10 (LS)}
        \label{fig:gating_dyn_cifar_ls}
    \end{subfigure}
    %\vspace{0.3cm}
    \begin{subfigure}[b]{0.24\textwidth}
        \centering
        \includegraphics[width=\linewidth]{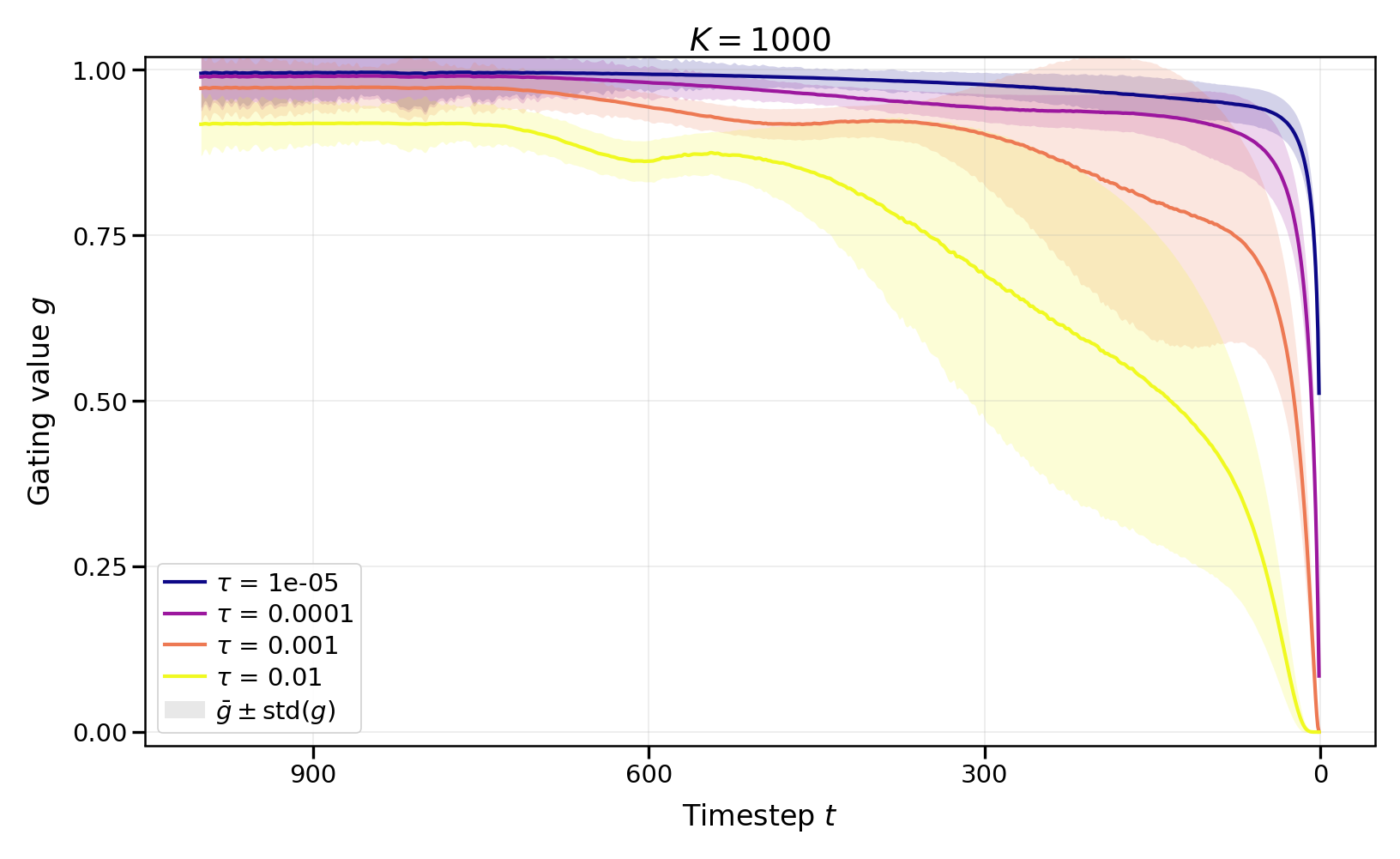}
        \caption{LSUN Bedroom}
        \label{fig:gating_dyn_lsun}
    \end{subfigure}
    \hfill
    \begin{subfigure}[b]{0.24\textwidth}
        \centering
        \includegraphics[width=\linewidth]{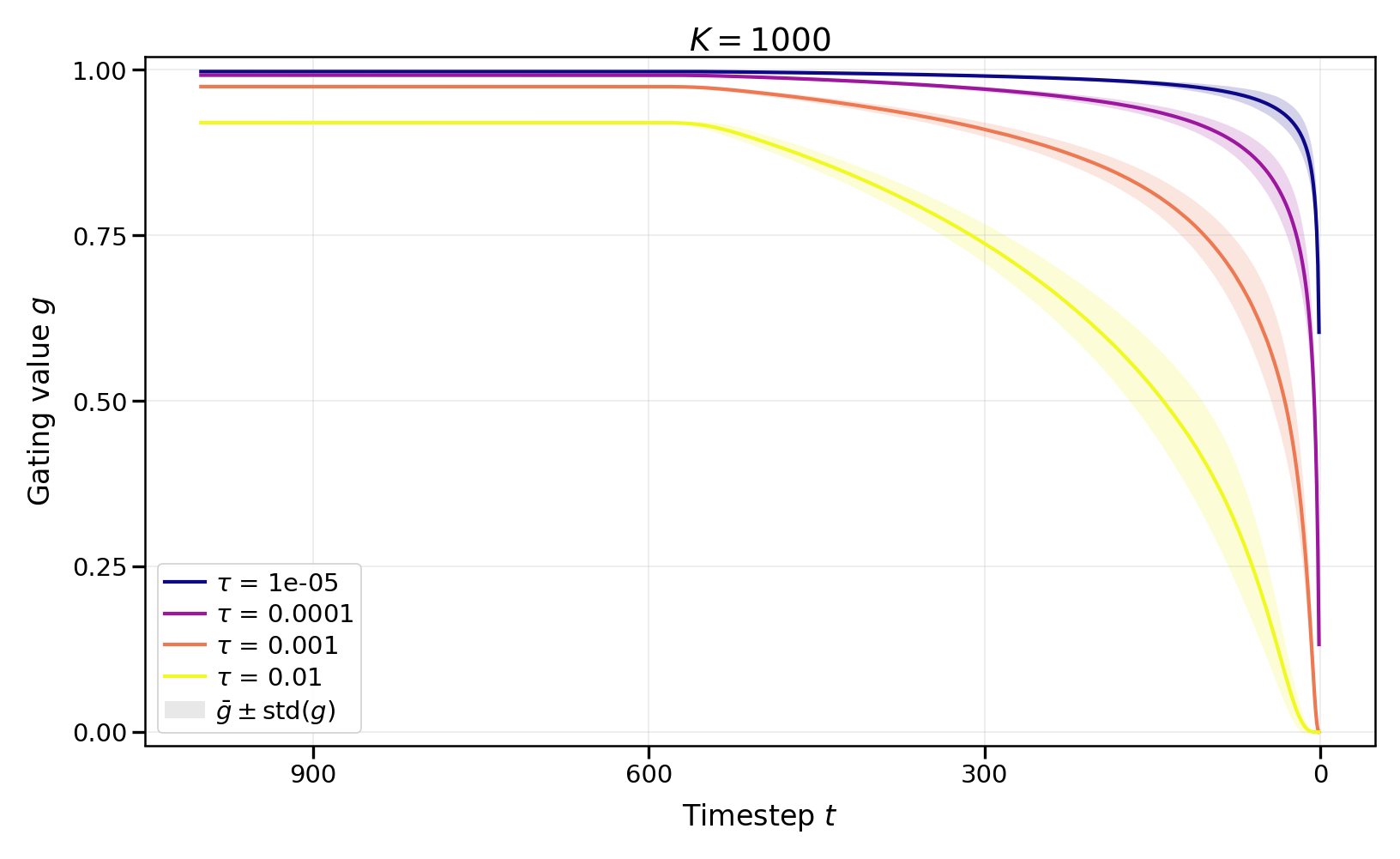}
        \caption{CelebA-64}
        \label{fig:gating_dyn_celeba}
    \end{subfigure}
    \caption{Mean gating value $\bar{g} \pm \mathrm{std}(g)$ over the
    reverse process ($K = 1000$) for four tolerance values.
    All datasets exhibit the same qualitative pattern: monotonic decay
    from~$1$ to~$0$, with larger~$\tau$ producing earlier transition.
    The standard deviation bands peak at intermediate timesteps where
    spatial heterogeneity is maximal. LSUN Bedroom shows the widest
    bands, reflecting the greater spatial complexity at
    $256 \times 256$ resolution.}
    \label{fig:app_gating_dynamics_tau}
\end{figure}
% The per-step statistics $(\bar g_t, \sigma_{g,t})$ summarize the spatial distribution of $g_{t,i}$ by its first two moments. \ref{fig:app_gating_histograms} shows the full distribution at five timesteps along the trajectory, for each of the four tolerances, on CIFAR-10 (CS). The pattern that the moments suggest is borne out exactly: at $t=1000$ the distribution is a near-point mass (all pixels uniformly uncertain under pure noise); at intermediate timesteps the support spreads broadly over $[0,1]$ (different regions resolve at different rates --- this is the regime where SANI departs most from a uniform sampler); at the final steps the mass collapses toward $g=0$ with a residual tail at the hardest pixels (edges and textures). Increasing $\tau$ by a decade translates this three-phase progression earlier along the trajectory without distorting its shape, consistent with the $\tau/v_i$ dependence of~\eqref{eq:gating_closed}. Histograms for the remaining three datasets exhibit the same qualitative behavior and are omitted for brevity.
% ---------------------------------------------------------------------------
\subsection{Residual-Distribution Assumption}
\label{app:residual_p
assumption}
% ---------------------------------------------------------------------------
\textbf{Protocol.}
At each timestep $t\in\{800,200,100,25\}$ we draw $500$ test images $\mathbf{x}_0\sim q(\mathbf{x}_0)$, and sample $\mathbf{x}_t\sim q(\mathbf{x}_t\mid\mathbf{x}_0)$. The Tweedie estimate $\hat{\mathbf{x}}_0$  and per-pixel variance~$v_i$ are computed, and standardized residuals $Z_i=(x_{0,i}-\hat{x}_{0,i})/\sqrt{v_i}$ are formed over all pixels and samples. Under Assumption~2 $Z_i\sim\mathcal{N}(0,1)$. We compare the empirical distribution of $Z_i$ to two unit-variance references: Gaussian and Laplace ($b=1/\sqrt{2}$), each moment-matched to zero mean and unit variance. 

Figure~\ref{fig:app_residual} compares the empirical density of $Z_i$ to the parametric references across multiple datasets. No single parametric family provides a consistent fit throughout the entire trajectory. At intermediate noise levels, the Laplace reference aligns most closely with the central mode. At high noise, the empirical peak surpasses the Laplace, indicating super-Laplacian concentration, while at low noise, the distribution bulk is closest to the Gaussian. Consequently, the Kolmogorov-Smirnov distance between the standardized residuals and the standard normal decreases monotonically as $t$ decreases.

%The empirical density is more sharply peaked than the Gaussian, with probability mass redistributed from the tails into the central peak. This is consistent with the heterogeneous structure of natural images.

\begin{figure}[h]
  \centering
  \begin{subfigure}[b]{0.49\linewidth}\centering
    \includegraphics[width=0.9\linewidth]{figs/cifar10_cs/residual_diagnostics_multidist_histograms_ks.png}
    \caption{CIFAR10 (CS)}
  \end{subfigure}\hfill
  \begin{subfigure}[b]{0.49\linewidth}\centering
    \includegraphics[width=0.9\linewidth]{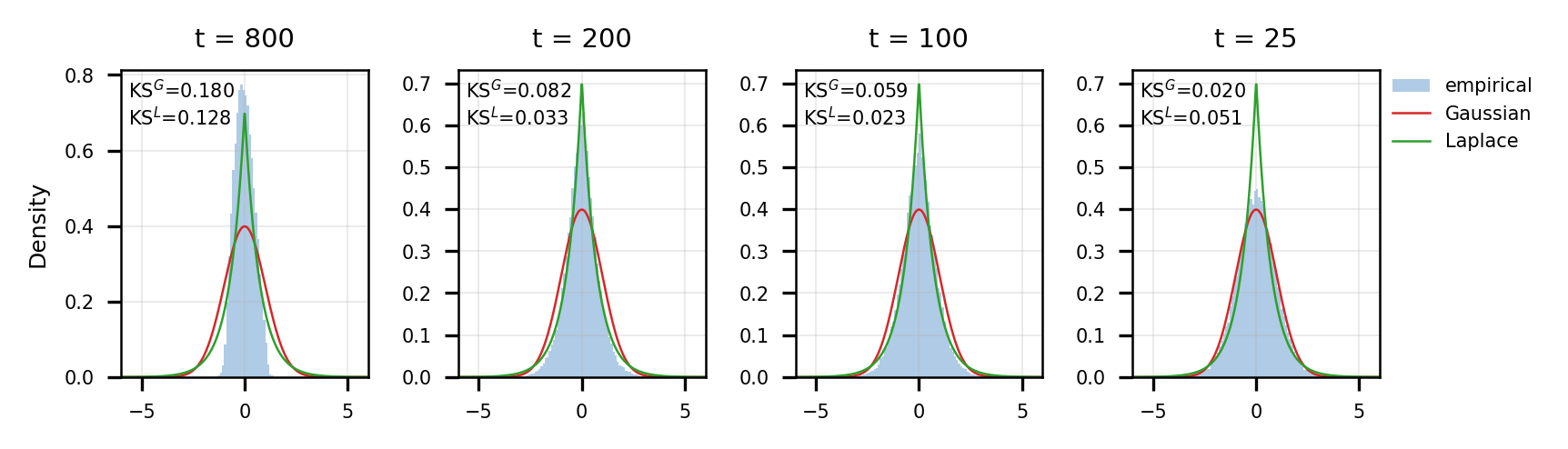}
        \caption{CIFAR10 (LS)}
  \end{subfigure}\hfill
  
  \begin{subfigure}[b]{0.49\linewidth}\centering
    \includegraphics[width=0.9\linewidth]{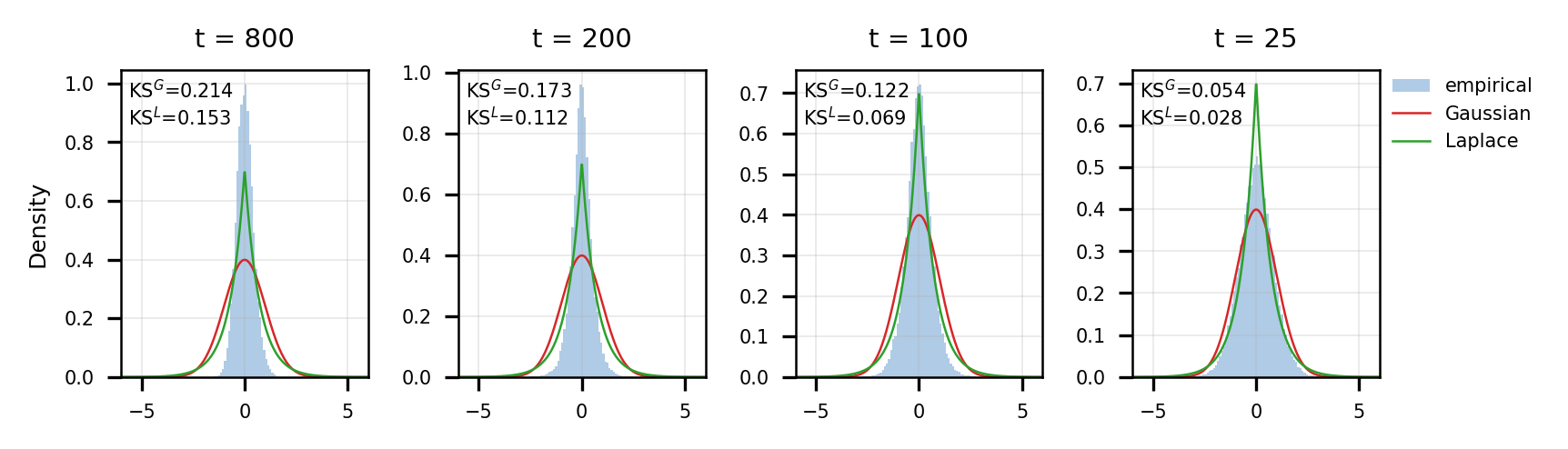}
        \caption{CelebA-$64$}
  \end{subfigure}\hfill
  \begin{subfigure}[b]{0.49\linewidth}\centering
    \includegraphics[width=0.9\linewidth]{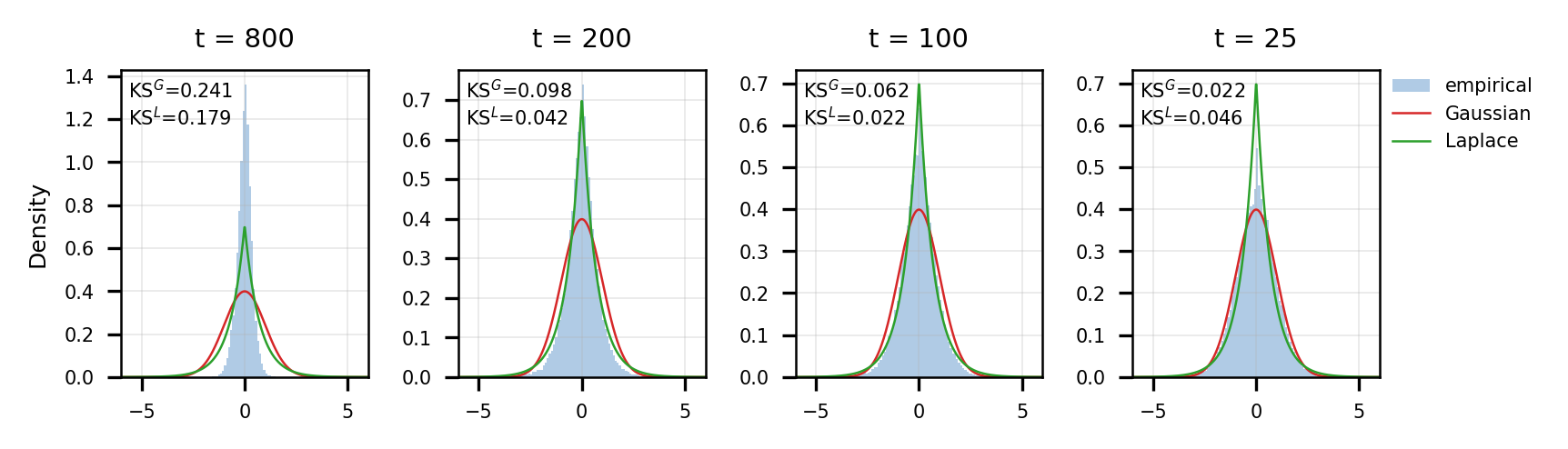}
    \caption{LSUN Bedroom }
  \end{subfigure}
  \caption{Empirical standardized residuals $Z_i = R_i/\sqrt{v_i}$ against Gaussian and Laplace residual distributions (all standardised to unit variance) on various datasets. Each subplot visualize  density histograms at four reverse-process timesteps.}
  \label{fig:app_residual}
\end{figure}

% ===========================================================================
% ===========================================================================
%  CORRESPONDING APPENDICES FOR SECTION 5.5
%  - app:nll        : Likelihood evaluation (bound + clamping + table)
% ===========================================================================

% ---------------------------------------------------------------------------
\section{Likelihood Evaluation}
\label{app:nll}
% ---------------------------------------------------------------------------
 
This appendix provides a detailed description of the likelihood computation referenced in~\Cref{sec:exp_quantitative}. We report variational upper bounds on $-\log p_\theta(\mathbf{x}_0)$ in bits per dimension (BPD) and evaluate  along the same timestep sub-sequence used during sampling. This ensures that both likelihood and sample quality assessed under consistent schedules.
 
Following the discrete-time decomposition of~\cite{ho20,nichol2021improved}:
\begin{align}\label{eq:app_nll_bound}
  \mathcal{L}
  \;=\;
  &\underbrace{\mathrm{KL}\!\bigl(q(\mathbf{x}_T\mid\mathbf{x}_0)\,\|\,\mathcal{N}(\mathbf{0},\mathbf{I})\bigr)}_{\text{prior term}}
  \;+\;\sum_{(s,t)}\mathrm{KL}\!\bigl(q(\mathbf{x}_s\mid\mathbf{x}_t,\mathbf{x}_0)\,\|\,p_\theta^{\mathrm{SANI}}(\mathbf{x}_s\mid\mathbf{x}_t)\bigr) \notag\\
  &-\;\mathbb{E}_q\bigl[\log p_\theta^{\mathrm{SANI}}(\mathbf{x}_0\mid\mathbf{x}_1)\bigr].
\end{align}
Here, the summation is taken over adjacent pairs $(s,t)$ in the sampling sub-sequence. The true reverse posterior $q(\mathbf{x}_s\mid\mathbf{x}_t,\mathbf{x}_0)$ depends solely on the forward process and remains unchanged from the standard DDPM~\citep{ho20}. It is a Gaussian distribution with mean $\tilde\mu_{t,i}(\mathbf{x}_t,x_{0,i})$ and variance $\tilde\beta_t$.
 
The model transition $p_\theta^{\mathrm{SANI}}(\mathbf{x}_s\mid\mathbf{x}_t)$ utilizes the SANI mean and per-pixel variance as defined in~\Cref{prop:update}:
\[
  \mu_{p,i}^{\mathrm{SANI}}=\sqrt{\bar\alpha_{t-1}}\,\hat{x}_{0,i}+\sqrt{1-\bar\alpha_{t-1}-\Sigma_{t,i}^2}\,\epsilon_{\theta,i},
  \qquad
  (\sigma_{p,i}^{\mathrm{SANI}})^2=\Sigma_{t,i}^2=c_t^2\,v_i+g_{t,i}\,\tilde\beta_t.
\]
Each KL term in Eq.~\ref{eq:app_nll_bound} thus decomposes coordinate-wise between two diagonal Gaussians. The terminal term at $t=1$ corresponds to the log-likelihood of $\mathbf{x}_0$ under a discretized Gaussian with the standard fixed variance $\tilde\beta_1$, consistent with~\cite{ho20}.
 
\paragraph{Clamping for Near-Deterministic Pixels}
 
For pixels routed to deterministic dynamics by the gate ($g_{t,i}\approx 0$) and exhibiting low per-pixel posterior variance ($v_i\approx 0$), the model variance $\Sigma_{t,i}^2$ collapses, whereas the true posterior maintains strictly positive variance $\tilde\beta_t$. This results in an unbounded KL divergence, a degeneracy also observed in DDIM at $\eta=0$, where the ELBO is similarly undefined.
 
To obtain a finite bound, we clamp the model variance per pixel and per step:
\begin{equation}\label{eq:app_nll_clamp}
  (\sigma_{p,i}^{\mathrm{SANI}})^2\;\gets\;\max\bigl(\Sigma_{t,i}^2,\,\tilde\beta_t\bigr).
\end{equation}
This approach leaves stochastic pixels unaffected ($\Sigma_{t,i}^2\geq\tilde\beta_t$ whenever $g_{t,i}$ or $\tilde\beta_t$ alone reaches $\tilde\beta_t$, since $c_t^2 v_i\geq 0$ is always non-negative) and reduces near-deterministic pixels to the standard DDPM small-variance reverse process. In this case, the contribution to Eq.~\ref{eq:app_nll_bound} matches exactly with the DDPM ELBO term at that pixel. The clamping affects only the likelihood computation, while sampling continuous to use the unclamped $\Sigma_{t,i}^2$.
 
The bound is averaged over the test set and converted to bits per dimension by dividing by $D\log 2$.
 
\paragraph{\textbf{Results}.} \Cref{tab:app_nll} presents NLL results on CIFAR-10 (LS), CIFAR-10 (CS), and CelebA-64 for both Gaussian and laplace residual assumption. Together with the FID results (\Cref{tab:fid_all}), the NLL evaluation demonstrates that spatial gating does not compromise likelihood in favor of sample quality. Across all steps counts and dataset, SANI remains competitive with, or close to, the variance-learning baselines, while also providing an interpretable per-pixel uncertainty diagnostic as shown in~\cref{fig:gating_vs_edges_lsun}.
%
%SANI is within $0.07$ BPD of the strongest variance-learning baseline on CIFAR-10 at all $K\leq 100$, and within $0.02$ BPD on CelebA-64 at $K\leq 50$. The gap widens to $0.16$ BPD at $K=100$ on CelebA-64 ($\mathrm{SANI}^G$: $3.15$, OCM-DDPM: $3.13$), consistent with the FID pattern of~\ref{sec:exp_fid}: a single tolerance~$\tau$ becomes suboptimal at high step counts, over-suppressing stochasticity that would tighten the bound. Importantly, the gating clamp does not artificially favor SANI: it pads near-deterministic pixels with the DDPM term rather than removing them, which conservatively over-estimates the bound.

\begin{table}[h]
\centering
\caption{Negative log-likelihood ($\downarrow$, bits per dimension) across datasets and sampling steps. \textbf{Bold}: best overall. \underline{Underline}: second-best. SANI is competitive with the variance-learning baselines at every step count.}
\label{tab:app_nll}
\tiny
\setlength{\tabcolsep}{2.5pt}
\begin{tabular}{lrrrrr|rrrrr|rrrrr}
\toprule
& \multicolumn{5}{c}{\textbf{CIFAR-10 (LS)}} & \multicolumn{5}{c}{\textbf{CIFAR-10 (CS)}} & \multicolumn{5}{c}{\textbf{CelebA $64\times 64$}} \\
\cmidrule(lr){2-6} \cmidrule(lr){7-11} \cmidrule(lr){12-16}
 \textbf{Method} & 10 & 25 & 50 & 100 & 200 & 10 & 25 & 50 & 100 & 200 & 10 & 25 & 50 & 100 & 200 \\
\midrule
DDPM, $\tilde{\beta}$  & 74.95 & 24.98 & 12.01 & 7.08 & 5.03 & 75.96 & 24.94 & 11.96 & 7.04 & 4.95 & 33.42 & 13.09 & 7.14 & 4.60 & 3.45 \\
DDPM, $\beta$       & 6.99 & 6.11 & 5.44 & 4.86 & 4.39 & 6.51 & 5.55 & 4.92 & 4.41 & 4.03 & 6.67 & 5.72 & 4.98 & 4.31 & 3.74 \\
A-DDPM              & 5.47 & 4.79 & 4.38 & 4.07 & 3.84 & 5.08 & 4.45 & 4.09 & 3.83 & 3.64 & \underline{4.54} & 3.89 & 3.48 & 3.16 & 2.92 \\
NPR-DDPM            & 5.40 & \underline{4.64} & \textbf{4.25} & \underline{3.98} & 3.79 & 5.03 & 4.33 & 3.99 & 3.76 & 3.59 & \textbf{4.46} & \textbf{3.78} & \textbf{3.40} & 3.11 & \underline{2.89} \\
SN-DDPM            & 30.79 & 11.83 & 7.13 & 5.24 & 4.39 & 90.85 & 19.81 & 9.72 & 6.72 & 5.58 & 18.09 & 8.05 & 5.29 & 4.05 & 3.40 \\
OCM-DDPM            & \underline{5.32} & \textbf{4.63} & \textbf{4.25} & \textbf{3.97} & \textbf{3.78} & \underline{4.99} & \underline{4.34} & \underline{3.99} & \underline{3.76} & \underline{3.59} & 4.69 & \underline{3.86} & \underline{3.43} & \textbf{3.13} & \textbf{2.90} \\
$\mathrm{SANI}^G$   & \textbf{5.31} & 4.67 & \underline{4.31} & 4.04 & \underline{3.85} & \textbf{4.97} & \textbf{4.39} & \textbf{4.07} & \textbf{3.83} & \textbf{3.66} & 4.67 & 3.88 & 3.45 & \underline{3.15} & 2.95 \\
$\mathrm{SANI}^L$   & 5.34 & 4.68 & 4.34 & 4.06 & 3.87 & 5.04 & 4.40 & 4.08 & 3.82 & 3.68 & 4.71 & 3.87 & 3.48 & \underline{3.15} & 2.97 \\
\bottomrule
\end{tabular}
\end{table}

% ===========================================================================
% ------------------------------------------------------------
\section{Generated Samples}
\label{app:extended_samples}
% ------------------------------------------------------------
Figures~\ref{fig:celebA}-\ref{fig:cifar10_cs} display generated samples produced by SANI with varying numbers of sampling steps $K$.

\begin{figure}[!t]
  \centering
  % --- First Subfigure ---
    \begin{subfigure}[b]{0.27\textwidth}
    \centering
    \includegraphics[width=\linewidth]{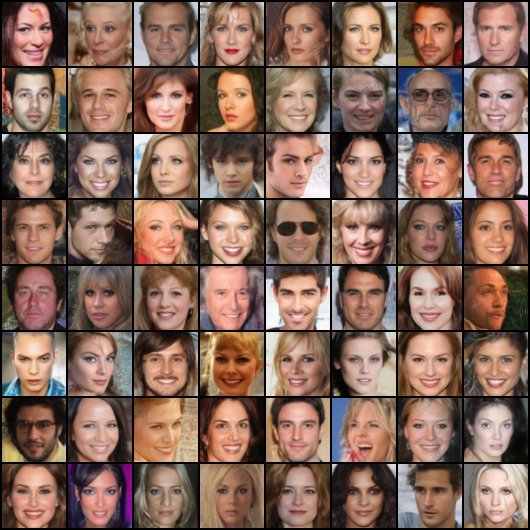}
    \caption{$K=10$}
    %\label{fig:gamma_optimal_gated_v2_coupled}
  \end{subfigure}
    \begin{subfigure}[b]{0.27\textwidth}
    \centering
    \includegraphics[width=\linewidth]{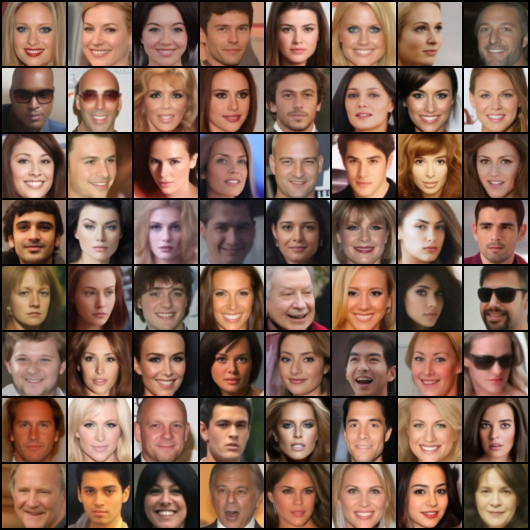}
    \caption{$K=25$}
    %\label{fig:gamma_optimal_gated_v2_coupled}
  \end{subfigure}
  \begin{subfigure}[b]{0.27\textwidth}
    \centering
    \includegraphics[width=\linewidth]{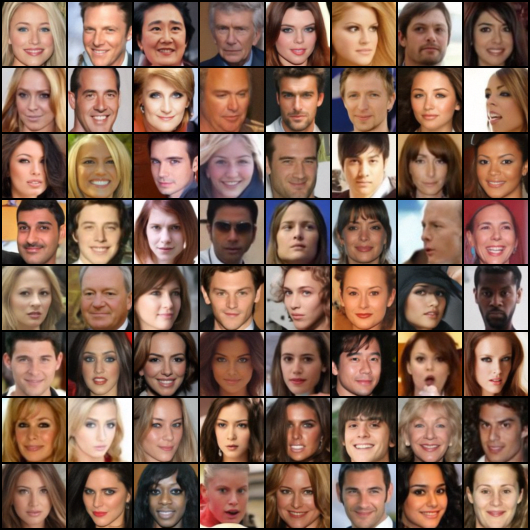}
    \caption{$K=50$}
    %\label{fig:gamma_optimal}
  \end{subfigure}
  %\hfill 
  
  % --- Third Subfigure ---
  \begin{subfigure}[b]{0.27\textwidth}
    \centering
    \includegraphics[width=\linewidth]{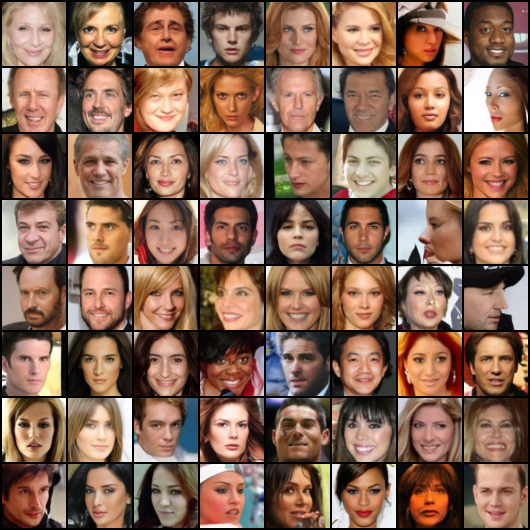}
    \caption{$K=100$}
    %\label{fig:gamma_small}
  \end{subfigure}
    %\hfill 
    % --- Third Subfigure ---
  \begin{subfigure}[b]{0.27\textwidth}
    \centering
    \includegraphics[width=\linewidth]{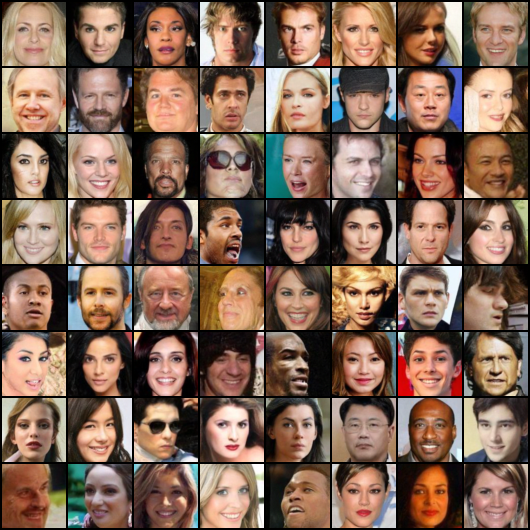}
    \caption{$K=200$}
    %\label{fig:gamma_small}
  \end{subfigure}
    %\hfill 
    % --- Third Subfigure ---
  \begin{subfigure}[b]{0.27\textwidth}
    \centering
    \includegraphics[width=\linewidth]{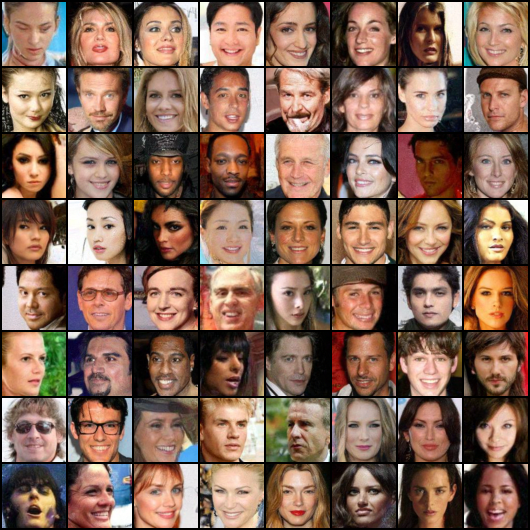}
    \caption{$K=1000$}
    %\label{fig:gamma_small}
  \end{subfigure}
    %\hfill 
  \caption{CelebA ($64\times64$). Generated samples using SANI using varying number of timesteps.}%[Comparison of generated samples on CelebA ($64\times64$) using different Gamma schedules. From left to right: Big, Optimal Gated, and Small.}
  \label{fig:celebA}
\end{figure}

%--------------------------------------------------------------------
\begin{figure}[htbp]
  \centering
  % --- First Subfigure ---
  \begin{subfigure}[b]{0.19\textwidth}
    \centering
    \includegraphics[width=\linewidth]{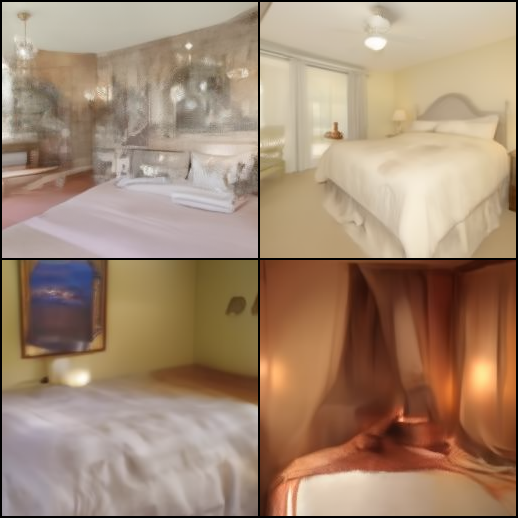}
    \caption{$K=25$}
    %\label{fig:gamma_big}
  \end{subfigure}
  \hfill 
  % --- Second Subfigure ---
  \begin{subfigure}[b]{0.19\textwidth}
    \centering
    \includegraphics[width=\linewidth]{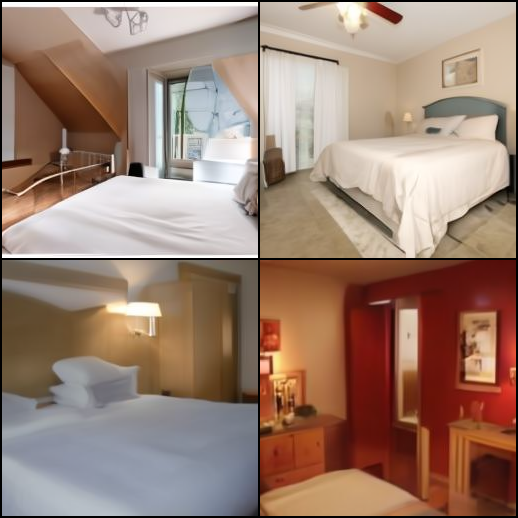}
    \caption{$K=50$}
    %\label{fig:gamma_optimal}
  \end{subfigure}
  \hfill
  % --- Third Subfigure ---
  \begin{subfigure}[b]{0.19\textwidth}
    \centering
    \includegraphics[width=\linewidth]{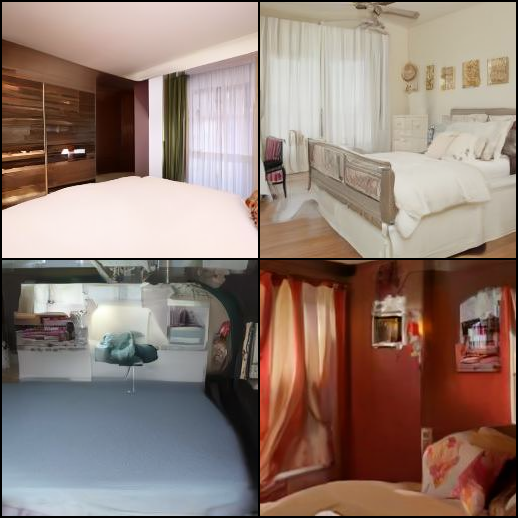}
    \caption{$K=100$}
    %\label{fig:gamma_small}
  \end{subfigure}
    % --- Third Subfigure ---
  \begin{subfigure}[b]{0.19\textwidth}
    \centering
    \includegraphics[width=\linewidth]{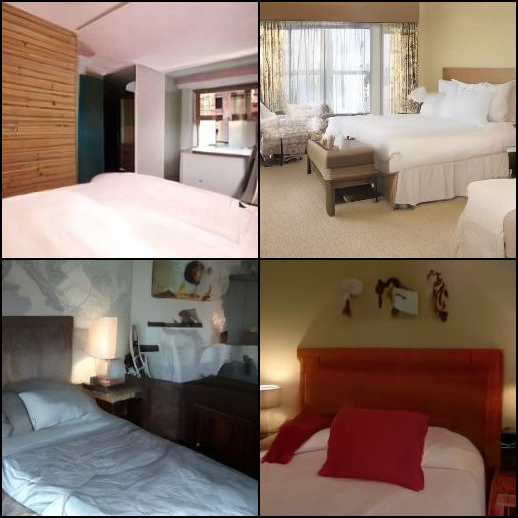}
    \caption{$K=200$}
    %\label{fig:gamma_small}
  \end{subfigure}
    % --- Third Subfigure ---
  \begin{subfigure}[b]{0.19\textwidth}
    \centering
    \includegraphics[width=\linewidth]{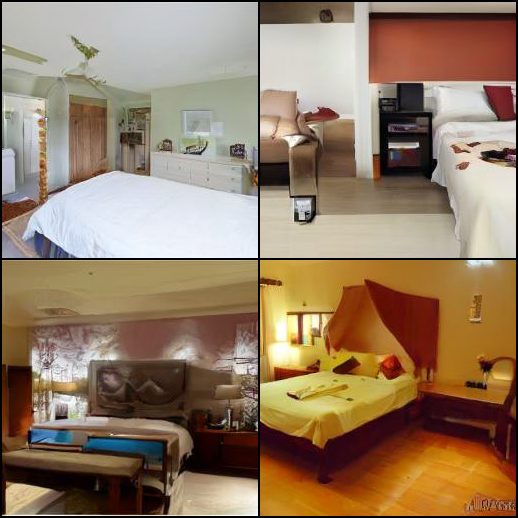}
    \caption{$K=1000$}
    %\label{fig:gamma_small}
  \end{subfigure}
  \caption{LSUN Bedroom. Generated samples using SANI using varying number of timesteps}
  \label{fig:lsun}
\end{figure}
%--------------------------------------------------------------------
\begin{figure}[!t]
  \centering
  % --- First Subfigure ---
    \begin{subfigure}[b]{0.27\textwidth}
    \centering
   \includegraphics[width=\linewidth]{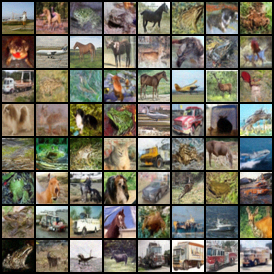}
    \caption{$K=10$}
    %\label{fig:gamma_optimal_gated_v2_coupled}
  \end{subfigure}
  %%\hfill  
  \begin{subfigure}[b]{0.27\textwidth}
    \centering
   \includegraphics[width=\linewidth]{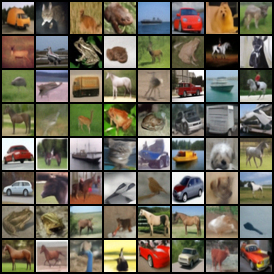}
    \caption{$K=25$}
    %\label{fig:gamma_optimal_gated_v2_coupled}
  \end{subfigure}
  %%\hfill  
  % --- Second Subfigure ---
  \begin{subfigure}[b]{0.27\textwidth}
    \centering
    \includegraphics[width=\linewidth]{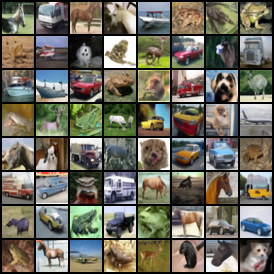}
    \caption{$K=50$}
    %\label{fig:gamma_optimal}
  \end{subfigure}
  
  %\hfill 
  % --- Third Subfigure ---
  \begin{subfigure}[b]{0.27\textwidth}
    \centering
    \includegraphics[width=\linewidth]{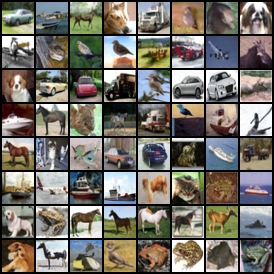}
    \caption{$K=100$}
    %\label{fig:gamma_small}
  \end{subfigure}
    % --- Third Subfigure ---
  \begin{subfigure}[b]{0.27\textwidth}
    \centering
    \includegraphics[width=\linewidth]{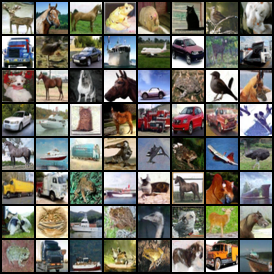}
    \caption{$K=200$}
    %\label{fig:gamma_small}
  \end{subfigure}
    % --- Third Subfigure ---
  \begin{subfigure}[b]{0.27\textwidth}
    \centering
    \includegraphics[width=\linewidth]{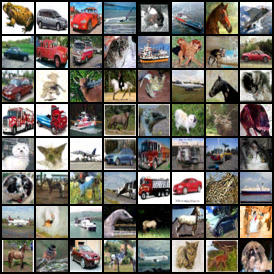}
    \caption{$K=1000$}
    %\label{fig:gamma_small}
  \end{subfigure}
  \caption{Cifar10 (LS). Generated samples using SANI using varying number of timesteps.}
  \label{fig:cifar10_ls}
\end{figure}
%--------------------------------------------------------------------
\begin{figure}[!t]
  \centering
  % --- First Subfigure ---
    \begin{subfigure}[b]{0.27\textwidth}
    \centering
    \includegraphics[width=\linewidth]{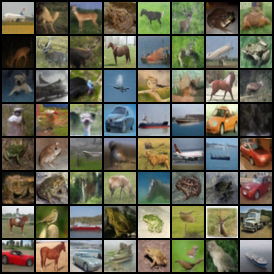}
    \caption{$K=10$}
    %\label{fig:gamma_optimal_gated_v2_coupled}
  \end{subfigure}
  \begin{subfigure}[b]{0.27\textwidth}
    \centering
    \includegraphics[width=\linewidth]{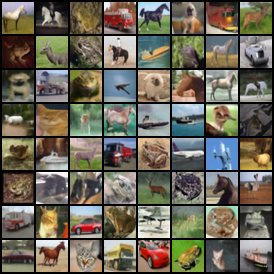}
    \caption{$K=25$}
    %\label{fig:gamma_optimal_gated_v2_coupled}
  \end{subfigure}
  %%\hfill  
  % --- Second Subfigure ---
  \begin{subfigure}[b]{0.27\textwidth}
    \centering
    \includegraphics[width=\linewidth]{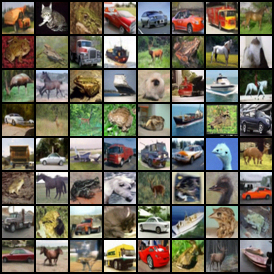}
    \caption{$K=50$}
    %\label{fig:gamma_optimal}
  \end{subfigure}
  
  %\hfill 
  % --- Third Subfigure ---
  \begin{subfigure}[b]{0.27\textwidth}
    \centering
    \includegraphics[width=\linewidth]{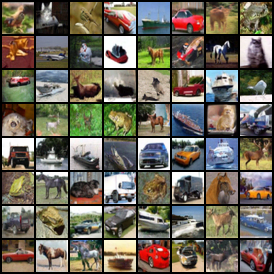}
    \caption{$K=100$}
    %\label{fig:gamma_small}
  \end{subfigure}
    % --- Third Subfigure ---
  \begin{subfigure}[b]{0.27\textwidth}
    \centering
    \includegraphics[width=\linewidth]{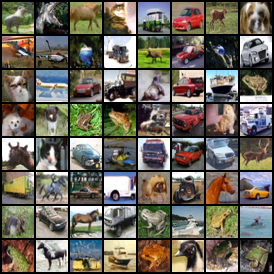}
    \caption{$K=200$}
    %\label{fig:gamma_small}
  \end{subfigure}
    % --- Third Subfigure ---
  \begin{subfigure}[b]{0.27\textwidth}
    \centering
    \includegraphics[width=\linewidth]{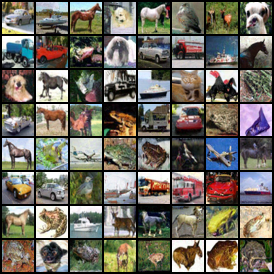}
    \caption{$K=1000$}
    %\label{fig:gamma_small}
  \end{subfigure}
  \caption{Cifar10 (CS). Generated samples using SANI using varying number of timesteps.}
  \label{fig:cifar10_cs}
\end{figure}
% 

% ===========================================================================
\end{document}